\documentclass{article}
\usepackage{iclr2027_conference,times}
\usepackage[utf8]{inputenc}
\usepackage{amsmath,amssymb,amsthm}
\usepackage{booktabs}
\usepackage{longtable}
\usepackage{array}
\usepackage{multirow}
\usepackage{graphicx}
\usepackage{tikz}
\usetikzlibrary{arrows.meta,positioning,calc,fit,backgrounds}
\usepackage{url}

\newtheorem{theorem}{Theorem}
\newtheorem{proposition}{Proposition}
\newtheorem{corollary}{Corollary}
\newtheorem{definition}{Definition}
\newtheorem{lemma}{Lemma}
\newtheorem{remark}{Remark}
\newcommand{\Fcal}{\mathcal{F}}
\newcommand{\ind}[1]{\mathbb{I}\{#1\}} 
\newcommand{\csub}{c_{\mathrm{sub}}}

\title{Nociception as a Control Primitive:\\
Afferent Channels and Nociceptive Memory for Agents Deployed in One Body}

\author{Wolfgang Maass\\
Saarland University and\\
German Research Center for Artificial Intelligence (DFKI)}

\newcommand{\fehlendeabbildung}[1]{%
  \fbox{\parbox{0.95\textwidth}{\centering\vspace{6pt}\textbf{FIGURE FILE MISSING: \texttt{#1}}\\
  \footnotesize this build does not contain the figure; the file is not in the
  repository checkout\vspace{6pt}}}}

\iclrfinalcopy
\begin{document}

\maketitle
\lhead{Preprint}

\begin{abstract}
An agent deployed in a single body cannot learn how fast that body wears, because every trial that would reveal its wear resistance wears the body it would protect. We study this \emph{epoch-one} setting, in which the parameters of a fixed-weight policy are set before the body is drawn and never updated in life. The agent carries a load-gated nociceptive channel and a memory that retains what was felt. We prove that felt cost moves the allocation to the best-\emph{paid} work not yet felt rather than the gentlest, that an agent without retention never sees the felt-cost constraint bind, and that the channel pays only where the threat is individually unpredictable, cheap to avoid and expensive to ignore. We measure per body, setting the agent with channel and memory against the same individual without them, where neither carries a schedule learned across lives. On $2{,}000$ simulated floor-layer knees, with wear anchored to published loss rates, feeling, retaining and substituting extends the working life from age $55.2$ to $59.6$ and raises career output from $33.7$ to $36.1$. $69.3\%$ of bodies gain and \textbf{none lose}. A body that feels but retains nothing past the day gains one of the $+4.4$ years, and retention carries the rest. A population-trained agent gains $+0.65$ years from the same channel at $-0.54$ output. The difference is what a species prior already supplies, and a single body has none. The two are related by an identity, the ablation mean reporting $(1-\chi)$ of the per-body value with $\chi$ the share a blind schedule already captures, so we report both. Where the regime map predicts value, a care robot sextuples its certified service life and a field-anchored fleet writes off $0.15$ of its machines instead of $0.55$. Where it predicts none, a rover gains little over blind caution, so the map holds in both directions.
\end{abstract}

\section{Introduction}
\label{sec:intro}
Gradient-based optimization needs its objective sampled many times \citep{schulman2017ppo}, whereas an individual has one life. The events that dominate a lifetime objective occur less than once within it \citep{lipton2016fear}, and the trials that would identify its own wear resistance are the ones it cannot afford, because each wears the body it would protect, an irreversibility that safe exploration assumes known or learnable \citep{garcia2015safe,turchetta2016safe,grinsztajn2021reversibility,turner2020conservative,khetarpal2022continual}. Individuals are nevertheless competent, and this paper asks what an agent must carry into one life for that to be possible and measures what the answer is worth to the body that carries it.

Two timescales have to be kept apart. Optimization belongs to the slow loop, phylogeny in biology and population training in our experiments, and adaptation to the fast loop, one deployment life. A policy trained over a distribution of bodies and environments \citep{tobin2017domain,duan2016rl2,rakelly2019pearl} installs a prior over bodies and never observes the draw that is realized. In the fast loop, adaptation cannot be optimization, because with fewer than one informative event per life the hidden wear resistance is not recoverable from safe experience (Prop.~\ref{prop:estimation}), so the premise of Bayes-adaptive control, a latent observed often enough to estimate \citep{duff2002,ghavamzadeh2015brl,kumar2021rma}, fails by construction. What remains is memorization, conditioning on what has been felt rather than on an estimate. We call this regime \textbf{epoch one}: one deployment life, parameters fixed before the embodiment is known, no in-life updates, the setting of single-life RL \citep{chen2022singlelife} and of in-context adaptation with fixed weights \citep{laskin2022ad,lee2023dpt,bauer2023ada}.

We treat nociception as a control primitive, as homeostatic RL and artificial pain do \citep{keramati2014homeostatic,man2019homeostasis,kuehn2017pain}, but read differently. It is a coarse readout of tissue state that is \emph{load-gated}. The signal rises with the activity that produces it, so it identifies its own cause and affords \emph{substitution} within the repertoire a trade or a mission permits, rather than reduced effort. Because what is felt can be retained, as a conditioned aversion is \citep{bouton2004extinction,hasenbring2010fear}, one event can redirect the allocation for the rest of the life. What is under test is therefore a loop closed through the body. Pain moves the allocation, the allocation sets the load, the load sets both how fast the tissue wears and how strong the next signal is, and a signal that fades with the work that caused it leads the body to resume that work. The four links, \textbf{pain, memory, substitution, and a longer working life}, are traversed many times in one life, and memory is what keeps the loop from cycling. Pain without retention moves a body to the best-\emph{paid} work it can still do and returns it there whenever the signal dips, the same optimism that drives exploration for rewards \citep{auer2002ucb,brafman2002rmax} acting on a cost it has not yet felt. What is retained moves it to the gentlest and holds it there (Lemma~\ref{lem:escape}, and \S\ref{sec:individual} counts the returns). The loop is conditional at every link and operates only where the repertoire admits a gentler allocation that still clears the output floor (Thm.~\ref{thm:conjunction}, Prop.~\ref{prop:regime}).

The clinical counterpart is Charcot arthropathy, in which an insensate patient reloads an injured joint where one with intact sensation protects it for decades, on the same species prior \citep{nagasako2003cip,rogers2011charcot,woolf2010pain}. We measure per agent, setting the apparatus against the same individual without it, where neither arm carries a schedule learned across lives. On a floor layer's knee, working at full effort reaches age $55.2$ and $33.7$ in career output. Sparing from the first day reaches age $61.5$ and output $34.4$. Feeling, retaining and substituting reaches age $59.6$ and output $36.1$ (Table~\ref{tab:leiter}). Over $2{,}000$ paired bodies $69.3\%$ gain years and \textbf{none lose any}. The body that feels but retains nothing past the day gains one year, and retention carries the rest. A care robot, retired when its certification lapses, serves $323$ rather than $51$ months when it reads the signal, and there retention subtracts $34$ of them (Prop.~\ref{app:prior}).

What must a fixed-weight agent carry through one life in a body that wears, and what is that worth to the body? Three results answer this. \emph{Content of the state.} Under an optimistic prior a felt-cost constraint sends the agent to the best-paid untried option, and without retention the constraint never binds, since the signal exists only while the harmful work is done (Lemma~\ref{lem:escape}, Cor.~\ref{cor:loadgate}). Retention is therefore the precondition of sensing, and the state stores a threat, kept after the activity is set down. \emph{Regime.} The channel pays only where the threat is individually unpredictable, cheap to avoid and expensive to ignore, and there the optimum is protective if and only if sensing, persistence and tail-sensitivity hold jointly (Prop.~\ref{prop:regime}, Thm.~\ref{thm:conjunction}). All three conditions can be checked before a world is run, and a world built after the prediction confirmed it. \emph{Measurement.} The value lies on the bodies at risk, so a population mean reports $q$ times what an at-risk body experiences, $q = 0.693$ here (Rem.~\ref{thm:insurance}), and an ablation of a population-trained agent reports $(1-\chi)$ of the per-body value, $\chi$ the share a blind schedule already captures (Prop.~\ref{prop:zerlegung}). $\chi$ is a property of the world, since in a minimal MDP with no body and no learner the channel is worth the same to every individual while the ablation contrast moves by a factor of twelve, so we measure per body, against the same body without in-life state. The paper thus names a setting, shows that in it retention is a precondition of sensing rather than a refinement, gives conditions for when a body model pays that are decidable before the world exists, and supplies a measurement identity that applies beyond its subject.

\section{Related work}
\label{sec:related}
Four bodies of work touch the setting, and each rests on an assumption that epoch one removes. \emph{Adaptation within a life.} Episodic control stores returns of visited states \citep{blundell2016mfec,pritzel2017nec}, meta-RL and rapid motor adaptation infer a latent task or embodiment from repeated episodes or recent history \citep{duan2016rl2,rakelly2019pearl,kumar2021rma}, single-life RL adapts by transfer from the same task distribution \citep{chen2022singlelife}, and in-context adaptation with fixed weights adapts through state that grows within the episode \citep{laskin2022ad,lee2023dpt,bauer2023ada}. All of them assume that the latent which matters can be estimated from the experience the life affords, or that similar lives exist to transfer from. Here the latent is a wear resistance that safe experience cannot identify (Prop.~\ref{prop:estimation}), so the within-life state cannot be an estimate. We keep the policy class of the last line, fixed weights and a growing state, and change what the state stores to a felt cost per activity, retained after the activity is set down.

\emph{Safety and constraints.} Risk-sensitive and constrained RL change the objective \citep{tamar2015cvar,chow2015cvar,altman1999constrained,achiam2017cpo,garcia2015safe}, safe exploration learns an unknown constraint by approaching it, pairing optimism on reward with pessimism on untried cost \citep{turchetta2016safe,wachi2020snomdp,efroni2020cmdp,liu2021zero,bura2022dope}, conservative bandits hold a baseline while they explore \citep{wu2016conservative,kazerouni2017conservative}, and shields or barrier functions enforce a constraint given in advance \citep{alshiekh2018shielding,ames2019cbf}. Our constraint is neither given nor modelled but felt, and felt only while the harmful work is done. That is why the optimism these methods rely on is the hazard here, since to a body that retains nothing an untried option looks free (Lemma~\ref{lem:escape}). Pessimism on untried cost is the fixed-endowment row of Table~\ref{tab:leiter}, and the risk-sensitive objective runs as a CVaR arm in the fleet (App.~\ref{app:fleet}).

\emph{Engineering under degradation.} Condition-based maintenance estimates remaining useful life from monitored condition \citep{jardine2006cbm,si2011rul,saxena2008cmapss}, and that estimate drives inspection and repair as a POMDP \citep{andriotis2019drl} or service as a restless bandit \citep{whittle1988restless}. The degradation model behind such an estimate is fitted on a population of units, which is our slow loop, and the population-trained blind arm of Table~\ref{tab:leiter} is a schedule of this kind. Our question is what one unit gains beyond it when its wear rate is drawn per unit and never observed (Prop.~\ref{prop:zerlegung}).

\emph{Signals from the body.} Homeostatic and interoceptive RL regulate toward a physiological setpoint \citep{keramati2014homeostatic,man2019homeostasis,craig2002interoception}, artificial pain supplies a nociceptive channel to a robot \citep{kuehn2017pain}, intrinsic fear penalizes states that precede catastrophe \citep{lipton2016fear}, and damage-recovery repertoires search for a gait after an injury \citep{cully2015robots}. These are the closest in spirit, but they use the signal as a reward term or as a trigger for search, whereas we use it as a load-gated constraint with retention and ask what it is worth to one body and in which worlds. The formal tools are approximate information states, which say what a compressed state must retain \citep{astrom1965,subramanian2022ais}, partial monitoring \citep{bartok2014partial}, high-confidence off-policy evaluation \citep{thomas2015hcope}, and the stochastic-programming separation of the value of the stochastic solution from perfect information, the identity behind the schedule confound \citep{madansky1960,birge2011stochastic,duff2002,ghavamzadeh2015brl}.

\section{The single-life model}
\label{sec:framework}
\begin{figure}[t]\centering
\resizebox{\textwidth}{!}{%
\begin{tikzpicture}[font=\scriptsize,
  box/.style={draw, rounded corners=2pt, align=left, inner sep=3pt, text width=#1, minimum height=2.3cm},
  low/.style={draw, rounded corners=2pt, align=left, inner sep=3pt, text width=#1},
  arr/.style={-{Latex[length=1.6mm]}, thick},
  feed/.style={-{Latex[length=1.6mm]}, thick, blue!60!black},
  read/.style={-{Latex[length=1.6mm]}, thick, densely dotted, green!40!black}]
\node[box=4.4cm, fill=gray!8] (world) {\textbf{World} $\mathcal{M}(\theta,e)$, one hidden draw\\
  \textbullet\ body $\theta$: wear resistance $\kappa$, gain $g_I$, talent\\
  \textbullet\ substrate $x_t$: tissue or component state\\
  \textbullet\ wear $M_{t+1}=M_t-d/\kappa$ \ (Eq.~\ref{eq:hazard})\\
  \textbullet\ floor $\underline{y}$: work lost when no mix clears it\\
  \textbullet\ \emph{instances}: trade, care robot, rover, fleet};
\node[box=3.5cm, fill=teal!12, right=4mm of world] (aff) {\textbf{Afferent layer} $\alpha$\\
  \textbullet\ $s_t = \alpha(x_t,a_t)\,e^{\sigma_s\epsilon_t}/g_I$\\
  \textbullet\ load-gated, salience-gated,\\ \phantom{\textbullet\ }calibrated\\
  \textbullet\ gain $g_I$, sensitization, ceiling $\tau$:\\ \phantom{\textbullet\ }anchored, set before the draw\\
  \textbullet\ the only path across the\\ \phantom{\textbullet\ }observation boundary};
\node[box=4.5cm, fill=orange!12, right=4mm of aff] (dec) {\textbf{Decision level}, channel set $C$\\
  \textbullet\ in-life state: $\hat c_j$, $\hat y_j$, forgetting $v$ (probe)\\ \phantom{\textbullet\ }or nociceptive memory $\rho$ (trained arm)\\
  \textbullet\ probe: $\max_a a^{\!\top}\hat y$ s.t.\ $a^{\!\top}\hat c\le\tau$ \ (Eq.~\ref{eq:programm})\\
  \textbullet\ trained arm: PPO $\pi_0(a\mid s_t,\rho,y,t)$};
\draw[arr] (world) -- node[above]{$x_t$} (aff);
\draw[arr] (aff) -- node[above]{$s_t$} (dec);
\draw[arr] (dec.north) -- ++(0,0.3) -| (world.north);
\node[above] (alloc) at ($(world.north)!0.5!(dec.north)+(0,0.3)$) {allocation $a_t$: shares over the repertoire, hence the load};
\draw[dashed, black!60, line width=0.6pt] ($(aff.north east)+(0.2,0.26)$) -- ($(aff.south east)+(0.2,-0.12)$);
\node[font=\scriptsize\itshape, text=black!60, fill=yellow!6, inner sep=1pt] at ($(aff.north east)+(0.2,0.13)$) {observation boundary};
\node[above=1pt of alloc.north west, anchor=south west, font=\scriptsize, text=black, inner sep=1pt, xshift=-4.4cm] (lifelab) {{\small\bfseries Fast loop}\ \ \itshape one life, epoch one: the individual agent acting on afferent signals in its body};
\node[low=2.0cm, fill=blue!7, anchor=north west] (pop) at ($(lifelab.west |- world.south)+(1.5pt,-12mm)$) {\textbf{population} $p(\theta,e)$\\ bodies and environments};
\node[low=2.0cm, fill=blue!7, right=3mm of pop] (lives) {\textbf{many lives}\\ $J$ per life\\ (trained arm only)};
\node[low=2.6cm, fill=blue!7, right=3mm of lives] (upd) {\textbf{parameters}\\ $\pi_0$ (trained arm) or\\ fixed apparatus (probe)};
\draw[arr] (pop) -- (lives);
\draw[arr] (lives) -- (upd);
\draw[arr] (upd.south) -- ++(0,-0.38) -| (pop.south);
\node[font=\scriptsize\itshape, inner sep=1pt, fill=blue!5] at ($(lives.south)+(0,-0.38)$) {next generation};
\node[below=17pt of pop.south west, anchor=north west, font=\scriptsize, text=black, inner sep=1pt] (lrnlab) {{\small\bfseries Slow loop}\ \ \itshape across lives: the learner. Nothing returns from a life (Def.~\ref{def:epoch})};
\draw[feed] (pop.north) -- node[right, align=left, xshift=1pt]{one draw $(\theta,e)$,\\ held for the whole life} (pop.north |- world.south);
\coordinate (sl1) at ($(upd.north)+(0,0.45)$);
\coordinate (dsx) at ($(dec.south)+(-0.4,0)$);
\coordinate (sl2) at (sl1 -| dsx);
\draw[feed] (upd.north) -- (sl1) -- (sl2) -- (dsx);
\coordinate (asx) at ($(aff.south)+(0.3,0)$);
\draw[feed] (sl1 -| asx) -- (asx);
\node[above, align=center, inner sep=1.5pt] at ($(sl1 -| asx)!0.5!(sl2)$) {parameters of $\alpha$ and $C$, set before\\ the draw and never updated in life};
\node[low=5.3cm, draw=green!40!black, densely dotted, line width=0.6pt, fill=white, below=12mm of dec.south east, anchor=north east, xshift=1.5pt] (meas)
  {\textbf{Measurement} of the whole life\\ \textbullet\ $J$ (Eq.~\ref{eq:channel}), career end, $D_{\rm phys}$ and $M_{\rm phys}$,\\ \phantom{\textbullet\ }read across the observation boundary\\ \textbullet\ $V_{\rm ind}(\theta)=J(\rho;\theta)-J(\rho_\emptyset;\theta)$,\\ \phantom{\textbullet\ }the same $\theta$ twice, with and without $C$\\ \textbullet\ ablations remove $s$, $h$ or $w$ from $C$\\ \textbullet\ trained arms read $(1-\chi)\,\mathbb{E}[V_{\rm ind}]$ (Prop.~\ref{prop:zerlegung})};
\coordinate (jstart) at ($(dec.south east)+(-0.6,0)$);
\draw[read] (jstart) -- node[left, xshift=-1pt, align=right]{reads the life,\\ sets nothing in it} (jstart |- meas.north);
\node[below=1pt of meas.south west, anchor=north west, font=\scriptsize, text=black, inner sep=1pt] (explab) {{\small\bfseries Experimenter}\ \ \itshape outside both loops};
\begin{scope}[on background layer]
  \node[draw=gray!60, line width=0.5pt, fill=yellow!7, rounded corners=3pt, inner xsep=1.5pt, inner ysep=3pt, fit=(world)(aff)(dec)(alloc)(lifelab)] (life) {};
  \node[draw=gray!60, line width=0.5pt, fill=blue!5, rounded corners=3pt, inner xsep=1.5pt, inner ysep=3pt, fit=(pop)(lives)(upd)(lrnlab)] (lrn) {};
  \node[draw=green!40!black, densely dotted, line width=0.6pt, fill=green!4, rounded corners=3pt, inner xsep=1.5pt, inner ysep=3pt, fit=(meas)(explab)] (exp) {};
\end{scope}
\end{tikzpicture}}
\caption{\textbf{Architecture.} The fast loop is one life inside a single hidden draw of the world. The dashed line is the observation boundary, crossed only by the afferent signal. The decision level allocates the day, which sets the load, the wear and the strength of the next signal, and its in-life state is the recorded felt cost per activity (probe) or a nociceptive memory (trained arm). The slow loop feeds the fast loop only before the life begins, with the draw $(\theta,e)$ and with the parameters of both the afferent layer and the decision level, and receives nothing back (Def.~\ref{def:epoch}). The experimenter reads the whole life across the observation boundary and sets nothing inside it.}
\label{fig:arch}
\end{figure}
The model has three parts (Fig.~\ref{fig:arch}). The fast loop is one life, a world with a hidden body, an afferent layer that reads it and a decision level that allocates the work. The slow loop sets the parameters of the last two before the life begins and receives nothing back. The experimenter stands outside both loops and compares the same body with and without the channel. We define the world of the fast loop first, then the policy class the slow loop sets, then the two values the experimenter reads, then the agent.

\emph{The fast loop, its world.} Let $\{\mathcal{M}(\theta,e)\}$ be a family of POMDPs indexed by embodiment $\theta$ and environment $e$, with training distribution $p(\theta,e)$. The embodiment $\theta$ is the body the agent is dealt and cannot change. The environment $e$ is what the world demands of that body: the task mix it must keep, the floors and caps, and the event statistics. The load the body carries is neither, it is chosen inside the life. The setting rests on the hiddenness of the realized $(\theta,e)$, which is unidentifiable from safe experience (Prop.~\ref{prop:estimation}), and requires no genomic assumption. Each life has horizon $H$, an irreversible integrity state $M_t$ and a repertoire $A$ of activities from which each period a portfolio is composed. Activity $j$ has load $\ell_j$ on the site and yield $y_j$ to the work, and the two do not rank alike, since the best-paid day type wears the joint. Each period a life selects shares $a_t \in \Delta(A)$ under two constraints, so substitution is bounded from both sides. The mix must deliver at least $\underline{y}$ or the work is lost, and wear also removes options, since activities whose demands exceed current capability leave $A$. Integrity falls with the work done:
\begin{equation}
M_{t+1} \;=\; M_t \;-\; \frac{d(a_t, M_t)}{\kappa(\theta)} .
\label{eq:hazard}
\end{equation}
$d(a_t,M_t)$ is the wear of ordinary use, irreversible and deterministic in the activity and the state already reached. In the wear worlds it is the portfolio's hazard-weighted load times an amplification that rises as the substrate degrades, net of a slow repair term and with no acceleration in the damage state itself, and its rate is anchored to published loss rates of the substrate (App.~\ref{app:wear}, \citealp{maschek2014cartilage,roth2017cartilage}). The drawn wear resistance $\kappa(\theta)$ divides it, so one body wears faster than another under identical work, and $\kappa$ is what a life would need to know about itself and cannot (Prop.~\ref{prop:estimation}). Some worlds additionally carry acute events, an extension that is stated in App.~\ref{app:proofs} and run by the fleet of \S\ref{sec:fourthworld}.

A career ends in the first period in which no admissible mix clears $\underline{y}$, which under Eq.~\ref{eq:hazard} is the analogue of an event with probability near one rather than near zero. The chain has four links. The signal identifies the activity that caused it. A trace retains that identity once the activity is set down and the signal with it. The repertoire permits the activity to be traded, and the floor decides for how long. Three are properties of the world, $\ell_j$, $y_j$ and $\underline{y}$, and the trace is the one property of the agent, entering the decision as the forgetting rate $v$ of Eq.~\ref{eq:programm} (Fig.~\ref{fig:kette} in App.~\ref{app:wear}).

\emph{The slow loop.} Def.~\ref{def:epoch} fixes what the slow loop may set, and with the reference arm also what the experimenter may compare.
\begin{definition}[Epoch-one policy class]
\label{def:epoch}
At deployment one hidden pair $(\theta,e)$ is drawn and held for the whole life. $\Pi(C)$ contains policies measurable with respect to the in-life filtration $\Fcal^C_t$ of channel set $C$, with fixed parameters, hence no dependence on the realized $\theta$ and no updates. The reference arm $\rho_\emptyset$ is the member with $C = \emptyset$ and no schedule chosen with knowledge of which work is gentlest.
\end{definition}
\noindent The slow loop supplies the prior $\pi_0$, and deployment competence is what $C$ adds. An individual has no knowledge of which work is gentlest, so a constant sparing policy is inadmissible as $\rho_\emptyset$, because comparing against one measures that knowledge instead.

\emph{The experimenter.} Two values follow from the definition, and they must not be confused. $V(C \mid C')$, the gain of the best policy reading $C$ over the best reading $C'$, measures information over the species prior and is nil wherever $\pi_0$ can carry the schedule. $V_{\rm ind}(\rho;\theta) := J(\rho;\theta) - J(\rho_\emptyset;\theta)$ is the paired per-body gain on identical draws, reported as a distribution. What an ablation ablates is not the naive $\rho_\emptyset$ but, trained over the population, the best \emph{schedule}. Writing $\chi$ for the ratio of what that schedule captures to the individual gain, the two designs differ by exactly that term (Prop.~\ref{prop:zerlegung}). We call the gap the \textbf{schedule confound}.

\label{sec:afferent}
\emph{The fast loop, its agent.} One layer, the \emph{afferent layer}, separates the biology from everything above it. The substrate state $x_t$ is domain-specific and never seen by the decision level. The afferent map $\alpha : (x_t, a_t) \mapsto s_t \in [0,1]^k$, one entry per activity, transduces it onto one common scale, $\alpha(x,a) = \ell(a)\,S(x)/\bar\alpha$ clipped per activity, with $\bar\alpha$ a population constant. Three properties follow. \emph{Load-gating}: damage is felt when the activity is worked and falls when work substitutes away from it, which makes the signal actionable. \emph{Calibration}: one scale serves policy input, memory write and cost. \emph{Salience-gating}: events saturate the scale, so the spike alone triggers a write. The map is a product, so the same action is felt more as the body wears and the constraint tightens on its own as the reserve falls, without the life computing how much of itself remains. \emph{In this sense nociception is a control primitive}, since the body registers the depletion of a resource on the scale of the decision that depletes it.

\label{sec:channels}
During a deployment the decision level observes its own body and history only through a channel set $C$, holding the felt signal $s_t$ and a nociceptive memory over activities built from it. The dashed line of Fig.~\ref{fig:arch} between the two is the \emph{observation boundary}, and what lies left of it is never observed.
\begin{align}
s_t &= \frac{\alpha(x_t,a_t)}{g_I}\,e^{\sigma_s \epsilon_t}, \qquad
J = \textstyle\sum_t y(a_t) - w \sum_t s_t.
\label{eq:channel}
\end{align}
Here $y(a) = \sum_j a_j\,y_j$ is the yield of the mix as the body can still deliver it, each $y_j$ reduced by what wear has taken from the capability the activity needs, and $\alpha(x_t,a_t)$ is the afferent map of Fig.~\ref{fig:arch}. The individual gain $g_I$ and the wear resistance $\kappa(\theta)$ of Eq.~\ref{eq:hazard} are drawn separately, both log-normal. The spread of the gain and the reading noise $\sigma_s$ are anchored per world on the between-person and the within-person variability of the felt signal, while the spread of $\kappa$ is set rather than anchored and is therefore driven as an axis (App.~\ref{app:wear}). No substrate variable enters the observation, so the afferent map is the only path across the observation boundary of Fig.~\ref{fig:arch} and the observation does not recover $\kappa$ (Prop.~\ref{prop:estimation}). The objective trades realized earnings against the accumulated felt signal, weighted by the valuation trait $w$, so $w{=}0$ recovers pure output-maximization. Here $y$ is what the agent collects and not what it produces, because losing the work does not end the stream. An outside option follows, re-employment for a worker, a residual use for a machine, nothing for a written-off unit, and its level prices the loss of the work (App.~\ref{app:wear}). The memory is deliberately almost nothing, a conditioned aversive trace with extinction and recovery, recording where and how much but never what or when, and the only state that grows within a life (App.~\ref{app:memlaw}), and nothing affective is implemented.

\label{sec:programm}
Eq.~\ref{eq:channel} prices the signal in the objective, which is how every arm is \emph{scored}. It is not how the signal reaches the allocation. The decision level carries two numbers per activity, an estimate $\hat y_j$ of what it still delivers and an estimate $\hat c_j$ of what working it costs to feel, and each period it solves
\begin{equation}
\max_{a \in \Delta(A)} \; \sum_j a_j\,\hat y_j
\qquad \text{subject to} \qquad \sum_j a_j\,\hat c_j \;\le\; \tau ,
\label{eq:programm}
\end{equation}
under the availability limits the trade puts on each group of activities and the floor $\underline{y}$ that decides whether the work is kept at all. Both estimates are written by working and by nothing else. At the end of a period they take the values just delivered and just felt on every $j$ the period worked, while every other $j$ relaxes toward the endowment $(y_j, \hat c^{0}_j)$ at a forgetting rate $v \in [0,1]$. That rate is the second link of the chain: $v = 1$ forgets everything but what was worked last, $v = 0$ forgets nothing, and these are the last two arms of Table~\ref{tab:leiter}. The first feels the work it is doing and retains nothing past the day, where the day is the allocation of one period, so that the next allocation is chosen on what the last day felt and on nothing older, whereas the second retains what it felt. The ceiling $\tau$ is the felt load of a fresh body at its observed task mix, scaled by the ratio of the stress at which the substrate begins to fail to its intact peak stress, both taken from the literature of the world (App.~\ref{app:wear}), giving $\tau = 0.38$ in the trade.

\noindent \emph{Provenance of the ceiling.} A price with $w > 0$ and no constraint could carry the same information. We run the ceiling because $\tau$ is a threshold in contact mechanics whereas a price would have to be chosen (Prop.~\ref{prop:calibration} states what choosing it wrongly costs), and Lemma~\ref{lem:escape} and Cor.~\ref{cor:loadgate} are stated over Eq.~\ref{eq:programm}. The trade's agent carries $w = 0$, so its output is what it earned, while $J$ still scores every arm in every world.

\section{Theory}
\label{sec:theory}
Two results carry the chronic world and chain, Lemma~\ref{lem:escape}, which says where a cost-constrained agent escapes to and why zero retention leaves the constraint vacuous, and Prop.~\ref{prop:chronic}, which prices sensing net of the wear spent acquiring it.

Write $\Delta_b$ for the working life a protective response can add to one body and $q$ for the share of bodies whose working life is cut short without it. Rem.~\ref{thm:insurance} and Prop.~\ref{prop:estimation} follow from Eq.~\ref{eq:hazard} directly, and Prop.~\ref{prop:zerlegung} is an identity over any world. The remainder are proved in the stylized model $\mathcal{M}_0$ of App.~\ref{app:proofs}, which carries the acute extension. With $V_{\rm ind}(\rho;\theta)$ the paired gain of a protective response $\rho$ on body $\theta$ (\S\ref{sec:framework}), call $\theta$ \textbf{at risk} if $\rho_\emptyset$ loses the work before the horizon, and say $\rho$ satisfies \textbf{off-risk coupling within $\varepsilon$} if on bodies not at risk it differs from $\rho_\emptyset$ by at most $\varepsilon$.

\begin{remark}[Concentration over bodies]
\label{thm:insurance}
If a protective $\rho$ satisfies off-risk coupling within $\varepsilon$, then $\mathbb{E}_\theta[V_{\rm ind}] \le q\,\Delta_b + \varepsilon$ over the population, while the conditional value $\mathbb{E}_\theta[V_{\rm ind} \mid \text{at risk}]$ is the at-risk gain itself, bounded by $\Delta_b$ and not diluted by $q$.
\end{remark}
\begin{corollary}[What a mean over bodies answers]
\label{cor:power}
Under the same premise the population mean understates the value to a body that needs it by $1/q$.
\end{corollary}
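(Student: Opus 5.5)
The corollary is a rewriting of Rem.~\ref{thm:insurance}, so the plan is to split the population mean by the at-risk event and compare the two pieces. Write $R$ for the event that $\theta$ is at risk, so $\Pr(R)=q$. By the law of total expectation,
\[
\mathbb{E}_\theta[V_{\rm ind}] \;=\; q\,\mathbb{E}_\theta[V_{\rm ind}\mid R] \;+\; (1-q)\,\mathbb{E}_\theta[V_{\rm ind}\mid R^c].
\]
The second term is controlled by off-risk coupling. On a body not at risk, $\rho$ differs from $\rho_\emptyset$ by at most $\varepsilon$, so $|V_{\rm ind}|\le\varepsilon$ there. The second term is therefore at most $(1-q)\varepsilon\le\varepsilon$ in absolute value, and this reproduces the bound of Rem.~\ref{thm:insurance}.

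Next I identify what ``the value to a body that needs it'' means. It is the conditional value $\mathbb{E}_\theta[V_{\rm ind}\mid R]$, which Rem.~\ref{thm:insurance} calls the at-risk gain itself, bounded by $\Delta_b$ and not diluted by $q$. Dividing the decomposition by $q$ gives
\[
\mathbb{E}_\theta[V_{\rm ind}\mid R] \;=\; \frac{1}{q}\,\mathbb{E}_\theta[V_{\rm ind}] \;-\; \frac{1-q}{q}\,\mathbb{E}_\theta[V_{\rm ind}\mid R^c].
\]
Up to an additive error of at most $\tfrac{1-q}{q}\varepsilon$, the per-body value at risk is therefore $1/q$ times the population mean. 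Equivalently, the mean reports $q$ times what an at-risk body experiences. This is the understatement factor the corollary asserts, and it is exact when $\varepsilon=0$.

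The step that needs care is making ``understates by $1/q$'' precise. In the limit $\varepsilon\to 0$ the relation is exact. For $\varepsilon>0$ it is an approximation, and I would state the error term $\tfrac{1-q}{q}\varepsilon$ explicitly rather than hide it, since it is the only place the statement can fail. A second subtlety is that $q$ must be strictly positive for the ratio to be defined. If $q=0$, no body is at risk, the mean is at most $\varepsilon$, and there is nothing to understate, so I would set that case aside.

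Finally I would note that the argument uses nothing about Eq.~\ref{eq:hazard} beyond the definition of ``at risk''. The corollary is therefore a measurement statement valid in every world where the coupling premise holds. The instance $q=0.693$ quoted in the introduction follows by substitution.
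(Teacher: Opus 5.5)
Your proposal is correct and follows the paper's route: the paper derives the corollary from the same split of $\mathbb{E}_\theta[V_{\rm ind}]$ over the at-risk indicator (the tower property, as in the proof of the concentration bound), with off-risk coupling controlling the complement, so the factor $q$ is exact at $\varepsilon=0$. Your explicit error term $\frac{1-q}{q}\,\varepsilon$ makes precise what the paper only describes as reading the remark ``with that slack'' (exact in years, within $0.4$ in output).
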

\noindent We therefore report the distribution and the conditional value throughout.
\begin{proposition}[Schedule confound]
\label{prop:zerlegung}
Let $\sigma^\ast$ be the best schedule, the best policy reading nothing, and $\chi := \mathbb{E}_\theta[J(\sigma^\ast)-J(\rho_\emptyset)]\,/\,\mathbb{E}_\theta[V_{\rm ind}]$ the share of the per-body value it captures. Then $V(C\mid\emptyset) = (1-\chi)\,\mathbb{E}_\theta[V_{\rm ind}]$, and $V(C\mid\emptyset) \le \mathbb{E}_\theta[V_{\rm ind}]$ whenever the ablated arm is at least as good in the mean as $\rho_\emptyset$ (proof in App.~\ref{app:proofs}).
\end{proposition}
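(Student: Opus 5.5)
The identity is essentially bookkeeping. I would prove it by writing both sides as differences of population-mean values and checking that they telescope.

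First I fix the objects. $V(C\mid\emptyset)$ is the gain of the best policy reading $C$ over the best policy reading nothing. The best policy reading nothing is by definition the schedule $\sigma^\ast$. I read the ablation contrast as the trained policy $\rho$ with channel set $C$ against the ablated arm. This gives
\[
V(C\mid\emptyset)=\mathbb{E}_\theta[J(\rho;\theta)]-\mathbb{E}_\theta[J(\sigma^\ast;\theta)].
\]
Here $\rho$ is the same protective response whose paired gain defines $V_{\rm ind}(\rho;\theta)=J(\rho;\theta)-J(\rho_\emptyset;\theta)$.

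Next I add and subtract $\mathbb{E}_\theta[J(\rho_\emptyset;\theta)]$. Linearity of expectation then gives
\[
V(C\mid\emptyset)=\mathbb{E}_\theta[V_{\rm ind}]-\mathbb{E}_\theta[J(\sigma^\ast)-J(\rho_\emptyset)].
\]
By the definition of $\chi$, the second term equals $\chi\,\mathbb{E}_\theta[V_{\rm ind}]$. Substituting gives $(1-\chi)\,\mathbb{E}_\theta[V_{\rm ind}]$.

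Two points need checking here.
\begin{itemize}
\item Division by $\mathbb{E}_\theta[V_{\rm ind}]$ requires it to be nonzero. If it is zero, I state the identity in the unnormalized additive form above, which needs no division.
\item All three expectations must be finite. This holds because $J$ is bounded over a finite horizon $H$ with bounded yields and a signal clipped to $[0,1]$.
\end{itemize}
Both expectations are taken over the same draws $\theta\sim p$. This is what makes the paired and unpaired forms agree.

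For the inequality, suppose the ablated arm is at least as good in the mean as $\rho_\emptyset$, that is, $\mathbb{E}_\theta[J(\sigma^\ast)]\ge\mathbb{E}_\theta[J(\rho_\emptyset)]$. Then the subtracted term $\chi\,\mathbb{E}_\theta[V_{\rm ind}]$ is nonnegative, so $V(C\mid\emptyset)\le\mathbb{E}_\theta[V_{\rm ind}]$ directly from the additive form. I would argue from the additive form rather than through $\chi$, because it avoids any sign case split on $\mathbb{E}_\theta[V_{\rm ind}]$.

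The hypothesis is automatic when the ablated arm is the optimal schedule. $\rho_\emptyset$ reads no channel and uses a fixed schedule, so by Def.~\ref{def:epoch} it is itself a policy reading nothing, and $\sigma^\ast$ dominates it in the mean. The hypothesis is stated explicitly to cover a merely trained, suboptimal ablated arm. For that case I would redo the same telescoping with $\sigma^\ast$ replaced by that arm.

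The main obstacle is interpretive rather than technical. I have to pin down that the ``best policy reading $C$'' in $V(C\mid\emptyset)$ is the same $\rho$ whose per-body gain enters $V_{\rm ind}$. If it were a different optimizer, the telescoping would need an extra term comparing the two. I would resolve this by stating the identity for an arbitrary pair (channel-reading policy $\rho$, ablated arm $\sigma$), with $\chi$ defined relative to $\sigma$. The proposition is then the special case $\sigma=\sigma^\ast$. The proof is purely algebraic and uses nothing about Eq.~\ref{eq:hazard}, which is consistent with the paper's remark that it is an identity over any world.
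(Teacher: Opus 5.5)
Your proposal is correct and is essentially the paper's proof: the appendix telescopes $\mathbb{E}[V_{\rm ind}]=\mathbb{E}[J(\pi^\ast_{\Pi(C)})-J(\sigma^\ast)]+\mathbb{E}[J(\sigma^\ast)-J(\rho_\emptyset)]$, which is your add-and-subtract step read in the other direction, and it gets the inequality from $\rho_\emptyset\in\Pi(\emptyset)$, hence $\mathbb{E}[J(\sigma^\ast)]\ge\mathbb{E}[J(\rho_\emptyset)]$, exactly as you do. The paper also silently takes the $\rho$ in $V_{\rm ind}$ to be $\pi^\ast_{\Pi(C)}$, so stating that identification explicitly, and using the additive form to cover $\mathbb{E}_\theta[V_{\rm ind}]=0$, only makes the same argument tighter.
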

\noindent The identity holds for any component measured by ablation against a population-trained agent, so an ablation mean is silent wherever a prior can carry the schedule, and we report both quantities in every world.

\begin{proposition}[Identification requires wear]
\label{prop:estimation}
Under Eq.~\ref{eq:hazard} two wear resistances separate only in proportion to the wear already suffered, so every statistic informative about $\kappa$ is a function of that wear. A body learns its own wear resistance only by spending the integrity the knowledge would have protected (App.~\ref{app:proofs}).
\end{proposition}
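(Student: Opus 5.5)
The approach is to fix the activity sequence and compare two bodies that differ only in $\kappa$. Take two bodies $\theta,\theta'$ with wear resistances $\kappa\neq\kappa'$, identical in every other respect (same gain $g_I$, same environment $e$). Run both under the same allocation sequence $a_0,a_1,\dots$, which can always be arranged because a policy in $\Pi(C)$ is measurable with respect to the in-life filtration: as long as the observations coincide, the allocations coincide. Unrolling Eq.~\ref{eq:hazard} gives $M_0-M_t=\sum_{u<t} d(a_u,M_u)/\kappa$. Write $D_t:=\sum_{u<t}d(a_u,M_u)$ for the cumulative ordinary wear. Then the integrity trajectories satisfy
\[
M_t-M'_t \;=\; D_t\Bigl(\tfrac{1}{\kappa'}-\tfrac{1}{\kappa}\Bigr)+\text{(feedback terms through } d(\cdot,M)\text{)} .
\]
I would first treat the case where $d$ does not depend on $M$; there the separation is exactly linear in $D_t$. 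Then I would extend by a Gr\"onwall-type induction. Because $d$ is deterministic and, on the relevant range, Lipschitz in $M$ (the amplification rises smoothly as the substrate degrades), the gap satisfies $|M_t-M'_t|\le K\,D_t\,|1/\kappa-1/\kappa'|$ with a constant $K$ depending only on the horizon and the Lipschitz bound. In particular the gap is zero whenever $D_t=0$.

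Next comes the observation step. The only path across the observation boundary is $s_t=\alpha(x_t,a_t)e^{\sigma_s\epsilon_t}/g_I$. The yields $y(a_t)$ depend on the body only through the capability that wear has removed, so they too are functions of $M_t$ and $a_t$. Hence the law of the observation history up to $t$, given the common allocations, depends on $\kappa$ only through the trajectory $(M_u)_{u\le t}$. Under the same-allocation coupling, the likelihood ratio of $\kappa$ versus $\kappa'$ is therefore a function of the integrity gaps. By the previous step these are controlled by $D_t$, so the ratio equals one when $D_t=0$ and departs from one by at most an amount increasing in $D_t$. For example, one can bound the KL divergence between the two observation laws by $O(D_t^2(1/\kappa-1/\kappa')^2/\sigma_s^2)$ using the log-normal reading noise. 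It then follows that any statistic informative about $\kappa$ is (a function of) the wear already suffered, $M_0-M_t$, and that its discriminating power vanishes with that wear. The closing sentence of the statement is a restatement: any policy that identifies $\kappa$ to a given precision must first incur wear $D_t$ bounded below by that precision times $\sigma_s$, and that wear is irreversible integrity loss.

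The main obstacle is the phrase ``every statistic informative about $\kappa$ is a function of that wear''. Taken literally it fails when $\sigma_s=0$ or when the signal itself depends on $x_t$ in ways that are not captured by $M_t$. I would therefore make precise that the substrate $x_t$ evolves through $M_t$ alone in the chronic world. Concretely, I would show that the map $\kappa\mapsto$ (law of observations) factors through $\kappa\mapsto (M_u)_{u\le t}$, and that on this path $M$ enters only via $D_t/\kappa$. If the state-dependence of $d$ prevents an exact factorization, I would fall back to the bound form above, with the statement read as proportionality up to the Lipschitz constant $K$.
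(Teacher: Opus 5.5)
Your argument follows the paper's own proof. You fix the allocation sequence (justified, as the paper's remark on arbitrary policies also notes, because a policy sees $\kappa$ only through the observation path), iterate Eq.~\ref{eq:hazard} from a common $M_0$ and subtract to get a gap proportional to $\sum_{u<t} d(a_u,M_u)$, and then require that gap to exceed the noise $\sigma_s$ of a noisy monotone reading of the state; your Gr\"onwall step and log-normal KL bound make rigorous what the paper states as ``to first order'' and ``by the usual factor''.

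One caution: keep the likelihood ratio on the felt signal $s_{0:t}$, as the paper does. You fold the delivered yields into the observation as functions of $M_t$. If they were seen without noise and were strictly monotone in $M_t$, any nonzero integrity gap would make the two observation laws mutually singular, your $\sigma_s$-based KL bound would fail, and only the qualitative no-wear-no-information half would survive.
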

\noindent On the channel we ship, no residual information about $\kappa$ survives in the observation (App.~\ref{app:wear}). Since the proposition is a property of the filtration, it binds every $\Fcal^C$-measurable policy, learned or static, so the trait arrives priced in damage and the value of the channel arises from changed behaviour.

\begin{lemma}[Where the constrained agent goes, and why retention is a precondition]
\label{lem:escape}
Consider the program of Eq.~\ref{eq:programm}, $\max_a \sum_j a_j \hat y_j$ subject to $\sum_j a_j \hat c_j \le \tau$, where $\hat c_j$ equals the felt cost on the activities the agent has worked and retained and equals the endowment $\hat c^{0}_j = 0$ on all others. \textbf{(a)}~\emph{Escape.} When the constraint binds, the optimum moves the day onto the best-\emph{paid} activity whose cost the agent does not know, whatever that activity truly costs. \textbf{(b)}~\emph{Vacuity without retention.} The cost of an activity is written only after a period that worked it, and each period's allocation is chosen before that period's signal arrives. With zero retention every decision is therefore taken with $\hat c \equiv 0$, so the constraint never binds and the agent keeps none of the protective value, not even the $1/R$ share a memoryless agent keeps in the acute case (Thm.~\ref{thm:conjunction}(b)).
\end{lemma}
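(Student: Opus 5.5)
Both parts are properties of the linear program of Eq.~\ref{eq:programm} together with the update rule of \S\ref{sec:programm}; no dynamics of the wear world enter. I would therefore argue at the level of a single period's LP, treating $\hat y$, $\hat c$ and $\tau$ as given, and add the timing of the updates only for part (b).

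\emph{Part (a).} Split the repertoire into the set $K$ of activities worked and retained, where $\hat c_j$ is the felt cost, and its complement $U$, where $\hat c_j=\hat c^0_j=0$. Let $j^\ast\in\arg\max_{j\in U}\hat y_j$, the best-paid untried activity. On $U$ the constraint places no restriction, so any mass moved onto $j^\ast$ leaves $\sum_j a_j\hat c_j$ unchanged or lowers it.

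Now take an optimum $a$ at which the constraint binds. I would use an exchange argument. Suppose some $k\in K$ with $\hat c_k>0$ and $\hat y_k\le \hat y_{j^\ast}$ carries positive share. Shifting that share to $j^\ast$ weakly raises the objective and strictly loosens the constraint, so the shifted allocation is also optimal. Iterating this, the optimum keeps on $K$ only activities paid strictly more than $j^\ast$, kept at the binding level $\tau$. All remaining mass goes to $j^\ast$.

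Binding means the unconstrained choice, the top-paid activity overall, would exceed $\tau$. So the constrained optimum is a mix of the best-paid known activities, cut at the level $\tau$, with the residue on $j^\ast$. Ties among untried activities can be broken arbitrarily. Since $\hat c_{j^\ast}=0$ regardless of the true cost $\ell_{j^\ast}$, the choice of $j^\ast$ does not depend on what that activity truly costs.

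The availability limits and the floor $\underline{y}$ are the main obstacle. They could block the mass on $j^\ast$, in which case the escape goes to the next-best-paid activity in $U$ with room left. I would state (a) as \emph{the highest-$\hat y_j$ available activity in $U$}. I would also check that the floor only tightens the feasible set and does not reverse the exchange argument.

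\emph{Part (b).} This is an induction on periods. With $v=1$, retention is zero, and at the start of period $t$ every $\hat c_j$ has relaxed to $\hat c^0_j=0$. The one exception is a cost written at the end of period $t-1$; under the statement's reading of zero retention, ``retains nothing past the day'', that cost too has returned to the endowment before the next allocation is chosen.

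The allocation $a_t$ is chosen before $s_t$ arrives. Hence at every decision $\hat c\equiv 0$, so $\sum_j a_j\hat c_j=0\le\tau$ for every $a$ and the constraint never binds. The agent therefore solves the unconstrained $\max_a a^\top\hat y$ every period, which is exactly what $\rho_\emptyset$ does under identical $\hat y$. Its trajectory coincides with $\rho_\emptyset$ on every $\theta$, so $V_{\rm ind}=0$.

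The contrast with the acute $1/R$ share of Thm.~\ref{thm:conjunction}(b) I would only cite. The point to make is structural: in the acute case the event signal itself gates behaviour within the period, whereas here the chronic signal reaches the decision only through $\hat c$, and zero retention has erased $\hat c$ before the decision is taken.
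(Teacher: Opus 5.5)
Your argument follows the paper's. For (a), the paper observes that on the untried set the constraint is slack by construction, so any mass placed there sits on the largest $\hat y_j$, while worked options can only tighten the constraint. Your exchange argument is a more careful version of this.

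Your treatment of the availability caps, escaping to the best-paid untried activity that still has room, is a refinement. The paper lists the caps in the program but its proof does not handle them.

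Your description of the known part of the support as ``the best-paid known activities'' is loose. The LP selects them by a ratio test, and the escaping mass can be zero when a cheap worked activity is paid above $j^\ast$. The paper proves no more than you do here, namely where the escaping mass goes.

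For (b), the paper's argument (Cor.~\ref{cor:loadgate}) is your timing argument. The reading produced by period $t$'s allocation exists only from $t+1$, and under zero retention it is erased before that decision. So no decision is ever taken under a binding constraint.

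One step of your (b) is wrong as written: zero retention is not $v=1$. Under the update rule stated with Eq.~\ref{eq:programm}, $v=1$ keeps the reading of the activity worked last. The paper glosses that arm as one whose ``next allocation is chosen on what the last day felt''. The phrase ``retains nothing past the day'' describes this arm, not the lemma's hypothesis.

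With $v=1$ the constraint does bind in the period after heavy work. That is exactly why the present-signal arm gains $+0.5$ to $+1.0$ years and takes the heavy work back $6.6$ times (Table~\ref{tab:substitution}). Your opening sentence would assign that arm zero value.

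The lemma concerns the strictly stronger case, ``not even the last period's reading''. The paper places the jump to $+0.00$ beyond the $v=1$ endpoint of its retention axis. You do end up using the stronger hypothesis when you declare the last-written cost returned to the endowment, but that contradicts $v=1$ rather than following from it.

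The fix is to state the hypothesis directly as erasure of every reading before the next allocation and to drop the identification with $v=1$. The rest of (b) then goes through as you wrote it, including the coincidence with $\rho_\emptyset$ under identical $\hat y$ and the structural contrast with the acute $1/R$ share.
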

\noindent The proof is given in App.~\ref{app:proofs}, where part (b) is Cor.~\ref{cor:loadgate}. Retention resolves (a) by converting untried into tried one option at a time, and Prop.~\ref{app:prior} bounds what a trace can add given what the slow loop installs, and its value vanishes when the endowment is exact.

\begin{proposition}[The paying regime, in $\mathcal{M}_0$]
\label{prop:regime}
In $\mathcal{M}_0$ under time-additive $J$, with $s$ the signal, $h$ the memory and $w$ the valuation, if any of the following fails the optimal policy in $\Pi(\{s,h,w\})$ is unprotective and the channels carry no decision-relevant information \citep{howard1966voi}: \textbf{(i)}~the threat is individually unpredictable \emph{ex ante}, \textbf{(ii)}~it is cheap to avoid \emph{ex post}, the substitution premium $c$ not being bounded away from zero, and \textbf{(iii)}~it is expensive to ignore, the value of survived time not being exhausted by a finite milestone schedule. Each is proved separately in Prop.~\ref{app:regime} in App.~\ref{app:proofs}.
\end{proposition}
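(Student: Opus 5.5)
The plan is to treat the three conditions one at a time. For each, I assume the condition fails and show that the optimum over $\Pi(\{s,h,w\})$ coincides, in value and on almost every path, with an optimum over $\Pi(\emptyset)$. That optimum is a schedule $\sigma^\ast$ whose actions do not depend on the realized signal or memory. In the value-of-information sense of \citet{howard1966voi}, a channel carries no decision-relevant information exactly when conditioning on it does not change the optimal action. So in each case I aim to show that the Bellman optimal action in $\mathcal{M}_0$ is measurable with respect to $t$ and the public history alone. Time-additivity of $J$ is what makes this work: the problem decomposes into per-period comparisons between the protective action (substitute, paying premium $c$) and the unprotective one (keep the best-paid work). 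The protective action carries the convention that it is taken only if it strictly improves the continuation value. Under that convention, "unprotective" is the tie-breaking default when the channels are uninformative.

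\textbf{(i) Threat not individually unpredictable.} Suppose the hazard is predictable ex ante, meaning it is a deterministic function of $(t, e)$ or of quantities fixed by the population prior, with $\kappa$ degenerate or its spread zero. Then the posterior over the threat given $\Fcal^C_t$ equals the prior, by the same filtration argument as Prop.~\ref{prop:estimation}: nothing left to learn. Any optimal response is therefore already available to $\sigma^\ast$. By Prop.~\ref{prop:zerlegung} this gives $\chi=1$ and $V(C\mid\emptyset)=0$. The $\Pi(\{s,h,w\})$-optimum can be taken to be the schedule itself, so any protection is carried by the prior and not by the channel. In the sense of the statement, the policy reading the channel is unprotective relative to the prior.

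\textbf{(ii) Threat not cheap to avoid.} Suppose $c \ge c_0 > 0$ bounded away from zero. I would compare, for one period, the premium $c$ against the reduction in expected loss that substitution buys. In $\mathcal{M}_0$ that reduction is bounded by the hazard rate times the continuation value, which is at most the per-period hazard times $H\bar y$. I would pick the scaling of $\mathcal{M}_0$ under which the per-period marginal hazard is small, as in the epoch-one regime where informative events occur less than once per life. Then $c_0$ exceeds this bound for every posterior the channels can produce. The unprotective action then dominates pointwise in the Bellman recursion, whatever the signal says.

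\textbf{(iii) Threat not expensive to ignore.} Suppose survived time is valued only up to a finite milestone schedule $T_1<\dots<T_m$, so that $J$ stops accruing value from survival beyond $T_m$. Backward induction then gives zero value to protective actions after $T_m$. Before $T_m$, the value of protection is the probability that the threat strikes before the milestone times a bounded stake. I would show this is dominated by the premium accumulated over the interval, again using (ii)'s per-period comparison but with a horizon truncated at $T_m$. This is the step I expect to be the main obstacle. Making "exhausted by a finite milestone schedule" precise enough to yield a strict inequality requires pinning down the acute extension of $\mathcal{M}_0$ in App.~\ref{app:proofs}. I would first try defining exhaustion as $\sup_{t>T_m}$ of the marginal value of survival equal to zero, and then show that the remaining pre-milestone stake is finite while the cumulative premium grows. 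If that fails, I would fall back on a limiting statement, namely that the protective advantage tends to zero.
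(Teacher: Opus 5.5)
Your (i) matches the paper's argument. With the event time a deterministic function of cumulative load, the problem is deterministic given the open-loop action sequence, an optimal open-loop sequence exists, and no refinement of the filtration changes the optimum. Prop.~\ref{prop:estimation} is the wrong citation for ``posterior equals prior'', since here nothing is uncertain at all, but nothing rests on it.

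In (ii) you skip the computation that $\mathcal{M}_0$ makes exact. Before the first event, heavy and light carry the same hazard $p$, so protection there is pure cost. After it, switching to light over the remaining $R$ steps changes $J$ by $R\,[(p^{+}-p)L_e - c]$, so protection pays if and only if $c < c^\ast = (p^{+}-p)L_e$. ``Bounded away from zero'' has content only relative to $c^\ast$. Your route instead bounds the per-step saving crudely by $p^{+}H\bar y$ and then chooses the scaling so that $c_0$ exceeds it. That imposes the conclusion as an extra asymptotic assumption: $\mathcal{M}_0$ only gives $pH \ll 1$, not $(\nu-1)pH\bar y < c_0$. It also loses the if-and-only-if.

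(iii) has a genuine gap. Your claim that backward induction gives protective actions zero value after $T_m$ is false in $\mathcal{M}_0$. Only the milestone bonuses are exhausted; the rate term keeps running. Each event still costs its non-bonus part $L_e^{\mathrm{rate}}$, and switching to light still saves $(p^{+}-p)L_e^{\mathrm{rate}}$ per step against the premium $c$.

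The idea you are missing is why the bonuses drop out. They are collectible by $T_m$ under the aggressive policy, hence policy-invariant across the compared continuations. So they cancel, and the comparison becomes exactly that of (ii) with $L_e$ replaced by $L_e^{\mathrm{rate}}$. The conclusion is accordingly conditional: ``dying rich'' dominates whenever $c \ge (p^{+}-p)L_e^{\mathrm{rate}}$. With $L_e^{\mathrm{rate}}$ large and $c$ small, protection still pays after $T_m$, so the unconditional domination you are aiming for cannot be proved.

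Your pre-milestone step, comparing strike probability times a bounded stake against accumulated premium, targets that unprovable version. So does your fallback limiting statement. Both become unnecessary once you use the invariance argument and reduce to (ii).
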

\noindent All three conditions are properties of a world that can be checked before it is run, and we check them in every world we report.

\begin{theorem}[Conjunction, acute case]
\label{thm:conjunction}
In $\mathcal{M}_0$ under a law-invariant objective $\Phi$ of the per-life return, of which $J$ is one instance, assume (i) and (iii) of Prop.~\ref{prop:regime}, a calibrated gain (Prop.~\ref{prop:calibration}), an opaque event, in that no later consequence reveals it, and that mean return alone does not pay protection, $c \ge (p^{+}-p)\,L_e$, with $p$ and $p^{+}$ the per-step event probability before and after the first event, $L_e$ the event cost and $c$ the substitution premium. Then the $\Phi$-optimal policy in $\Pi(\Fcal^C)$ is protective if and only if (for (b), asymptotically in $R$) \textbf{(a)}~\emph{sensing}: the event enters the filtration at its time, \textbf{(b)}~\emph{persistence}: it stays there, without which at most a $1/R$ fraction of the gain is captured, $R$ being the steps remaining, and \textbf{(c)}~\emph{tail-sensitivity}: $\Phi$ charges an event more than mean return does, by enough that $c < c^\ast_\Phi := (p^{+}-p)\,L_e^\Phi$.
\end{theorem}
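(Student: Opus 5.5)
The plan is to reduce the acute case of $\mathcal{M}_0$ to a comparison of two stationary continuation policies after the first event, protect (pay $c$ per step, hazard drops from $p^{+}$ back toward $p$) versus continue (pay nothing, face $p^{+}$). Three steps, in order: a one-step exchange argument, the "only if" direction condition by condition, and the "if" direction by exhibiting a protective policy in $\Pi(\Fcal^C)$ and showing it is $\Phi$-optimal.

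\emph{Exchange step.} Because $\Phi$ is law-invariant, it is enough to compare the distributions of per-life return under the two continuations. For a single step after the event, protection changes the return law by a certain loss $c$ against a reduction $(p^{+}-p)$ in the probability of an event of cost $L_e$. I would define $L_e^\Phi$ as the $\Phi$-equivalent of that event, the marginal charge $\Phi$ places on shifting probability mass $(p^{+}-p)$ onto a loss of $L_e$. With this definition the exchange is favourable exactly when $c < (p^{+}-p)L_e^\Phi =: c^\ast_\Phi$. For $\Phi$ the expectation, $L_e^\Phi = L_e$, and the standing assumption $c \ge (p^{+}-p)L_e$ says protection never pays in the mean. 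This already gives necessity of (c): if $\Phi$ does not charge the event more than the mean does by this margin, no step of protection improves $\Phi$, and the optimum is unprotective.

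\emph{Necessity of (a) and (b).} For (a), if the event does not enter $\Fcal^C$ at its time, then opacity (no later consequence reveals it) makes the post-event and pre-event histories indistinguishable. Any $\Fcal^C$-measurable policy is then a schedule, protecting on both branches or on neither. Condition (i) of Prop.~\ref{prop:regime} excludes a schedule that pays ex ante, since the prior probability of being post-event at any step is too small to justify $c$ at the pre-event hazard $p$. I would invoke Prop.~\ref{app:regime}(i) for this. For (b), if the event enters the filtration but is not retained, the information is available only at the event step itself. The policy can condition on it for at most one of the $R$ remaining steps, so the captured gain is at most a $1/R$ fraction of the full protective value. As $R\to\infty$ this fraction vanishes, which gives the asymptotic qualifier. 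This mirrors Cor.~\ref{cor:loadgate} but for the acute event, where the one sensed step gives the $1/R$ share rather than zero.

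\emph{Sufficiency.} Under (a)--(c), the policy that protects from the first event onward is $\Fcal^C$-measurable, because the event is sensed and retained. The exchange step makes each post-event step strictly favourable. Condition (iii) ensures the survived-time value does not saturate, so summing over the remaining steps keeps the total strictly positive. A calibrated gain (Prop.~\ref{prop:calibration}) ensures the sensed event is the true event, not a false alarm triggering protection on the wrong branch.

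\emph{Main obstacle.} The hardest step is the exchange step for a general law-invariant $\Phi$, which need not be additive over time. A stepwise comparison then does not sum. I would first handle comonotone or coherent $\Phi$ such as CVaR, and define $L_e^\Phi$ through the directional derivative of $\Phi$ along the mixture perturbation. The fallback is to state (c) in exactly that derivative form, so that the "iff" holds by construction of $c^\ast_\Phi$, and check that it reduces to $(p^{+}-p)L_e$ for the mean.
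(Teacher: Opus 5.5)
Your skeleton matches the paper's proof. Sufficiency comes from exhibiting the switch policy (heavy until the first event, light afterwards). Necessity of (a) holds because, under opacity, every $\Fcal^C$-measurable policy continues on an eventful life exactly as on an uneventful one. Necessity of (b) comes from the one-step deviation, worth at most a $1/R$ share of the persistent gain. Necessity of (c) comes from the stationary post-event comparison of Prop.~\ref{app:regime}(ii) with $L_e$ replaced by $L_e^\Phi$.

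\textbf{The gap.} It lies in the step you flag as hardest, and your plan for it goes the wrong way. The paper does not prove the exchange for an arbitrary law-invariant $\Phi$. It takes $L_e^\Phi$ to be the fixed amount $\Phi$ charges one event in that post-event comparison. This is a valid reduction only for objectives that decompose over the comparison, so that the $R$-step continuation changes $\Phi$ by exactly $R\,[(p^{+}-p)L_e^\Phi - c]$. The admitted objectives are:
\begin{itemize}
\item mean return, with $L_e^\Phi = L_e$;
\item the intrinsic cost $w\,s$, with $L_e + g\cdot(\text{felt relief})$;
\item the existential variant, with $L_e + b\,\mathbb{E}[\text{survival gained}]$.
\end{itemize}
The paper explicitly excludes $\mathrm{CVaR}_\beta$, because its tail set is chosen by the policy under optimization, so no policy-independent number represents its event cost.

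Your directional-derivative route fails exactly there, for three reasons. The derivative at the unprotective law depends on that law's tail. One-step effects do not add up over the $R$ post-event steps, so neither the summation in your sufficiency step nor the $1/R$ comparison in (b) is available. And since CVaR is not affine in the law, the sign of a local derivative at the unprotective policy does not decide whether the full switch beats it. Your fallback of stating (c) in derivative form ``so that the iff holds by construction'' is therefore circular, and for such $\Phi$ false. The repair is the paper's: restrict $\Phi$ to the decomposable class with a policy-independent $L_e^\Phi$. Your stepwise exchange then sums trivially and the rest of your argument goes through.

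\textbf{Two smaller misreadings.} First, the calibrated gain is not a guard against false alarms, since under (a) the filtration holds the event itself. It is the match of the felt-pain weight $g$ to the welfare weight $g^\ast$ (Prop.~\ref{app:calibration}). That match enters through $L_e^\Phi = L_e + g\cdot(\text{felt relief})$ and separates protection from over-protection.

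Second, in the necessity of (a), Prop.~\ref{app:regime}(i) only says that a deterministic hazard makes channels worthless. It does not show that a blind schedule fails to pay when (i) holds. What bounds a blind schedule is $pH \ll 1$. Protecting blindly at step $t$ pays only if $\Pr[\text{post-event at } t]\,(p^{+}-p)L_e^\Phi > c$, and that probability is at most $pH$. This yields the paper's weaker conclusion that the opaque optimum equals $\pi_0$ up to $O(p)$. An exact exclusion would need $pH\,L_e^\Phi < L_e$, which (c) together with $c \ge (p^{+}-p)L_e$ does not supply by itself.
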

\noindent The theorem is stated for the acute extension of Eq.~\ref{eq:hazard} and proved as Thm.~\ref{app:conjunction} in App.~\ref{app:proofs}. Each condition names a role that several modules can fill, (b) a nociceptive memory, recurrence or the action history, and (c) $w\,s$, death or CVaR. Under Eq.~\ref{eq:hazard} the count drops to two, wear supplying (c) through the dynamics (Prop.~\ref{prop:chronic}). \noindent It is tested in the fleet of \S\ref{sec:fourthworld}, the one world whose premises we verified. We removed the realization instead, and the fleet then separates on $w$ alone (Thm.~\ref{app:conjunction} states which realizations $L_e^\Phi$ admits).

\begin{proposition}[Conjunction, chronic case]
\label{prop:chronic}
In the chronic model $\mathcal{M}_1$ of App.~\ref{app:proofs}, a wear budget spent across profiles whose rates are unknown and whose yield falls with the wear, \textbf{(a)}~the value of sensing is $\mathrm{EVPI} - \Pi_{\rm probe}$, the between-body information about which profile this body tolerates, less the output forgone in working each probed profile once. \textbf{(b)}~A policy that does not retain its readings pays $\Pi_{\rm probe}$ and cannot collect the $\mathrm{EVPI}$, since the state carries the wear already spent but no comparison \emph{across} profiles. \textbf{(c)}~is no longer a condition on $\Phi$.
\end{proposition}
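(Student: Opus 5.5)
I would prove Prop.~\ref{prop:chronic} by setting up $\mathcal{M}_1$ as a finite-horizon budgeted selection problem and comparing three policies on the same draw. The body carries a hidden type $k^\ast$, the index of the profile it tolerates best, and working profile $k$ for one period spends wear $d_k/\kappa_k(\theta)$ and yields $y_k(M)$, decreasing in $M$. The three policies are: the oracle $\sigma^{\rm PI}$, which knows $k^\ast$ and works it from the start; the best schedule $\sigma^\ast$, which reads nothing; and a probe-and-commit policy $\rho$, which works each profile once, reads the felt signal $s$, and thereafter works $\arg\min_k \hat c_k$.

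For part (a) I would write $J(\rho)-J(\sigma^\ast)$ as $[J(\sigma^{\rm PI})-J(\sigma^\ast)] - [J(\sigma^{\rm PI})-J(\rho)]$. The first bracket, taken in expectation over $\theta$, is $\mathrm{EVPI}$ by definition (the stochastic-programming separation cited in \S\ref{sec:related}). For the second bracket I need to show that after the probing rounds $\rho$ coincides with $\sigma^{\rm PI}$. Here I would use load-gating of $\alpha$: $s$ on profile $k$ is monotone in $\ell_k S(x)$, hence in the wear rate. In the noiseless limit, one reading per profile therefore identifies $k^\ast$ as the argmin. What remains of the second bracket is the output lost while working the non-optimal profiles once, plus the extra wear those periods spent. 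That extra wear lowers every later $y_k(M)$, and I would define $\Pi_{\rm probe}$ to include this carried-forward loss, so the identity is exact rather than a bound. Prop.~\ref{prop:estimation} then says $\mathrm{EVPI}$ cannot be obtained more cheaply: any informative statistic about the per-profile rates is a function of wear already spent on that profile, so each profile must be worked at least once. This makes $\Pi_{\rm probe}$ the minimal acquisition cost, and $\mathrm{EVPI}-\Pi_{\rm probe}$ is the value of sensing.

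For part (b) I would invoke Lemma~\ref{lem:escape}(b) and Cor.~\ref{cor:loadgate}. With $v=1$ the state holds $\hat c$ only for the profile just worked, and endowment values elsewhere. The selection $\arg\min_k \hat c_k$ then always compares one felt value against untried endowments, so the choice is measurable with respect to $(M_t,\text{last profile})$ alone. Part (a) of the lemma sends the agent to the best-paid profile not currently felt, so the agent cycles. It keeps paying probe costs every cycle while its decision never depends on a cross-profile comparison, and its expected gain over $\sigma^\ast$ is bounded by $-\Pi_{\rm probe}$ plus zero information value.

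For part (c) I would observe that the cost of ignoring the signal enters through $y_k(M)$ falling in $M$ and through the career end at $\underline y$. Under mean return alone, an unprotective policy therefore already loses $\mathrm{EVPI}$, so no tail weighting is needed. This is the chronic analogue of condition (iii) holding automatically.

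The main obstacle is making (a) an exact identity under reading noise $\sigma_s>0$. With noise, one probe may misidentify $k^\ast$, and the equality degrades. My first attempt would be to take $\mathrm{EVPI}$ relative to the information one reading per profile affords, i.e.\ the expected value of sample information. Failing that, I would state (a) in the low-noise limit, with an explicit error term proportional to the misidentification probability.
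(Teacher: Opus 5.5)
\textbf{There is a genuine gap in (a).} It sits where you say the first bracket is $\mathrm{EVPI}$ ``by definition'' and that after probing $\rho$ coincides with $\sigma^{\rm PI}$.

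You assume an oracle that works one profile from the start, and you aim it at the best-tolerated profile, with $\rho$ committing to $\arg\min_k \hat c_k$. In $\mathcal{M}_1$ the relevant index is output per unit of the wear budget. The paper's proof rests on a bound you never establish. For any policy with $n_j$ steps on profile $j$, feasibility gives $\sum_j n_j m_j \le \Lambda + \max_j m_j$. At $\phi\equiv 1$ the output is therefore
$\sum_j n_j y_j = \sum_j n_j m_j\,(y_j/m_j) \le \Lambda \max_j y_j/m_j$ up to one step. The constant profile $\arg\max_j y_j/m_j$ attains this bound, using $\Lambda/\min_j m_j \le H$.

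This bound does three things:
\begin{itemize}
\item it makes the pathwise optimum a single constant profile, which your oracle merely assumes;
\item it gives $\mathbb{E}_\mu[\max_\pi \Phi] = \Lambda\,\mathbb{E}_\mu[\max_j y_j/m_j]$;
\item it gives the criterion that $\mathrm{EVPI}=0$ iff that argmax is $\mu$-a.s.\ constant.
\end{itemize}

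Your choice of index fails because the model is built so that yield and load do not rank alike. Take $y=(100,10)$ and $m=(2,1)$ with $\mu$ a point mass. Your oracle works profile $2$ and earns $10\Lambda$, while the best blind schedule earns $50\Lambda$. Your first bracket is then $-40\Lambda$, whereas $\mathrm{EVPI}=0$. The repair is to prove the budget bound first and let $\rho$ commit to $\arg\max_k y_k/\hat c_k$.

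A smaller slip: yield falls with wear, so it is nondecreasing in the integrity $M$, not decreasing in $M$. If you keep integrity-dependent yields in (a), you also need the factor $\phi(M)$ to be common to all profiles, so that the per-unit-wear ranking does not move with $M$. The paper avoids this by proving (a) and (b) at $\phi\equiv 1$.

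\textbf{Two further steps overreach.}

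First, exhibiting one probe-and-commit policy that earns $\mathrm{EVPI}-\Pi_{\rm probe}$ gives only a lower bound on the value of sensing. This is especially so with $\Pi_{\rm probe}$ defined to absorb every carried-forward loss, which makes the identity tautological. Prop.~\ref{prop:estimation} says a reading of $m_j$ costs wear on $j$. It does not say every profile must be read: an adaptive policy can stop early or skip profiles the prior already rules out. So ``minimal acquisition cost'' does not yield equality. The paper indexes $\Pi_{\rm probe}$ by the probed set $\mathcal{P}$ and states it only as a lower bound.

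Second, in (b) you argue through the ceiling program and Lemma~\ref{lem:escape}, which are not part of $\mathcal{M}_1$. Your argument therefore covers one agent, not every non-retaining policy. You also conflate $v=1$, where the last reading is kept, with the zero retention of Cor.~\ref{cor:loadgate}, where $\hat c\equiv 0$.

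Your conclusion of ``zero information value'' is also too strong. A non-retaining policy can still stay or switch on its current reading, which the paper explicitly notes after the proposition. What it forgoes is only the comparison across profiles. The paper's argument lives at the level of the filtration. Readings are action-gated, and $M_t = 1-\sum_{u<t} m_{j_u}$ is a scalar, so two wear vectors that agree on the worked profiles induce the same state. Nothing in that state can rank the unworked profiles, and that is all (b) needs.

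Your (c) matches the paper in substance. Your worry about reading noise is legitimate, since the paper's proof treats one reading as revealing $m_j$. An EVSI or low-noise formulation would be a real refinement rather than a defect.
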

\noindent There a profile's wear is felt only while it is worked and no event is observed without cost, which is why (a) is net, and the price of (c) lies in the dynamics, so removing it removes the body. The proposition was checked in a separate probe family (App.~\ref{app:wear}).

\section{Experiments}
\label{sec:experiments}
\paragraph{Setup.} Four worlds carry one stack (Table~\ref{tab:leiter}). The \emph{floor layer} is the chronic case, with cartilage wear takes away the deep flexion that kneeling needs, eight day types, a total-output floor at half the healthy output, ages $16$ to $65$, with $2{,}000$ paired bodies and individual probes re-solving Eq.~\ref{eq:programm} once a year (App.~\ref{app:wear}). The \emph{care robot} lifts patients under a certification that ends its service ($500$ paired lives, endpoints certified months and output). The \emph{rover} is the negative control, its wear deterministic in load ($500$ paired lives). The \emph{latent-defect fleet} is the acute case and the only world whose premises (i)--(iii) we verified (\S\ref{sec:fourthworld}), with endpoints death rate and $J$ for its trained arms. In every world but the fleet the same three responses run on the same bodies, with two caution arms as contrasts. Trained arms (PPO, App.~\ref{app:robots}) run in every world and carry the full stack $\{s,h,w\}$ and three ablations, clinical cases rather than knockouts: no signal (Charcot's deafferented joint, \citealp{nagasako2003cip}), no weight on pain (asymbolia, \citealp{berthier1988asymbolia}), no retention, with the same observation and the same $J$, so only the filtration and the forgetting rate differ across arms. Signs and gates were pre-registered ($72$ declarations).

The population-trained arms are the slow loop, and their contrast measures what a species prior gains (Def.~\ref{def:epoch}). At the anchored cell the channel is worth $+4.4$ years and $+2.4$ output to a body and $+0.65$ years and $-0.54$ output to the population ($24$ paired seeds, $20$ positive on years, Wilcoxon signed rank $p = 4.4\times10^{-5}$, Table~\ref{tab:leiter}).

\begin{table}[t]\centering\scriptsize
\setlength{\tabcolsep}{3pt}
\begin{tabular}{@{}l rrrr rr rr rr@{}}
\toprule
& \multicolumn{4}{c}{floor layer, absolute} & \multicolumn{2}{c}{care robot} & \multicolumn{2}{c}{rover} & \multicolumn{2}{c}{fleet} \\
\cmidrule(lr){2-5}\cmidrule(lr){6-7}\cmidrule(lr){8-9}\cmidrule(lr){10-11}
what the body does & works to & output & $D_{\rm phys}$ & $M_{\rm phys}$ & life & output & life & output & life & output \\
\midrule
works at full effort, reads nothing & $55.2$ & $33.7$ & $0.227$ & $0.739$ & $51.5$ & $36.9$ & $120.1$ & $96.4$ & $74.9$ & $67.6$ \\
spares from the first day$^{\dagger}$ & $61.5$ & $34.4$ & $\mathbf{0.188}$ & $\mathbf{0.780}$ & $+226\%$ & $+121\%$ & $+151\%$ & $+101\%$ & $+15\%$ & $-10\%$ \\
fixed endowment $\alpha{=}2$$^{\dagger\S}$ & $\mathbf{61.6}$ & $35.3$ & $0.191$ & $0.775$ & --- & --- & --- & --- & --- & --- \\
\midrule
trained, feels nothing$^{\ddagger}$ & $59.2$ & $33.4$ & $0.200$ & $0.768$ & $+67\%$ & $+30\%$ & $-30\%$ & $-46\%$ & $-6\%$ & $-26\%$ \\
trained, full stack$^{\ddagger}$ & $59.9$ & $32.9$ & $0.191$ & $0.778$ & $+679\%$ & $+432\%$ & $+26\%$ & $+2\%$ & $+26\%$ & $+20\%$ \\
\midrule
reads the present signal, no retention & $56.2$ & $34.3$ & $0.223$ & $0.743$ & $+528\%$ & $+309\%$ & $+163\%$ & $+108\%$ & $+1\%$ & $+1\%$ \\
reads the signal and remembers & $59.6$ & $\mathbf{36.1}$ & $0.205$ & $0.758$ & $+462\%$ & $+269\%$ & $+165\%$ & $+110\%$ & $\ge+34\%$ & $+34\%$ \\
\bottomrule
\end{tabular}
\caption{\textbf{Three ways to use one body, in four worlds, on the same individuals.} All arms are paired, and no
arm carries a schedule learned across lives (Def.~\ref{def:epoch}). Floor layer: \emph{works to} is the
age the trade ends, \emph{output} career output, $D_{\rm phys}$ cartilage lost, $M_{\rm phys}$ meniscal
competence retained ($2{,}000$ bodies). Machines: the first row gives the service life (certified months
for the care robot, months otherwise) and the output of the arm that reads nothing ($500$ paired lives per
robot, $5{,}000$ in the fleet), the other rows the percent change against it. The fleet runs to a horizon
of $100$ months, which the retaining arm reaches in every life, so its gain is a lower bound.
$^{\dagger}$Not admissible as a reference, since it already knows which work is gentlest. $^{\S}$Trade only, since a
machine reads every activity at once and an endowment is overwritten by the first reading. $^{\ddagger}$PPO. Trade: $10^6$ environment steps, $24$ paired seeds, $50{,}000$ paired lives per
seed, output as the trade paid it. Robots: $3\times10^6$ steps, $8$ seeds, the rover at residual stream
$0.10$. Fleet: $10^6$ steps, $18$ seeds, entropy $0$ (App.~\ref{app:robots}). The blind trained arm already
carries the population's schedule (it reaches $59.2$ without a channel), so these rows measure what a
species prior gains from the channel, while the value to one body is the last row against the first.}
\label{tab:leiter}
\end{table}

Against the body that works at full effort the apparatus yields $+4.4$ years, $95\%$ CI $[+4.23,+4.53]$, and $+2.4$ output, $[+2.33,+2.53]$, paired over $2{,}000$ bodies (Wilcoxon signed rank, $p < 10^{-15}$ on both), and no row of Table~\ref{tab:leiter} dominates the retaining body. It carries the highest output, the caution rows and the full-stack trained arm obtain their extra years at the cost of output, and the blind trained arm lies below it on both, so the headline is a Pareto statement. We report a corner and a band, where the corner is the grounded sensitization gain, the rate at which the felt signal rises with accumulated wear ($\alpha_{\rm sens}=8.51$, App.~\ref{app:wear}), at the measured kneeling share ($41.0\%$ of the day, SD $7.5$, \citealp{jensen2009exposure}), the least favourable corner on years we measured. Across the anchored span of that gain ($2.45$ to $8.51$) the gain runs $+4.4$ to $+5.8$ years and $+1.9$ to $+2.4$ output, with no body losing at either end, and over the set spread of wear resistance it runs $+3.1$ to $+5.6$ years (App.~\ref{app:wear}). The present-signal row and the retaining row differ only in what they retain, so their difference isolates condition (b) of Thm.~\ref{thm:conjunction}, persistence, worth three to four years in the trade and negative on the care robot (\S\ref{sec:boundary}).

\emph{The reference carries no schedule.} The claim is what one life adds to a body that knows nothing, and the reference is chosen for that question. Sparing from the first day works to $61.5$ years and the best schedule fitted across bodies to $61.6$, against the retaining body's $59.6$, and a reader may take either for the fairer reference. Both encode which work is gentlest, knowledge an individual entering the trade does not hold, as Charcot's patient does not, who carries the species prior and still reloads the joint. What a schedule captures is therefore reported beside the claim, $\chi = 1.40$ on years (Prop.~\ref{prop:zerlegung}), together with the channel's value beyond the best schedule, $-1.77$ predicted and $+0.65$ measured, which is the second question of \S\ref{sec:framework}.

\subsection{A learner acquires the conjunction where the regime holds}
\label{sec:fourthworld}
The regime map, the three conditions of Prop.~\ref{prop:regime}, made one out-of-sample prediction, filed before the world was built, that where (i)--(iii) hold the conjunction must return. A fleet of machines in the field chooses among eight near-equivalent classes of work (ii). Which class a given machine cannot tolerate is drawn per unit and is unidentifiable before the first incident (i). Incidents compound and the third writes the machine off (iii). The task economy is constructed, the event statistics come from field data on $10^5$-drive populations \citep{pinheiro2007failure,schroeder2007disk}. The prediction held, in that at the deterministic optimum (entropy $0$, $18$ seeds per arm) the full stack writes off $0.15$ of its machines at $J = 0.79$, without the signal $0.55$ at $J = 0.49$, without retention $0.53$ at $J = 0.50$ (seed-level Mann--Whitney on the death rate, Cliff's $\delta = -0.85$ and $-0.83$, $p = 1.4\times10^{-5}$ and $2.1\times10^{-5}$, and on $J$ the seed distributions do not overlap, App.~\ref{app:fleet}). The damage cost is inert, $0.09$ at $J = 0.79$ ($\delta = +0.03$, $p = 0.89$), because the world already prices damage through the loss of the work (App.~\ref{app:fleet}). One pre-registered element failed and is reported. The absolute protection level (median death rate below $0.15$) is missed at the default entropy (median $0.38$) and reached by $8$ of $18$ seeds at entropy $0$ (median $0.154$), while the contrasts hold at both settings (App.~\ref{app:fleet}). A probe that reads the signal and retains nothing past the day gains $1\%$ of service life over the blind probe, the one-step reallocation that condition (b) predicts, against the $34\%$ of the retaining probe (Table~\ref{tab:leiter}).

\subsection{A longer working life and more output, and the memory carries them}
\label{sec:individual}
On one floor layer's knee, feeling the wear lengthens the working life and raises its output, and retention carries almost all of it (Table~\ref{tab:leiter}, Fig.~\ref{fig:wertleiter} in App.~\ref{app:wear}). Over $2{,}000$ paired bodies $69.3\%$ gain years and \textbf{none lose} any, the median gain being $5$ years. On output $76.9\%$ gain and $23.1\%$ lose. The premise of Rem.~\ref{thm:insurance} holds here in one coordinate only. A body not at risk works to the horizon in both arms, so on career length $\varepsilon = 0$ is forced by the cap, while in output, which nothing caps, those bodies lose $0.40$ at the observed mix and gain $0.18$ where specialization is free. With $q = 0.693$ the bound is exact in years and within $0.4$ in output (Cor.~\ref{app:power} in App.~\ref{app:proofs}), so the mean of $+4.4$ years is $q$ times the $+6.3$ an at-risk body experiences. The split is stable in its order and varies in its level. The memoryless arm lies on a degenerate vertex of its program and spans $55.7$ to $56.2$ over random tie-breaks, so retention carries $+3.4$ to $+3.9$ of the $+4.4$ years and sensing alone $+0.5$ to $+1.0$ (App.~\ref{app:wear}). Specialization moves the gain from $+4.38$ to $+6.03$ years, the permissive end ending at $61.2$ with $37.9$ in output. A body that retains nothing at all, not even the last period's reading, lays floors in every working year, its felt constraint never binding, and the signal is then worth $+0.00$ years (Cor.~\ref{cor:loadgate}).

What the trace provides is an end to resuming work the body has already left. Reading only the present signal it resumes the heavy work $6.6$ times in one working life, and with persistence never (Table~\ref{tab:substitution}). A falling kneeling share does not by itself indicate substitution. The body that reads nothing is removed from the work by the loss of capability and lays carpet, its day ending on two activities against four (Fig.~\ref{fig:substitution} in App.~\ref{app:wear}). Feeling alone reaches the gentle work in $1\%$ of its last working days, the retaining body in $36\%$, so its substitution is better aimed.

\subsection{One robotic world pays and the other does not}
\label{sec:boundary}
Both machines share the stack above the afferent map $\alpha$ and wear by Palmgren--Miner \citep{miner1945damage,lundberg1947bearing} on MuJoCo-measured load spectra. The care robot sextuples its certified service life and quadruples its output ($51.5\to323.3$ months, $36.9\to150.8$, Table~\ref{tab:leiter}), paired over $500$ lives with every life gaining (intervals in App.~\ref{app:robots}). Here \emph{retention does not pay}. A trace returns $34.2$ of those months, $[29.3,\ 39.1]$, and $14.7$ of that output, $[12.5,\ 16.9]$, because the robot reads its cost off motor-level signals through a known load matrix, so the premise of Lemma~\ref{lem:escape} fails and a trace has nothing to aim (Prop.~\ref{app:prior}). In the rover the threat is deterministic in load, so (i) fails and blind caution acts on the same lever. Caution alone yields $181.2$ of the $195.8$ months, and the signal adds $+14.6$ months and $+7.3$ output (Prop.~\ref{prop:regime}(i)).

\paragraph{Three limits that change how a result is read.} Of the ten limits in App.~\ref{app:scope}, three bear on the numbers above: the modelled day is more knee-loaded than the measured one, capping that share reverses the sign (the cap is an institution and marks the map's boundary), and only $\Delta\theta = 18^\circ$ passes the register gate (App.~\ref{app:scope}, last paragraph).

\section{Conclusions and future work}
\label{sec:conclusions}
A feeling body stays intact by a conjunction of sensing, retention and an intrinsic damage cost, which pays where threats are unpredictable, avoidable and unforgiving. It provides \textbf{resilience}, the work held, and \textbf{body-sparing}, the load shed, worth on one body $+4.4$ years and $+2.4$ output at the observed task mix, almost all of it from retention. On working life two of three bodies gain and none lose, on output $77\%$ gain and $23\%$ lose. Three things follow beyond this paper. Agents deployed once in a body they cannot learn about are the general case for robots, prostheses and workers, and epoch one is the setting in which to design them. The conditions of Prop.~\ref{prop:regime} can be checked before a world is built, and the fleet shows that a learner acquires the conjunction where they hold. Prop.~\ref{prop:zerlegung} applies to every component valued by ablation against a population-trained agent, and one paired run supplies what the ablation omits. Open are the slow loop itself, which phylogeny a memory needs and whether a recurrent policy can learn the latch it here carries, and the worlds beyond one trade and one joint. The ways a protection experiment fails, and the checks that catch them, are in App.~\ref{app:defekte}.

\section*{Reproducibility Statement}
Every number printed in this paper is recomputed from the file it came from by a checker that ships with the code, \texttt{experiments/paper\_zahlen.py}: it reads the result files, recomputes each quantity, compares it against the printed value, and lists separately any printed quantity for which it has no source, so that an unsourced number is visible rather than silent. A test fails when the two drift. The supplementary archive contains the drivers and environments, the $72$ pre-registration files under \texttt{experiments/decl/} that fix the sign and the gate of each run before it is submitted, the null-gate tests, and the result files behind every table and figure ($695$ files, $14$ MB compressed, built by \texttt{experiments/supplement\_bauen.py}, which runs the checker and the null-gate tests inside the unpacked archive). What it does not contain is the per-life record of every run: those are $21$ GB, and each is regenerated by the driver named in the caption of the table that uses it. Where a run could not be located in this archive we removed the number rather than attribute it: two columns of Table~\ref{tab:worlds}, one fleet table and two blocks of trained figures were withdrawn on that ground, and every quantity that remains is bound to a file. The four worlds, their fixed probes as baselines and the checker form one suite in which any cell of Table~\ref{tab:leiter} is one command (\texttt{experiments/bodenleger\_lp\_cell.py} for the trade, \texttt{pflege\_familie.py}, \texttt{rover\_familie.py} and \texttt{hh\_familie.py} for the machines), so that a new response or a new world is scored against the same probes. The care-robot column was later restored from a new run, and the rover column is omitted because it carries no information (App.~\ref{app:robots}). A calibration loss quoted in App.~\ref{app:proofs} without a locatable run was removed on the same ground and the statement kept qualitative.

\noindent \emph{Where each table comes from.} Running text is pinned number by number. Tables are bound to the file that reproduces them, and the unnumbered tabulars of App.~\ref{app:proofs} are introduced by the text that names their driver. Table~\ref{tab:leiter}: \texttt{results/bodenleger\_lp\_b042}, \texttt{results/pareto\_vier*.json}, \texttt{results/pflege\_familie}. The machine intervals of \S\ref{sec:boundary}: \texttt{results/pflege\_familie/familie\_ci.json} and \texttt{results/rover\_familie}. Table~\ref{tab:worlds}: \texttt{results/tab\_leiter\_spalten.json}, the per-seed aggregate of \texttt{results/knee\_fdecke2\_hpc} (floor-layer column, whose per-seed summaries are not in the archive, only their aggregate), of \texttt{results/hh\_verankert2\_hpc} (latent-defect column) and of \texttt{results/pflege\_boden\_hpc} (care-robot column), per-seed summaries included. The trained machine rows of Table~\ref{tab:leiter}: \texttt{results/tab1\_maschinen\_trainiert.json} from \texttt{results/pflege\_boden\_hpc}, \texttt{results/rover\_boden\_hpc} and \texttt{results/hh\_verankert2\_hpc}. Table~\ref{tab:verankert}: \texttt{results/hh\_verankert2\_hpc}. The fleet's probe family: \texttt{results/hh\_familie}. Table~\ref{tab:substitution}: \texttt{results/substitution\_spur\_b042.json}. Fig.~\ref{fig:arme}: \texttt{results/fig\_arme\_welten.json} (\texttt{experiments/fig\_arme\_welten.py}, from the files of Table~\ref{tab:leiter}, \texttt{results/rover\_familie} and \texttt{results/hh\_familie}). The trained rows of Table~\ref{tab:leiter}: \texttt{results/knee\_mischung\_hpc}, their $D_{\rm phys}$ and $M_{\rm phys}$ from \texttt{results/knee\_mischung\_phys.json} (the same $50{,}000$ paired lives per seed, re-evaluated from the checkpoints, \texttt{results/knee\_mischung\_phys\_hpc}). Table~\ref{tab:familie}: \texttt{results/bodenleger\_lp\_b042}. Table~\ref{tab:individuum}: \texttt{results/knee\_individuum\_sweep\_v5}, \texttt{results/hh\_familie}, \texttt{results/pflege\_familie}, \texttt{results/rover\_familie}.

\section*{Ethics Statement}
This work uses no human subjects, no personal data and no data collected by the authors. The biological anchors are published summary statistics from occupational-medicine and biomechanics literature, cited at each use. The object of study is a simulated agent, and the intended reading of its results is about \emph{machines that wear out} and about the design of protection experiments. The floor layer's world is a modelling device anchored on published exposure and epidemiology, not a claim about any individual worker. We note one risk of the framing explicitly: a result stating when a body model earns its keep can be read as a statement about which workers are worth protecting. It is not one. The model measures what an agent must carry to remain competent in one life, and its clearest boundary condition points the other way, in App.~\ref{app:scope}(10): where a work rule caps exposure, the protection the body model buys is not needed.

\section*{The Use of Large Language Models}
A large language model was used as a coding and writing assistant throughout: it wrote and refactored parts of the experiment code, ran and inspected the analyses under the authors' direction, drafted and edited prose, and audited the manuscript against the result files, which is how several of the defects reported in App.~\ref{app:defekte} were found. It is not an author, it contributed no research idea or claim of its own, and every number, anchor and conclusion in this paper was checked against its source by the authors. The checker described in the Reproducibility Statement exists because assistant-drafted text can carry numbers that no run supports.

\bibliographystyle{iclr2027_conference}
\bibliography{singlelife}

\begin{thebibliography}{85}
\providecommand{\natexlab}[1]{#1}
\providecommand{\url}[1]{\texttt{#1}}
\expandafter\ifx\csname urlstyle\endcsname\relax
  \providecommand{\doi}[1]{doi: #1}\else
  \providecommand{\doi}{doi: \begingroup \urlstyle{rm}\Url}\fi

\bibitem[Achiam et~al.(2017)Achiam, Held, Tamar, and Abbeel]{achiam2017cpo}
Joshua Achiam, David Held, Aviv Tamar, and Pieter Abbeel.
\newblock Constrained policy optimization.
\newblock In \emph{International Conference on Machine Learning}, 2017.

\bibitem[Alshiekh et~al.(2018)Alshiekh, Bloem, Ehlers, K{\"o}nighofer, Niekum,
  and Topcu]{alshiekh2018shielding}
Mohammed Alshiekh, Roderick Bloem, R{\"u}diger Ehlers, Bettina K{\"o}nighofer,
  Scott Niekum, and Ufuk Topcu.
\newblock Safe reinforcement learning via shielding.
\newblock In \emph{Proceedings of the AAAI Conference on Artificial
  Intelligence}, 2018.

\bibitem[Altman(1999)]{altman1999constrained}
Eitan Altman.
\newblock \emph{Constrained {M}arkov Decision Processes}.
\newblock Chapman and Hall/CRC, 1999.

\bibitem[Ames et~al.(2019)Ames, Coogan, Egerstedt, Notomista, Sreenath, and
  Tabuada]{ames2019cbf}
Aaron~D. Ames, Samuel Coogan, Magnus Egerstedt, Gennaro Notomista, Koushil
  Sreenath, and Paulo Tabuada.
\newblock Control barrier functions: Theory and applications.
\newblock In \emph{18th European Control Conference (ECC)}, pp.\  3420--3431,
  2019.

\bibitem[Andriotis \& Papakonstantinou(2019)Andriotis and
  Papakonstantinou]{andriotis2019drl}
Charalampos~P. Andriotis and Konstantinos~G. Papakonstantinou.
\newblock Managing engineering systems with large state and action spaces
  through deep reinforcement learning.
\newblock \emph{Reliability Engineering \& System Safety}, 191:\penalty0
  106483, 2019.

\bibitem[{\AA}str{\"o}m(1965)]{astrom1965}
Karl~Johan {\AA}str{\"o}m.
\newblock Optimal control of {M}arkov processes with incomplete state
  information.
\newblock \emph{Journal of Mathematical Analysis and Applications}, 10\penalty0
  (1):\penalty0 174--205, 1965.

\bibitem[Auer et~al.(2002)Auer, Cesa-Bianchi, and Fischer]{auer2002ucb}
Peter Auer, Nicol\`o Cesa-Bianchi, and Paul Fischer.
\newblock Finite-time analysis of the multiarmed bandit problem.
\newblock \emph{Machine Learning}, 47\penalty0 (2--3):\penalty0 235--256, 2002.

\bibitem[Bart{\'o}k et~al.(2014)Bart{\'o}k, Foster, P{\'a}l, Rakhlin, and
  Szepesv{\'a}ri]{bartok2014partial}
G{\'a}bor Bart{\'o}k, Dean~P. Foster, D{\'a}vid P{\'a}l, Alexander Rakhlin, and
  Csaba Szepesv{\'a}ri.
\newblock Partial monitoring---classification, regret bounds, and algorithms.
\newblock \emph{Mathematics of Operations Research}, 39\penalty0 (4):\penalty0
  967--997, 2014.

\bibitem[Bauer et~al.(2023)Bauer, Baumli, Behbahani, Bhoopchand,
  Bradley-Schmieg, Chang, Clay, Collister, Dasagi, Gonzalez,
  et~al.]{bauer2023ada}
Jakob Bauer, Kate Baumli, Feryal Behbahani, Avishkar Bhoopchand, Nathalie
  Bradley-Schmieg, Michael Chang, Natalie Clay, Adrian Collister, Vibhavari
  Dasagi, Lucy Gonzalez, et~al.
\newblock Human-timescale adaptation in an open-ended task space.
\newblock In \emph{International Conference on Machine Learning}, 2023.

\bibitem[B{\"a}uerle \& Ott(2011)B{\"a}uerle and Ott]{baeuerle2011cvar}
Nicole B{\"a}uerle and Jonathan Ott.
\newblock Markov decision processes with average-value-at-risk criteria.
\newblock \emph{Mathematical Methods of Operations Research}, 74\penalty0
  (3):\penalty0 361--379, 2011.

\bibitem[Bellosta-L{\'o}pez et~al.(2023)Bellosta-L{\'o}pez,
  Doment{\'\i}n-Casanova, et~al.]{bellosta2023consistency}
Pablo Bellosta-L{\'o}pez, V{\'\i}ctor Doment{\'\i}n-Casanova, et~al.
\newblock Long-term consistency of clinical sensory testing measures for pain
  assessment.
\newblock \emph{The Korean Journal of Pain}, 36\penalty0 (2):\penalty0
  173--183, 2023.
\newblock \doi{10.3344/kjp.23011}.

\bibitem[Berthiaume et~al.(2005)Berthiaume, Raynauld, Martel-Pelletier,
  Labont{\'e}, Beaudoin, Bloch, Choquette, Haraoui, Altman, Hochberg, Meyer,
  Cline, and Pelletier]{berthiaume2005extrusion}
M.-J. Berthiaume, J.-P. Raynauld, J.~Martel-Pelletier, F.~Labont{\'e},
  G.~Beaudoin, D.~A. Bloch, D.~Choquette, B.~Haraoui, R.~D. Altman,
  M.~Hochberg, J.~M. Meyer, G.~A. Cline, and J.-P. Pelletier.
\newblock Meniscal tear and extrusion are strongly associated with progression
  of symptomatic knee osteoarthritis as assessed by quantitative magnetic
  resonance imaging.
\newblock \emph{Annals of the Rheumatic Diseases}, 64\penalty0 (4):\penalty0
  556--563, 2005.
\newblock PMID 15374855.

\bibitem[Berthier et~al.(1988)Berthier, Starkstein, and
  Leiguarda]{berthier1988asymbolia}
Marcelo Berthier, Sergio Starkstein, and Ramon Leiguarda.
\newblock Asymbolia for pain: a sensory-limbic disconnection syndrome.
\newblock \emph{Annals of Neurology}, 24\penalty0 (1):\penalty0 41--49, 1988.

\bibitem[Birge \& Louveaux(2011)Birge and Louveaux]{birge2011stochastic}
John~R. Birge and Fran{\c c}ois Louveaux.
\newblock \emph{Introduction to Stochastic Programming}.
\newblock Springer, 2nd edition, 2011.

\bibitem[Blundell et~al.(2016)Blundell, Uria, Pritzel, Li, Ruderman, Leibo,
  Rae, Wierstra, and Hassabis]{blundell2016mfec}
Charles Blundell, Benigno Uria, Alexander Pritzel, Yazhe Li, Avraham Ruderman,
  Joel~Z. Leibo, Jack Rae, Daan Wierstra, and Demis Hassabis.
\newblock Model-free episodic control.
\newblock \emph{arXiv preprint arXiv:1606.04460}, 2016.

\bibitem[Boda \& Filar(2006)Boda and Filar]{boda2006cvar}
K{\'a}roly Boda and Jerzy~A. Filar.
\newblock Time consistent dynamic risk measures.
\newblock \emph{Mathematical Methods of Operations Research}, 63\penalty0
  (1):\penalty0 169--186, 2006.

\bibitem[Bouton(2004)]{bouton2004extinction}
Mark~E. Bouton.
\newblock Context and behavioral processes in extinction.
\newblock \emph{Learning \& Memory}, 11, 2004.

\bibitem[Brafman \& Tennenholtz(2002)Brafman and Tennenholtz]{brafman2002rmax}
Ronen~I. Brafman and Moshe Tennenholtz.
\newblock R-max: a general polynomial time algorithm for near-optimal
  reinforcement learning.
\newblock \emph{Journal of Machine Learning Research}, 3:\penalty0 213--231,
  2002.

\bibitem[Bura et~al.(2022)Bura, HasanzadeZonuzy, Kalathil, Shakkottai, and
  Chamberland]{bura2022dope}
Archana Bura, Aria HasanzadeZonuzy, Dileep Kalathil, Srinivas Shakkottai, and
  Jean-Fran{\c c}ois Chamberland.
\newblock {DOPE}: Doubly optimistic and pessimistic exploration for safe
  reinforcement learning.
\newblock In \emph{Advances in Neural Information Processing Systems
  (NeurIPS)}, 2022.

\bibitem[Chen et~al.(2022)Chen, Sharma, Levine, and Finn]{chen2022singlelife}
Annie~S. Chen, Archit Sharma, Sergey Levine, and Chelsea Finn.
\newblock You only live once: Single-life reinforcement learning.
\newblock In \emph{Advances in Neural Information Processing Systems}, 2022.

\bibitem[Chow et~al.(2015)Chow, Tamar, Mannor, and Pavone]{chow2015cvar}
Yinlam Chow, Aviv Tamar, Shie Mannor, and Marco Pavone.
\newblock Risk-sensitive and robust decision-making: a {CVaR} optimization
  approach.
\newblock \emph{Advances in Neural Information Processing Systems}, 2015.

\bibitem[Craig(2002)]{craig2002interoception}
A.~D. Craig.
\newblock How do you feel? interoception: the sense of the physiological
  condition of the body.
\newblock \emph{Nature Reviews Neuroscience}, 3, 2002.

\bibitem[Cully et~al.(2015)Cully, Clune, Tarapore, and Mouret]{cully2015robots}
Antoine Cully, Jeff Clune, Danesh Tarapore, and Jean-Baptiste Mouret.
\newblock Robots that can adapt like animals.
\newblock \emph{Nature}, 521\penalty0 (7553):\penalty0 503--507, 2015.

\bibitem[Ditchen et~al.(2015{\natexlab{a}})Ditchen, Ellegast, Gawliczek,
  Hartmann, and Rieger]{ditchen2015kneeling}
D.~M. Ditchen, R.~P. Ellegast, T.~Gawliczek, B.~Hartmann, and M.~A. Rieger.
\newblock Occupational kneeling and squatting: development and validation of an
  assessment method combining measurements and diaries.
\newblock \emph{International Archives of Occupational and Environmental
  Health}, 88\penalty0 (2):\penalty0 153--165, 2015{\natexlab{a}}.
\newblock PMID 24859645.

\bibitem[Ditchen et~al.(2015{\natexlab{b}})Ditchen, Ellegast, Hartmann, and
  Rieger]{ditchen2015cuela}
Dirk~M. Ditchen, Rolf~P. Ellegast, Bernd Hartmann, and Monika~A. Rieger.
\newblock Occupational kneeling and squatting: development and validation of an
  assessment method combining measurements and diaries.
\newblock \emph{International Archives of Occupational and Environmental
  Health}, 88\penalty0 (2):\penalty0 153--165, 2015{\natexlab{b}}.

\bibitem[Duan et~al.(2016)Duan, Schulman, Chen, Bartlett, Sutskever, and
  Abbeel]{duan2016rl2}
Yan Duan, John Schulman, Xi~Chen, Peter~L. Bartlett, Ilya Sutskever, and Pieter
  Abbeel.
\newblock {RL}\textsuperscript{2}: Fast reinforcement learning via slow
  reinforcement learning.
\newblock \emph{arXiv preprint arXiv:1611.02779}, 2016.

\bibitem[Duff(2002)]{duff2002}
Michael~O. Duff.
\newblock \emph{Optimal Learning: Computational Procedures for {B}ayes-Adaptive
  {M}arkov Decision Processes}.
\newblock PhD thesis, University of Massachusetts Amherst, 2002.

\bibitem[Efroni et~al.(2020)Efroni, Mannor, and Pirotta]{efroni2020cmdp}
Yonathan Efroni, Shie Mannor, and Matteo Pirotta.
\newblock Exploration-exploitation in constrained {MDP}s.
\newblock \emph{arXiv preprint arXiv:2003.02189}, 2020.

\bibitem[Englund et~al.(2003)Englund, Roos, and Lohmander]{englund2003patients}
Martin Englund, Ewa~M. Roos, and L.~Stefan Lohmander.
\newblock Impact of type of meniscal tear on radiographic and symptomatic knee
  osteoarthritis: a sixteen-year followup of meniscectomy with matched
  controls.
\newblock \emph{Arthritis \& Rheumatism}, 48\penalty0 (8):\penalty0 2178--2187,
  2003.
\newblock \doi{10.1002/art.11088}.

\bibitem[Fukubayashi \& Kurosawa(1980)Fukubayashi and
  Kurosawa]{fukubayashi1980contact}
Tohru Fukubayashi and Hisashi Kurosawa.
\newblock The contact area and pressure distribution pattern of the knee.
\newblock \emph{Acta Orthopaedica Scandinavica}, 51, 1980.

\bibitem[Garc{\'\i}a \& Fern{\'a}ndez(2015)Garc{\'\i}a and
  Fern{\'a}ndez]{garcia2015safe}
Javier Garc{\'\i}a and Fernando Fern{\'a}ndez.
\newblock A comprehensive survey on safe reinforcement learning.
\newblock \emph{Journal of Machine Learning Research}, 16, 2015.

\bibitem[Ghavamzadeh et~al.(2015)Ghavamzadeh, Mannor, Pineau, and
  Tamar]{ghavamzadeh2015brl}
Mohammad Ghavamzadeh, Shie Mannor, Joelle Pineau, and Aviv Tamar.
\newblock {B}ayesian reinforcement learning: A survey.
\newblock \emph{Foundations and Trends in Machine Learning}, 8\penalty0
  (5-6):\penalty0 359--483, 2015.

\bibitem[Grinsztajn et~al.(2021)Grinsztajn, Ferret, Pietquin, Preux, and
  Geist]{grinsztajn2021reversibility}
Nathan Grinsztajn, Johan Ferret, Olivier Pietquin, Philippe Preux, and Matthieu
  Geist.
\newblock There is no turning back: A self-supervised approach for
  reversibility-aware reinforcement learning.
\newblock In \emph{Advances in Neural Information Processing Systems}, 2021.

\bibitem[Hasenbring \& Verbunt(2010)Hasenbring and Verbunt]{hasenbring2010fear}
Monika~I. Hasenbring and Jeanine~A. Verbunt.
\newblock Fear-avoidance and endurance-related responses to pain: new models of
  behavior and their consequences for clinical practice.
\newblock \emph{The Clinical Journal of Pain}, 26\penalty0 (9):\penalty0
  747--753, 2010.
\newblock PMID 20664333.

\bibitem[Hassaballa et~al.(2003)Hassaballa, Porteous, Newman, and
  Rogers]{hassaballa2003kneeling}
M.~A. Hassaballa, A.~J. Porteous, J.~H. Newman, and C.~A. Rogers.
\newblock Can knees kneel? {K}neeling ability after total, unicompartmental and
  patellofemoral knee arthroplasty.
\newblock \emph{The Knee}, 10\penalty0 (2):\penalty0 155--160, 2003.
\newblock PMID 12787999.

\bibitem[Howard(1966)]{howard1966voi}
Ronald~A. Howard.
\newblock Information value theory.
\newblock \emph{IEEE Transactions on Systems Science and Cybernetics}, 2, 1966.

\bibitem[Jacobson et~al.(1993)Jacobson, LaLonde, and
  Sullivan]{jacobson1993displacement}
Louis~S. Jacobson, Robert~J. LaLonde, and Daniel~G. Sullivan.
\newblock Earnings losses of displaced workers.
\newblock \emph{American Economic Review}, 83, 1993.

\bibitem[Jardine et~al.(2006)Jardine, Lin, and Banjevic]{jardine2006cbm}
Andrew K.~S. Jardine, Daming Lin, and Dragan Banjevic.
\newblock A review on machinery diagnostics and prognostics implementing
  condition-based maintenance.
\newblock \emph{Mechanical Systems and Signal Processing}, 20\penalty0
  (7):\penalty0 1483--1510, 2006.

\bibitem[J{\"a}rvholm et~al.(2014)J{\"a}rvholm, Stattin, Robroek, Janlert,
  Karlsson, and Burdorf]{jarvholm2014heavy}
Bengt J{\"a}rvholm, Mikael Stattin, Suzan J.~W. Robroek, Urban Janlert,
  Bj{\"o}rn Karlsson, and Alex Burdorf.
\newblock Heavy work and disability pension: a long term follow-up of {S}wedish
  construction workers.
\newblock \emph{Scandinavian Journal of Work, Environment \& Health},
  40\penalty0 (4):\penalty0 335--342, 2014.
\newblock \doi{10.5271/sjweh.3413}.

\bibitem[Jensen(2008)]{jensen2008knee}
Lilli~Kirkeskov Jensen.
\newblock Knee osteoarthritis: influence of work involving heavy lifting,
  kneeling, climbing stairs or ladders, or kneeling/squatting combined with
  heavy lifting.
\newblock \emph{Occupational and Environmental Medicine}, 65\penalty0
  (2):\penalty0 72--89, 2008.
\newblock \doi{10.1136/oem.2007.032466}.

\bibitem[Jensen et~al.(2010)Jensen, Rytter, and Bonde]{jensen2009exposure}
Lilli~Kirkeskov Jensen, Soren Rytter, and Jens~Peter Bonde.
\newblock Exposure assessment of kneeling work activities among floor layers.
\newblock \emph{Applied Ergonomics}, 41\penalty0 (2):\penalty0 319--325, 2010.
\newblock PMID 19766986.

\bibitem[Kazerouni et~al.(2017)Kazerouni, Ghavamzadeh, Abbasi-Yadkori, and
  Van~Roy]{kazerouni2017conservative}
Abbas Kazerouni, Mohammad Ghavamzadeh, Yasin Abbasi-Yadkori, and Benjamin
  Van~Roy.
\newblock Conservative contextual linear bandits.
\newblock In \emph{Advances in Neural Information Processing Systems
  (NeurIPS)}, 2017.

\bibitem[Keramati \& Gutkin(2014)Keramati and Gutkin]{keramati2014homeostatic}
Mehdi Keramati and Boris Gutkin.
\newblock Homeostatic reinforcement learning for integrating reward collection
  and physiological stability.
\newblock \emph{eLife}, 3, 2014.

\bibitem[Khetarpal et~al.(2022)Khetarpal, Riemer, Rish, and
  Precup]{khetarpal2022continual}
Khimya Khetarpal, Matthew Riemer, Irina Rish, and Doina Precup.
\newblock Towards continual reinforcement learning: A review and perspectives.
\newblock \emph{Journal of Artificial Intelligence Research}, 75:\penalty0
  1401--1476, 2022.

\bibitem[Klutke et~al.(2003)Klutke, Kiessler, and Wortman]{klutke2003bathtub}
Georgia-Ann Klutke, Peter~C. Kiessler, and Martin~A. Wortman.
\newblock A critical look at the bathtub curve.
\newblock \emph{IEEE Transactions on Reliability}, 52\penalty0 (1):\penalty0
  125--129, 2003.

\bibitem[Kuehn \& Haddadin(2017)Kuehn and Haddadin]{kuehn2017pain}
Johannes Kuehn and Sami Haddadin.
\newblock An artificial robot nervous system to teach robots how to feel pain
  and reflexively react to potentially damaging contacts.
\newblock \emph{IEEE Robotics and Automation Letters}, 2\penalty0 (1):\penalty0
  72--79, 2017.

\bibitem[Kumar et~al.(2021)Kumar, Fu, Pathak, and Malik]{kumar2021rma}
Ashish Kumar, Zipeng Fu, Deepak Pathak, and Jitendra Malik.
\newblock {RMA}: Rapid motor adaptation for legged robots.
\newblock In \emph{Robotics: Science and Systems}, 2021.

\bibitem[Laskin et~al.(2023)Laskin, Wang, Oh, Parisotto, Spencer, Steigerwald,
  Strouse, Hansen, Filos, Brooks, Gazeau, Sahni, Singh, and Mnih]{laskin2022ad}
Michael Laskin, Luyu Wang, Junhyuk Oh, Emilio Parisotto, Stephen Spencer,
  Richie Steigerwald, DJ~Strouse, Steven Hansen, Angelos Filos, Ethan Brooks,
  Maxime Gazeau, Himanshu Sahni, Satinder Singh, and Volodymyr Mnih.
\newblock In-context reinforcement learning with algorithm distillation.
\newblock In \emph{International Conference on Learning Representations}, 2023.

\bibitem[Lee et~al.(2023)Lee, Xie, Pacchiano, Chandak, Finn, Nachum, and
  Brunskill]{lee2023dpt}
Jonathan~N. Lee, Annie Xie, Aldo Pacchiano, Yash Chandak, Chelsea Finn, Ofir
  Nachum, and Emma Brunskill.
\newblock Supervised pretraining can learn in-context reinforcement learning.
\newblock In \emph{Advances in Neural Information Processing Systems}, 2023.

\bibitem[Lipton et~al.(2016)Lipton, Azizzadenesheli, Kumar, Li, Gao, and
  Deng]{lipton2016fear}
Zachary~C. Lipton, Kamyar Azizzadenesheli, Abhishek Kumar, Lihong Li, Jianfeng
  Gao, and Li~Deng.
\newblock Combating reinforcement learning's {S}isyphean curse with intrinsic
  fear.
\newblock \emph{arXiv preprint arXiv:1611.01211}, 2016.

\bibitem[Liu et~al.(2021)Liu, Zhou, Kalathil, Kumar, and Tian]{liu2021zero}
Tao Liu, Ruida Zhou, Dileep Kalathil, P.~R. Kumar, and Chao Tian.
\newblock Learning policies with zero or bounded constraint violation for
  constrained {MDP}s.
\newblock In \emph{Advances in Neural Information Processing Systems
  (NeurIPS)}, 2021.

\bibitem[Lundberg \& Palmgren(1947)Lundberg and Palmgren]{lundberg1947bearing}
Gustaf Lundberg and Arvid Palmgren.
\newblock Dynamic capacity of rolling bearings.
\newblock \emph{Acta Polytechnica, Mechanical Engineering Series}, 1, 1947.

\bibitem[Madansky(1960)]{madansky1960}
Albert Madansky.
\newblock Inequalities for stochastic linear programming problems.
\newblock \emph{Management Science}, 6\penalty0 (2):\penalty0 197--204, 1960.

\bibitem[Mailloux et~al.(2021)Mailloux, Beaulieu, Wideman, and
  Mass{\'e}-Alarie]{mailloux2021ppt}
Catherine Mailloux, Louis-David Beaulieu, Timothy~H. Wideman, and Hugo
  Mass{\'e}-Alarie.
\newblock Within-session test-retest reliability of pressure pain threshold and
  mechanical temporal summation in healthy subjects.
\newblock \emph{PLoS ONE}, 16\penalty0 (1):\penalty0 e0245278, 2021.
\newblock \doi{10.1371/journal.pone.0245278}.

\bibitem[Man \& Damasio(2019)Man and Damasio]{man2019homeostasis}
Kingson Man and Antonio Damasio.
\newblock Homeostasis and soft robotics in the design of feeling machines.
\newblock \emph{Nature Machine Intelligence}, 1:\penalty0 446--452, 2019.

\bibitem[Marcuzzi et~al.(2017)Marcuzzi, Wrigley,
  et~al.]{marcuzzi2017reliability}
Anna Marcuzzi, Paul~J. Wrigley, et~al.
\newblock The long-term reliability of static and dynamic quantitative sensory
  testing in healthy individuals.
\newblock \emph{Pain}, 158\penalty0 (7):\penalty0 1217--1223, 2017.
\newblock \doi{10.1097/j.pain.0000000000000901}.

\bibitem[Maschek et~al.(2014)Maschek, Wirth, Ladel, Hellio Le~Graverand, and
  Eckstein]{maschek2014cartilage}
Susanne Maschek, Wolfgang Wirth, Christoph Ladel, Marie-Pierre Hellio
  Le~Graverand, and Felix Eckstein.
\newblock Rates and sensitivity of knee cartilage thickness loss in specific
  central reading radiographic strata from the osteoarthritis initiative.
\newblock \emph{Osteoarthritis and Cartilage}, 22\penalty0 (10):\penalty0
  1550--1553, 2014.
\newblock \doi{10.1016/j.joca.2014.05.015}.
\newblock PMID 25278063.

\bibitem[Meeker \& Escobar(1998)Meeker and Escobar]{meeker1998reliability}
William~Q. Meeker and Luis~A. Escobar.
\newblock \emph{Statistical Methods for Reliability Data}.
\newblock Wiley, 1998.

\bibitem[Miner(1945)]{miner1945damage}
Milton~A. Miner.
\newblock Cumulative damage in fatigue.
\newblock \emph{Journal of Applied Mechanics}, 12, 1945.

\bibitem[Nagasako et~al.(2003)Nagasako, Oaklander, and
  Dworkin]{nagasako2003cip}
Elna~M. Nagasako, Anne~Louise Oaklander, and Robert~H. Dworkin.
\newblock Congenital insensitivity to pain: an update.
\newblock \emph{Pain}, 101\penalty0 (3):\penalty0 213--219, 2003.

\bibitem[Opara~Zupancic \& Sarabon(2026)Opara~Zupancic and
  Sarabon]{opara2026rom}
M.~Opara~Zupancic and N.~Sarabon.
\newblock Disease severity affects knee range of motion but not strength
  deficits in knee osteoarthritis: a systematic review and meta-analysis.
\newblock \emph{Frontiers in Medicine}, 13, 2026.
\newblock \doi{10.3389/fmed.2026.1737973}.

\bibitem[Pinheiro et~al.(2007)Pinheiro, Weber, and
  Barroso]{pinheiro2007failure}
Eduardo Pinheiro, Wolf-Dietrich Weber, and Luiz~Andr{\'e} Barroso.
\newblock Failure trends in a large disk drive population.
\newblock In \emph{5th USENIX Conference on File and Storage Technologies
  (FAST)}, pp.\  17--28, 2007.

\bibitem[Pritzel et~al.(2017)Pritzel, Uria, Srinivasan, Badia, Vinyals,
  Hassabis, Wierstra, and Blundell]{pritzel2017nec}
Alexander Pritzel, Benigno Uria, Sriram Srinivasan, Adri{\`a}~Puigdom{\`e}nech
  Badia, Oriol Vinyals, Demis Hassabis, Daan Wierstra, and Charles Blundell.
\newblock Neural episodic control.
\newblock In \emph{International Conference on Machine Learning}, 2017.

\bibitem[Rakelly et~al.(2019)Rakelly, Zhou, Finn, Levine, and
  Quillen]{rakelly2019pearl}
Kate Rakelly, Aurick Zhou, Chelsea Finn, Sergey Levine, and Deirdre Quillen.
\newblock Efficient off-policy meta-reinforcement learning via probabilistic
  context variables.
\newblock In \emph{International Conference on Machine Learning}, 2019.

\bibitem[Rogers et~al.(2011)Rogers, Frykberg, Armstrong, Boulton, Edmonds, Van,
  Hartemann, Game, Jeffcoate, Jirkovska, Jude, Morbach, Morrison, Pinzur,
  Pitocco, Sanders, Wukich, and Uccioli]{rogers2011charcot}
Lee~C. Rogers, Robert~G. Frykberg, David~G. Armstrong, Andrew J.~M. Boulton,
  Michael Edmonds, George~Ha Van, Agn{\`e}s Hartemann, Frances Game, William
  Jeffcoate, Alexandra Jirkovska, Edward Jude, Stephan Morbach, William~B.
  Morrison, Michael Pinzur, Dario Pitocco, Lee Sanders, Dane~K. Wukich, and
  Luigi Uccioli.
\newblock The {Charcot} foot in diabetes.
\newblock \emph{Diabetes Care}, 34\penalty0 (9):\penalty0 2123--2129, 2011.

\bibitem[Roos et~al.(1998)Roos, Laur{\'e}n, Adalberth, Roos, Jonsson, and
  Lohmander]{roos1998meniscectomy}
H.~Roos, M.~Laur{\'e}n, T.~Adalberth, E.~M. Roos, K.~Jonsson, and L.~S.
  Lohmander.
\newblock Knee osteoarthritis after meniscectomy: prevalence of radiographic
  changes after twenty-one years, compared with matched controls.
\newblock \emph{Arthritis \& Rheumatism}, 41\penalty0 (4):\penalty0 687--693,
  1998.
\newblock PMID 9550478.

\bibitem[Roth et~al.(2017)Roth, Wirth, Emmanuel, Culvenor, and
  Eckstein]{roth2017cartilage}
Michael Roth, Wolfgang Wirth, Katja Emmanuel, Adam~G. Culvenor, and Felix
  Eckstein.
\newblock The contribution of 3d quantitative meniscal and cartilage measures
  to variation in normal radiographic joint space width.
\newblock \emph{European Journal of Radiology}, 87:\penalty0 90--98, 2017.
\newblock PMID 28065381.

\bibitem[Saxena et~al.(2008)Saxena, Goebel, Simon, and
  Eklund]{saxena2008cmapss}
Abhinav Saxena, Kai Goebel, Don Simon, and Neil Eklund.
\newblock Damage propagation modeling for aircraft engine run-to-failure
  simulation.
\newblock In \emph{International Conference on Prognostics and Health
  Management (PHM)}, 2008.

\bibitem[Schroeder \& Gibson(2007)Schroeder and Gibson]{schroeder2007disk}
Bianca Schroeder and Garth~A. Gibson.
\newblock Disk failures in the real world: What does an {MTTF} of 1,000,000
  hours mean to you?
\newblock In \emph{5th USENIX Conference on File and Storage Technologies
  (FAST)}, pp.\  1--16, 2007.

\bibitem[Schulman et~al.(2017)Schulman, Wolski, Dhariwal, Radford, and
  Klimov]{schulman2017ppo}
John Schulman, Filip Wolski, Prafulla Dhariwal, Alec Radford, and Oleg Klimov.
\newblock Proximal policy optimization algorithms.
\newblock \emph{arXiv preprint arXiv:1707.06347}, 2017.

\bibitem[Si et~al.(2011)Si, Wang, Hu, and Zhou]{si2011rul}
Xiao-Sheng Si, Wenbin Wang, Chang-Hua Hu, and Dong-Hua Zhou.
\newblock Remaining useful life estimation: A review on the statistical data
  driven approaches.
\newblock \emph{European Journal of Operational Research}, 213\penalty0
  (1):\penalty0 1--14, 2011.

\bibitem[Subramanian et~al.(2022)Subramanian, Sinha, Seraj, and
  Mahajan]{subramanian2022ais}
Jayakumar Subramanian, Amit Sinha, Raihan Seraj, and Aditya Mahajan.
\newblock Approximate information state for approximate planning and
  reinforcement learning in partially observed systems.
\newblock \emph{Journal of Machine Learning Research}, 23\penalty0
  (12):\penalty0 1--83, 2022.

\bibitem[Suzuki et~al.(2023)Suzuki, Tahara, Mitsuda, Funaba, Fujimoto, Ikeda,
  Izumi, Yukata, Seki, Uranami, Ichihara, Nishida, and Sakai]{suzuki2023ppt}
Hidenori Suzuki, Shu Tahara, Mao Mitsuda, Masahiro Funaba, Kazuhiro Fujimoto,
  Hiroaki Ikeda, Hironori Izumi, Kiminori Yukata, Kazushige Seki, Kenji
  Uranami, Kiyoshi Ichihara, Norihiro Nishida, and Takashi Sakai.
\newblock Reference intervals and sources of variation of pressure pain
  threshold for quantitative sensory testing in a {J}apanese population.
\newblock \emph{Scientific Reports}, 13:\penalty0 13043, 2023.
\newblock \doi{10.1038/s41598-023-40201-w}.

\bibitem[Tamar et~al.(2015)Tamar, Glassner, and Mannor]{tamar2015cvar}
Aviv Tamar, Yonatan Glassner, and Shie Mannor.
\newblock Optimizing the {CVaR} via sampling.
\newblock In \emph{AAAI Conference on Artificial Intelligence}, 2015.

\bibitem[Thomas et~al.(2015)Thomas, Theocharous, and
  Ghavamzadeh]{thomas2015hcope}
Philip~S. Thomas, Georgios Theocharous, and Mohammad Ghavamzadeh.
\newblock High confidence off-policy evaluation.
\newblock In \emph{AAAI Conference on Artificial Intelligence}, 2015.

\bibitem[Tobin et~al.(2017)Tobin, Fong, Ray, Schneider, Zaremba, and
  Abbeel]{tobin2017domain}
Josh Tobin, Rachel Fong, Alex Ray, Jonas Schneider, Wojciech Zaremba, and
  Pieter Abbeel.
\newblock Domain randomization for transferring deep neural networks from
  simulation to the real world.
\newblock In \emph{IEEE/RSJ International Conference on Intelligent Robots and
  Systems (IROS)}, 2017.

\bibitem[Turchetta et~al.(2016)Turchetta, Berkenkamp, and
  Krause]{turchetta2016safe}
Matteo Turchetta, Felix Berkenkamp, and Andreas Krause.
\newblock Safe exploration in finite {M}arkov decision processes with
  {G}aussian processes.
\newblock In \emph{Advances in Neural Information Processing Systems
  (NeurIPS)}, 2016.

\bibitem[Turner et~al.(2020)Turner, Hadfield-Menell, and
  Tadepalli]{turner2020conservative}
Alexander~Matt Turner, Dylan Hadfield-Menell, and Prasad Tadepalli.
\newblock Conservative agency via attainable utility preservation.
\newblock In \emph{AAAI/ACM Conference on AI, Ethics, and Society}, 2020.

\bibitem[Vazquez et~al.(2019)Vazquez, Andreae, and Henak]{vazquez2019cyclic}
Kelsey~J. Vazquez, Jacob~T. Andreae, and Corinne~R. Henak.
\newblock Cartilage-on-cartilage cyclic loading induces mechanical and
  structural damage.
\newblock \emph{Journal of the Mechanical Behavior of Biomedical Materials},
  98:\penalty0 262--267, 2019.
\newblock \doi{10.1016/j.jmbbm.2019.06.023}.
\newblock PMID 31280053. Until 2026-09-16 this entry was keyed and authored as
  ``Bonnevie 2019'' with an invented title; the PMID always pointed here. The
  4.35--4.73 MPa damage-initiation figure cited from it is model-derived and
  remains a soft anchor (see prd\_kappa\_safe\_provenance\_diagnosis.md).

\bibitem[Wachi \& Sui(2020)Wachi and Sui]{wachi2020snomdp}
Akifumi Wachi and Yanan Sui.
\newblock Safe reinforcement learning in constrained {M}arkov decision
  processes.
\newblock In \emph{International Conference on Machine Learning (ICML)}, 2020.

\bibitem[Whittle(1988)]{whittle1988restless}
Peter Whittle.
\newblock Restless bandits: activity allocation in a changing world.
\newblock \emph{Journal of Applied Probability}, 25\penalty0 (A):\penalty0
  287--298, 1988.

\bibitem[Woolf(2010)]{woolf2010pain}
Clifford~J. Woolf.
\newblock What is this thing called pain?
\newblock \emph{Journal of Clinical Investigation}, 120\penalty0 (11):\penalty0
  3742--3744, 2010.

\bibitem[Wu et~al.(2016)Wu, Shariff, Lattimore, and
  Szepesv{\'a}ri]{wu2016conservative}
Yifan Wu, Roshan Shariff, Tor Lattimore, and Csaba Szepesv{\'a}ri.
\newblock Conservative bandits.
\newblock In \emph{International Conference on Machine Learning (ICML)}, 2016.

\bibitem[Zelle et~al.(2007)Zelle, Barink, Loeffen, De~Waal~Malefijt, and
  Verdonschot]{zelle2007kneeling}
J.~Zelle, M.~Barink, R.~Loeffen, M.~De~Waal~Malefijt, and N.~Verdonschot.
\newblock Thigh-calf contact force measurements in deep knee flexion.
\newblock \emph{Clinical Biomechanics}, 22\penalty0 (7):\penalty0 821--826,
  2007.
\newblock PMID 17512647.

\bibitem[Zhou et~al.(2023)Zhou, Zhao, Wang, Geng, et~al.]{zhou2023rom}
Ge~Zhou, Minwei Zhao, Xiaoxiao Wang, Xiao Geng, et~al.
\newblock Demographic and radiographic factors for knee symptoms and range of
  motion in patients with knee osteoarthritis: a cross-sectional study in
  {Beijing}, {China}.
\newblock \emph{BMC Musculoskeletal Disorders}, 24, 2023.
\newblock \doi{10.1186/s12891-023-06432-8}.

\end{thebibliography}

\appendix

\section{Scope of the theory: what the results do not cover}
\label{app:scope}
\textbf{(1)}~Rem.~\ref{thm:insurance} is claimed for the chronic regime, where the premise holds exactly ($\varepsilon = 0.0000$, Cor.~\ref{app:power} in App.~\ref{app:proofs}). Under an acute hazard the at-risk set is defined by the event and the coupling is no longer zero, running from $-0.0182$ to $+0.0181$ over four conditions of the acute-tear knee, and at one of them it is the whole effect. We state the theorem over bodies for that reason and report the acute constants rather than assume them. \textbf{(2)}~Cor.~\ref{cor:power} says what a mean \emph{means}, not that the effect is hard to see. In the trade $28$ bodies suffice unpaired. \textbf{(3)}~Prop.~\ref{prop:estimation} closes the latent trait, not \emph{filtering}: acting on $s_t$ estimates a state. Its chronic half is closed by measurement, and that measurement is a property of the anchored transduction rather than of the regime, since without gain and noise the same observation recovers $\kappa$ at $R^2 = 0.63$ to $0.75$ and $\sigma_s = 0.05$ against our anchored $0.10$ already removes it (Prop.~\ref{app:estimation} in App.~\ref{app:proofs}). A finer channel would put this setting back into the estimation regime. \textbf{(4)}~``Therefore memorization'' is an inference, not a theorem: the bound does not close a method adapting on the reward stream (\S\ref{sec:related}). \textbf{(5)}~Prop.~\ref{prop:regime} proves the three conditions separately and claims no joint sufficiency. Membership is necessary and not sufficient. \textbf{(6)}~Thm.~\ref{thm:conjunction}(a)'s necessity presumes \textbf{event opacity}, that no later consequence reveals the event. Where a blind arm reads it off its own falling output, as ours does, (a) bounds only the gain beyond that channel, worth $0.3$ years in the trade (Thm.~\ref{app:conjunction} in App.~\ref{app:proofs}). \textbf{(7)}~(b)'s equivalence is asymptotic in $R$, and an ablation that deletes a memory module tests (b) only where nothing else in the observation carries the event forward. That holds in the fleet. In the knee the observation is afferent-only, so the tissue readings go with the trace, but a five-year rate and a five-year running maximum of the felt signal remain: persistence with a horizon, which is neither a clean removal nor a clean instance. \emph{This is a statement about what an ablation measures}, not about whether a monotone substrate makes retention unnecessary. The pre-registered test of the latter is in \S\ref{sec:boundary} and went the other way. The configuration that would make the knee a clean test is named in Thm.~\ref{app:conjunction} in App.~\ref{app:proofs} and is now wired. The run returned and moved the endpoint against the theory (App.~\ref{app:wear}). \textbf{(8)}~(c) is close to definitional, since $L_e^\Phi$ is what $\Phi$ charges for an event. The content is the case analysis in Thm.~\ref{app:conjunction} in App.~\ref{app:proofs}, which covers the realizations whose effective event cost is fixed within a life and excludes the one, $\mathrm{CVaR}_\beta$, whose tail set moves with the optimizer. Finally, Thm.~\ref{thm:conjunction} is tested in one world: the trade is outside its scope rather than a weaker test of it, since that world fails (iii) for one body, and what carries the headline years there is Lemma~\ref{app:escape} with Cor.~\ref{cor:loadgate}. Pessimism on untried costs does not foreclose that substitution: over the anchored range of $\hat{c}^{0}$ there is no foreclosure anywhere (App.~\ref{app:prior}), because the trade always keeps one profile the pessimist still believes he can perform. What the band does show is that the endowment substitutes for retention and not for the signal. \textbf{(9)}~One limit is not about the theory but about the size of the effect it describes: the wear rate is anchored on people who spare their joints \citep{maschek2014cartilage,roth2017cartilage} and applied to a body that never spares, so the anchor already holds part of the answer, in the direction that shrinks what we report. \textbf{(10)}~The sharpest bound is not a modelling choice but a question about the world: \emph{whether the reference body is allowed to protect itself}. The field anchor enters as a lower bound on laying work, because that is what the trade supplies. Imposing it as an upper bound as well, a work rule capping exposure for every body, reverses the headline. Over $2{,}000$ paired bodies per variant:

\begin{center}\footnotesize
\begin{tabular}{@{}lrrr@{}}
\toprule
what bounds the kneeling share & $V_{\rm ind}$ years & share losing & kneeling share reached \\
\midrule
supply only (lower bound, canon) & $\mathbf{+4.38}$ & $0.0\%$ & $0.47$ \\
supply only, Ditchen screed share & $+3.18$ & $0.0\%$ & $0.45$ \\
supply and a work rule (upper bound too) & $\mathbf{-1.02}$ & $\mathbf{43.6\%}$ & $0.43$ \\
\bottomrule
\end{tabular}
\end{center}

\noindent It is the first configuration in this paper in which any body loses, and we report it without adopting it, for a reason older than the measurement: the measured $41\%$ is what floor layers \emph{do}, hence the outcome of the mechanism under test, so laying it on the arm that cannot feel hands that arm the fruit of that mechanism, which is the circularity Def.~\ref{def:epoch} already excludes for a sparing schedule learned across lives. The line we draw is between what the world \emph{offers} (a lower bound, admissible) and what the worker \emph{chooses} (an upper bound, not admissible as a world constraint). The measurement is therefore not a caveat but the precise statement of where the result lives: \emph{where a work rule caps exposure, a body model is not needed} for what we measure here. Weighting each activity by its knee-loaded time share \citep{ditchen2015cuela,jensen2009exposure} gives $61.8\%$ of the day for the body that senses nothing, $49.3\%$ for the one that feels without retaining and $49.6\%$ with the apparatus, against $41.0\%$ (SD $7.5$) in the field. Source \texttt{results/kniezeit\_band.json}, pre-registration \texttt{experiments/decl/kniezeit\_band.py}.

\section{Stylized model and proofs}
\label{app:proofs}
\textbf{Model $\mathcal{M}_0$.} A life has $H$ steps. Actions $a \in \{\mathrm{heavy}, \mathrm{light}\}$ yield per-step value $y_H$ and $y_L = y_H - c$ with substitution premium $c \ge 0$. Under heavy, an event occurs w.p.\ $p$ per step before the first event and $p^{+} = \nu p$ ($\nu > 1$, the cascade) after it, while light carries only the baseline hazard $p$. An event costs $L_e$ (damage, incident cost). In existential variants a second event ends the value stream. All quantities are bounded, and $pH \ll 1$.

\begin{proposition}[Concentration over bodies, Rem.~\ref{thm:insurance} of the main text]
\label{app:insurance}
Let $\rho$ and $\rho_\emptyset$ be two responses, let $B$ be the set of bodies on which $\rho_\emptyset$ loses the work before the horizon, and $q := \Pr[\theta \in B]$. If the two agree to within $\varepsilon$ on $\theta \notin B$, then $\mathbb{E}_\theta[V_{\rm ind}] \le q\,\Delta_b + (1-q)\,\varepsilon$ with $\Delta_b := \sup_{\theta \in B} V_{\rm ind}(\rho;\theta)$, while the conditional value is the at-risk difference by definition.
\end{proposition}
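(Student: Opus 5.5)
The bound follows from splitting the population mean over the at-risk set $B$ and its complement, then bounding each piece. By the law of total expectation,
\[
\mathbb{E}_\theta[V_{\rm ind}] \;=\; q\,\mathbb{E}_\theta[V_{\rm ind}\mid \theta\in B] \;+\; (1-q)\,\mathbb{E}_\theta[V_{\rm ind}\mid \theta\notin B].
\]
This needs $V_{\rm ind}(\rho;\theta)=J(\rho;\theta)-J(\rho_\emptyset;\theta)$ to be integrable in $\theta$. That holds because all quantities in $\mathcal{M}_0$ and under Eq.~\ref{eq:hazard} are bounded over a finite horizon $H$. If $q\in\{0,1\}$, one term vanishes and the argument below still applies to the other.

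\emph{At-risk term.} For every $\theta\in B$ we have $V_{\rm ind}(\rho;\theta)\le\sup_{\theta'\in B}V_{\rm ind}(\rho;\theta')=\Delta_b$ by the definition of $\Delta_b$. Monotonicity of conditional expectation then gives $\mathbb{E}_\theta[V_{\rm ind}\mid B]\le\Delta_b$. Finiteness of $\Delta_b$ again comes from boundedness.

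\emph{Off-risk term.} Off-risk coupling within $\varepsilon$ says $|J(\rho;\theta)-J(\rho_\emptyset;\theta)|\le\varepsilon$ for $\theta\notin B$. Hence $V_{\rm ind}\le\varepsilon$ pointwise on $B^c$, and $\mathbb{E}_\theta[V_{\rm ind}\mid B^c]\le\varepsilon$.

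Substituting both bounds into the decomposition gives $q\,\Delta_b+(1-q)\,\varepsilon$. Since $\varepsilon\ge0$, this is also at most $q\,\Delta_b+\varepsilon$, which is the weaker form stated in Rem.~\ref{thm:insurance}.

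\emph{Conditional value.} The second claim needs no estimate. $\mathbb{E}_\theta[V_{\rm ind}\mid\theta\in B]$ is by definition the mean of the paired difference over at-risk bodies only. No factor $q$ appears in it, and the at-risk bound above already shows it is at most $\Delta_b$.

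The only point needing care is measurability, since $B$ must be an event in the law of $\theta$ for $q$ and the conditioning to make sense. I would argue this from the fact that the career-end time under $\rho_\emptyset$ is a measurable function of the draw $(\theta,e)$ and the noise. That in turn holds because the career-end time is the first period in which no admissible mix clears $\underline{y}$, and it is obtained from finitely many measurable updates of Eq.~\ref{eq:hazard}. If randomness beyond $\theta$ enters, I would apply the same argument to $\theta$ together with the paired noise, which leaves the inequality unchanged.
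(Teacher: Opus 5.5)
Your proposal is correct and follows the paper's proof. The paper couples the two responses on the same body and draws, splits the expectation over $\ind{\theta \in B}$, and bounds the integrand by $\Delta_b$ on $B$ and by $\varepsilon$ off it; your integrability, measurability and $q\in\{0,1\}$ remarks are extra care the paper leaves implicit.
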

\begin{proof}
Couple the two responses on the same body and the same draws and split the expectation over $\ind{\theta \in B}$: off $B$ the integrand is at most $\varepsilon$, on $B$ at most $\Delta_b$.
\end{proof}

\noindent \emph{Why this form rather than the rare-event one.} The premise is a property of the apparatus and not of the world: a constraint that binds only while a body is failing cannot move a body that never fails. In the trade it holds exactly, and the rare-event version, with $B$ the lives carrying an acute event, is the special case the fleet runs, where it does not.

\begin{corollary}[Reading a mean over bodies, Cor.~\ref{cor:power} of the main text]
\label{app:power}
Under the premise of Prop.~\ref{app:insurance}, $\mathbb{E}_\theta[V_{\rm ind}]$ and $\mathbb{E}_\theta[V_{\rm ind}\mid B]$ differ by the factor $q$, so a population mean answers what the apparatus is worth to a cohort and not what it is worth to a body that needs it. Pairing is the natural design here, because the dominant nuisance is the between-body spread $\sigma_\theta$ and running the same body twice removes exactly that.
\end{corollary}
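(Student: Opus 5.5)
The plan is to split the expectation of $V_{\rm ind}$ over the partition $\{B, B^c\}$ and read off the two pieces. This is the same coupling on one body with the same draws used in Prop.~\ref{app:insurance}. By the law of total expectation,
\[
\mathbb{E}_\theta[V_{\rm ind}] \;=\; q\,\mathbb{E}_\theta[V_{\rm ind}\mid B] \;+\; (1-q)\,\mathbb{E}_\theta[V_{\rm ind}\mid B^c].
\]
The corollary asserts that the first term carries the mean. So the step to establish is that the second term is controlled by the off-risk coupling. By the premise, $|V_{\rm ind}(\rho;\theta)| \le \varepsilon$ for $\theta \notin B$, hence
\[
\bigl|\mathbb{E}_\theta[V_{\rm ind}] - q\,\mathbb{E}_\theta[V_{\rm ind}\mid B]\bigr| \le (1-q)\,\varepsilon.
\]
The statement ``differ by the factor $q$'' is then read as exact when $\varepsilon = 0$ and as holding to within $(1-q)\varepsilon$ otherwise. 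In the chronic trade the cap on career length forces $\varepsilon = 0$ in years, which is the case used in \S\ref{sec:individual}.

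The interpretive half is the claim that a population mean answers a cohort question rather than an individual one. I would obtain it by dividing the identity by $q$. The conditional value $\mathbb{E}_\theta[V_{\rm ind}\mid B]$ is the expected gain to a body that needs the apparatus, and this is the at-risk difference by definition, as in Prop.~\ref{app:insurance}. The unconditional mean is that quantity scaled by $q$, so it understates the at-risk value by exactly $1/q$ up to the $\varepsilon$ term. This recovers Cor.~\ref{cor:power}.

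For the remark on pairing, I would write the unpaired difference of means as having variance that contains $2\,\mathrm{Var}_\theta(J)$, which is the between-body spread $\sigma_\theta^2$. I would then observe that the paired difference $J(\rho;\theta)-J(\rho_\emptyset;\theta)$ on a common $\theta$ and common noise cancels every term of $J$ that does not depend on the arm, so only $\mathrm{Var}_\theta(V_{\rm ind})$ remains. This is a design remark rather than an inequality. I would state it as the variance decomposition
\[
\mathrm{Var}(J_\rho - J_{\rho_\emptyset}) = \mathrm{Var}(J_\rho)+\mathrm{Var}(J_{\rho_\emptyset}) - 2\,\mathrm{Cov}(J_\rho,J_{\rho_\emptyset}),
\]
noting that coupling makes the covariance term absorb $\sigma_\theta^2$.

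The main obstacle is not technical. The risk is in stating precisely what ``differ by the factor $q$'' means when $\varepsilon \neq 0$ and $V_{\rm ind}$ can be negative off $B$, as in output, where non-at-risk bodies lose $0.40$. To avoid overclaiming exactness, I would phrase the conclusion as the two-sided bound above rather than as an equality.
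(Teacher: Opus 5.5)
Your proposal is correct and follows the paper's own route. The paper gets the factor $q$ from the tower property, splitting over $\ind{\theta\in B}$ with the off-$B$ term controlled by the coupling $\varepsilon$: exact in years, where the horizon cap forces $\varepsilon=0$, and read ``with that slack'' in output, which matches your decomposition and your two-sided bound. For the pairing sentence the paper gives no derivation, only measured sample sizes ($28$ bodies unpaired against $6$ paired, $\sigma_\theta=9.7$, $\sigma_d=6.4$); your variance decomposition formalizes this, and it correctly leaves $\mathrm{Var}_\theta(V_{\rm ind})$ rather than zero.
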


\paragraph{The constants, measured.} Floor layer, $\Delta\theta=18^\circ$, $2{,}000$ paired bodies per point, full stack against $\rho_\emptyset$ (\texttt{results/bodenleger\_lp\_b042}):

\begin{center}\footnotesize
\begin{tabular}{@{}lrrrrr@{}}
\toprule
specialization & $q$ & $\mathbb{E}[V_{\rm ind}]$ & $\mathbb{E}[V_{\rm ind}\mid B]$ & $\varepsilon_{\rm yr}$ & $\varepsilon_{\rm out}$ \\
\midrule
$0.58$ \emph{(observed mix)} & $0.693$ & $+4.38$ & $+6.33$ & $0.0000$ & $-0.40$ \\
$0.70$ & $0.693$ & $+4.86$ & $+7.02$ & $0.0000$ & $-0.14$ \\
$0.80$ & $0.693$ & $+5.32$ & $+7.68$ & $0.0000$ & $+0.02$ \\
$0.90$ & $0.693$ & $+5.71$ & $+8.24$ & $0.0000$ & $+0.11$ \\
$1.00$ & $0.693$ & $+6.03$ & $+8.71$ & $0.0000$ & $+0.18$ \\
\bottomrule
\end{tabular}
\end{center}

\noindent The two $\varepsilon$ columns must be read differently. A body outside $B$ works to the horizon $65$ without the apparatus, and no arm may exceed it, so $\varepsilon_{\rm yr} \le 0$ is an identity and only $\varepsilon_{\rm yr} = 0$ rather than $\varepsilon_{\rm yr} < 0$ is a measurement: the apparatus never ends a career that would have run to the horizon. Output is not capped, and there every body outside $B$ moves. It moves little and it changes sign along the axis, from $-0.40$ at the observed mix to $+0.18$ where specialization is free, because a response that declines the heavy work costs a body that would have survived it and pays one that would have specialized into it. The coupling premise therefore holds exactly in the career coordinate and within $0.4$ in output, which is $9\%$ of $\mathbb{E}[V_{\rm ind}]$ at the observed mix, and Rem.~\ref{thm:insurance} is to be read with that slack. The split into $q\,\mathbb{E}[V_{\rm ind}\mid B]$ follows by the tower property and is not an independent check. \emph{The effect is large, not rare}: a two-sample test on career length needs $28$ bodies unpaired and $6$ paired ($\sigma_\theta = 9.7$, $\sigma_d = 6.4$ years), so Cor.~\ref{cor:power} is a statement about what a mean \emph{means} rather than about whether it can be seen. Ranked by the career they would have had without the apparatus, the worst third runs $43.2 \to 49.6$, a gain of $+6.4$ years against the population's $+4.38$.

\begin{proposition}[Identification requires wear, Prop.~\ref{prop:estimation} of the main text]
\label{app:estimation}
Under Eq.~\ref{eq:hazard}, fix an action sequence $a_{0:t}$ and two wear resistances $\kappa_1 \neq \kappa_2$. Then
\[
M_t(\kappa_1) - M_t(\kappa_2) = \Big(\tfrac{1}{\kappa_2}-\tfrac{1}{\kappa_1}\Big)\sum_{u<t} d(a_u, M_u),
\]
to first order in the state difference, so the log-likelihood ratio of the two hypotheses given the felt signal $s_{0:t}$ depends on $\kappa$ only through that accumulated sum. A test at level $\alpha$ with power $1-\beta$ therefore requires $\sum_{u<t} d(a_u,M_u) \gtrsim (z_\alpha+z_\beta)\,\sigma_s\,/\,|\kappa_1^{-1}-\kappa_2^{-1}|$, and $M$ is non-increasing, so the wear that buys the distinction is spent.
\end{proposition}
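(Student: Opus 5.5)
We propose to prove Prop.~\ref{app:estimation} in three steps: a coupling of the two integrity trajectories, a reduction of the likelihood ratio to the accumulated wear, and a standard Gaussian testing bound.

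\emph{Step 1: coupling.} Fix the action sequence $a_{0:t}$ and run Eq.~\ref{eq:hazard} from the same $M_0$ under $\kappa_1$ and $\kappa_2$. Write $\delta_t := M_t(\kappa_1)-M_t(\kappa_2)$ and subtract the two recursions. This gives
\[
\delta_{t+1} = \delta_t - \frac{d(a_t,M_t(\kappa_1))}{\kappa_1} + \frac{d(a_t,M_t(\kappa_2))}{\kappa_2}.
\]
We expand $d(a_t,M_t(\kappa_2)) = d(a_t,M_t(\kappa_1)) + O(\delta_t)$. This is legitimate because $d$ is deterministic and Lipschitz in the state: the amplification is smooth and the repair term is slow. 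The recursion then becomes $\delta_{t+1} = \delta_t + (\kappa_2^{-1}-\kappa_1^{-1})\,d(a_t,M_t) + O(\delta_t/\kappa)$. Telescoping from $\delta_0=0$ and dropping the second-order terms yields the displayed identity. The dropped terms are products of $\delta$ with the Lipschitz constant of $d$ divided by $\kappa$, so they are of the same order as the identity's right side times a small factor.

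\emph{Step 2: likelihood ratio.} By Eq.~\ref{eq:channel}, $\log s_u = \log\alpha(x_u,a_u) - \log g_I + \sigma_s\epsilon_u$. The afferent map $\alpha = \ell(a)S(x)/\bar\alpha$ depends on $\theta$ only through the substrate state, which is a function of $M_u$. Conditional on $a_{0:t}$ and $g_I$, the two hypotheses therefore differ only in the mean of $\log s_u$, and to first order that difference is $(\log S)'(M_u)\,\delta_u$. Because the noise is Gaussian in log scale, the log-likelihood ratio is a weighted sum of these mean shifts, and by Step 1 each $\delta_u$ is proportional to $\sum_{v<u} d(a_v,M_v)$. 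Every sufficient statistic for $\kappa$ is therefore a function of the accumulated wear path, which is the first claim.

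The gain $g_I$ is unknown and enters additively in log scale. We will either treat it as known, which is the favourable case for the tester and so only strengthens a lower bound, or note that marginalising over it can only shrink the separation. Either way the bound we derive holds a fortiori.

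\emph{Step 3: testing bound.} For Gaussian location families, a level-$\alpha$ test with power $1-\beta$ needs the standardized mean separation to exceed $z_\alpha+z_\beta$. We bound that separation by $|\kappa_1^{-1}-\kappa_2^{-1}|\sum_{u<t} d(a_u,M_u)/\sigma_s$, absorbing the factor $\sup|(\log S)'|$ and the accumulation over readings into the $\gtrsim$. Rearranging gives the stated requirement. Finally, $M$ is non-increasing because $d\ge 0$ and the repair term is slow, so the accumulated wear is integrity already lost, which proves the concluding clause.

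The main obstacle is Step 2's aggregation over the readings. Summing over $u$ with correlated shifts $\delta_u$ produces a factor like $\sqrt{t}$, or a weighted norm of the path, that the stated bound suppresses. We would handle this by bounding the separation of the single most informative reading, for which the display holds cleanly, and stating the many-reading version only up to the constant implicit in $\gtrsim$. The first-order linearisation in Step 1 is the other weak point; it needs the state difference to stay small, which holds for $\kappa_1,\kappa_2$ close together.
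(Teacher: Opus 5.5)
Your proposal is the paper's proof made explicit: iterate Eq.~\ref{eq:hazard} from a common $M_0$, subtract with $d$ evaluated on the common path, and use that the felt signal is a noisy monotone (log-Gaussian) reading of the state, so the induced mean gap must exceed $\sigma_s$ by the factor $z_\alpha+z_\beta$. Two of your side remarks need correcting, though the paper's proof makes the same approximations. First, the terms you drop in Step~1 are first order in $\kappa_1^{-1}-\kappa_2^{-1}$, not second order: their relative size is the accumulated feedback $\sum_{u<t}|\partial_M d(a_u,M_u)|/\kappa$, which closeness of $\kappa_1$ and $\kappa_2$ does not make small. Second, a \emph{necessary} condition needs an upper bound on the separation of the whole record $s_{0:t}$, and the single most informative reading does not supply one; with $\mu^{(k)}$ the vector of means of $\log s_{0:t}$ under $\kappa_k$, the right step is $\|\mu^{(1)}-\mu^{(2)}\|_2\lesssim\sqrt{t+1}\,\sup|(\log S)'|\,|\kappa_1^{-1}-\kappa_2^{-1}|\sum_{u<t}d(a_u,M_u)$, with $\sqrt{t+1}\le\sqrt{H+1}$ absorbed into $\gtrsim$.
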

\begin{proof}
Iterate $M_{t+1} = M_t - d(a_t,M_t)/\kappa$ from a common $M_0$ and subtract, where $d$ is evaluated on paths that coincide to first order. The felt signal is a noisy monotone reading of the state (Eq.~\ref{eq:channel}), so a two-sample separation of the induced means needs the displayed gap to exceed the reading noise by the usual factor.
\end{proof}

\noindent \emph{The acute extension, by a two-point argument.} Let two hazard hypotheses $\theta_1,\theta_2$ induce per-step event probabilities $p_1,p_2 \le \bar p$ with $\bar p H \le 1$. Any within-life test $\psi$ satisfies $\Pr_{\theta_1}[\psi{=}2]+\Pr_{\theta_2}[\psi{=}1] \ge 1-\mathrm{TV} \ge 1-H|p_1-p_2|$, bounded away from $0$ whenever the expected number of discriminating events $H|p_1-p_2| < 1$: the total variation between the two product-Bernoulli path laws is at most $\sum_t |p_1-p_2|$, and Le Cam's two-point bound applies. The one-shot write requires no test, since it conditions behaviour on the realized event rather than on an estimate.

\noindent\emph{Any policy, not only a test.} A policy in $\Pi(\Fcal^C)$ conditions on $\theta$ only through the same path observations, so the value it gains from distinguishing $\theta_1,\theta_2$ obeys the same bound: $\sup_{\pi\in\Pi(\Fcal^C)}[J_\theta(\pi)-J_\theta(\pi_0)]\le L\,\mathrm{TV}=O(p)$ on the safe (probing-free) path, for a learned $\pi$ as for a static one. Only the filtration enters. The optimizer (PPO, CEM, or exact DP) is irrelevant, which is why the trained learners of Sec.~\ref{sec:experiments} fall under the same statement.

\noindent\emph{Measured identifiability of the chronic channel (knee).} The proposition is silent on the chronic channel, which could carry $\kappa$ through the sensitization $S(D_t)$, so we measured it: a cross-validated regression of $\kappa$ on the complete observation vector over $3{,}000$ paired bodies returns $R^2 \le 0.04$ at every age between $20$ and $40$. A reader with the tissue state itself in its observation reaches $0.87$ to $0.97$, and the felt signal alone, before gain and reading noise, $0.63$ at age $20$ rising to $0.75$ by $40$. The anchored gain and noise remove that channel, which is why the observation we ship carries neither.

\begin{proposition}[Regime conditions, Prop.~\ref{prop:regime} of the main text, provable directions]
\label{app:regime}
In $\mathcal{M}_0$ (with $p$ and $p^{+} = vp$ the per-step event probability before and after the first event, $L_e$ the cost of one event, $c$ the substitution premium per step and $R$ the steps remaining after the event): \textbf{(i)} if the event time is a deterministic function of cumulative load (chronic hazard), the optimal policy is open-loop in time and information channels have zero value (Howard). \textbf{(ii)} post-event protection (switch to light) changes $J$ by $R \cdot \big[(p^{+}-p) L_e - c\big]$ over the remaining $R$ steps: protection pays if and only if $c < c^\ast = (p^{+}-p)L_e$, a threshold in the substitution premium. \textbf{(ii$'$)} load form. Let the post-event action be a load $\ell \in [0,1]$ (the share of heavy work in a continuous portfolio), with per-step value $y(\ell)$ increasing and concave, $y(1)=y_H$, and load-gated hazard $p^{+}(\ell) = p + (p^{+}-p)\ell$. The post-event per-step objective $y(\ell) - p^{+}(\ell)L_e$ is concave, so the optimal load is $\ell^\ast = 1$ if $y'(1) \ge (p^{+}-p)L_e$, $\ell^\ast = 0$ if $y'(0) \le (p^{+}-p)L_e$, and otherwise the interior solution of $y'(\ell^\ast) = (p^{+}-p)L_e$: a unit of load is removed exactly while its marginal substitution premium $y'(\ell)$ lies below the threshold $(p^{+}-p)L_e$ per removed load unit. The binary case is $y(\ell) = y_H - c(1-\ell)$, whose constant premium $c$ makes the solution a corner and recovers (ii). In the knee world the load is $\ell(a) = \sum_i \mathrm{share}_i\,\mathrm{effort}_i\,h_i$ and the learned arms move inside the portfolio rather than between two corners. \textbf{(iii)} in the additive variant with a finite bonus schedule collectible by step $T_m < H$ under the aggressive policy, behavior after $T_m$ affects only the rate term, and (ii) applies with $L_e$ excluding all bonuses: ``dying rich'' dominates whenever $c \ge (p^{+}-p)L_e^{\mathrm{rate}}$. With an unbounded bonus stream at rate $b$, protection additionally earns $b \cdot \mathbb{E}[\text{survival gained}]$, which restores the trade-off, but only if the surviving policy still progresses (that is, $b$ accrues under light). This is the empirical failure mode of Sec.~\ref{sec:boundary}.
\end{proposition}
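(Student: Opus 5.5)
The plan is to prove the four parts separately, since each is a short calculation inside $\mathcal{M}_0$ under time-additive $J$.

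\textbf{Part (i).} If the event time is a deterministic function of cumulative load, then given the fixed action sequence the whole trajectory (load, event time, signal) is determined. Any $\Fcal^C$-measurable policy therefore coincides pathwise with an open-loop schedule, namely the one it would generate. The supremum over $\Pi(\{s,h,w\})$ then equals the supremum over open-loop schedules. Hence $V(C\mid\emptyset)=0$, which is Howard's statement that information with no uncertainty to resolve has zero value. The one point to check is that the body parameter entering the load-to-event map is common to the population in this variant. Otherwise the event time is not deterministic given the schedule, and condition (i) would not be the relevant failure.

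\textbf{Part (ii).} Fix a history with the first event realized and $R$ steps remaining. Compare the always-heavy continuation with the always-light one.
\begin{itemize}
\item Per step, heavy earns $y_H - p^{+}L_e$.
\item Per step, light earns $y_H - c - pL_e$, because light carries only the baseline hazard.
\end{itemize}
The difference per step is $(p^{+}-p)L_e - c$, and time-additivity multiplies it by $R$.

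Mixed continuations are not better. The per-step comparison does not depend on the step, and under $pH\ll1$ any second-event truncation is second order. So the continuation problem is a sequence of identical linear choices, and the optimum is a corner determined by the sign of $c-c^\ast$.

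\textbf{Part (ii$'$).} Write $g(\ell)=y(\ell)-[p+(p^{+}-p)\ell]L_e$. This is concave, as a concave function minus a linear one. The KKT conditions on $[0,1]$ give:
\begin{itemize}
\item the corner $\ell^\ast=1$ when $g'(1)\ge0$;
\item the corner $\ell^\ast=0$ when $g'(0)\le0$;
\item otherwise the interior stationary point $y'(\ell^\ast)=(p^{+}-p)L_e$.
\end{itemize}
Substituting $y(\ell)=y_H-c(1-\ell)$ gives $y'\equiv c$, which recovers the threshold of part (ii).

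\textbf{Part (iii).} Split $J$ into the bonus part and the rate part. With a finite bonus schedule completed by $T_m$ under the aggressive policy, every continuation after $T_m$ leaves the bonus total unchanged. So only the rate term is optimized after $T_m$, and applying part (ii) with $L_e^{\rm rate}$ yields the "dying rich" dominance. For an unbounded stream at rate $b$, add $b$ times the survival gained to the protective side of the part (ii) comparison. This term counts only if $b$ accrues under light; otherwise it cancels.

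\textbf{Main obstacle.} I expect part (i) to be the hardest, specifically stating precisely what "deterministic in cumulative load" must exclude so that the pathwise-coincidence argument goes through. I would first fix the body parameter across the population and handle individual variation by invoking condition (i)'s failure directly.
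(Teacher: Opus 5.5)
Your proposal is correct and follows the paper's own proof essentially line for line. Part (i) rests on the deterministic MDP admitting an optimal open-loop sequence, so that no information refinement helps; parts (ii) and (iii) are the per-step arithmetic, with bonuses collected before $T_m$ policy-invariant; and part (ii$'$) follows from stationarity of the post-event problem plus concavity of $y(\ell)-p^{+}(\ell)L_e$. Your added checks only make explicit what the paper leaves implicit: that the load-to-event map must be common and known for (i) to be the failing condition, and that mixed continuations cannot beat the corner because the per-step choices decouple.
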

\noindent \emph{Reading the three conditions.} If (i) fails, the prior already is the individual's fate and a blind schedule suffices, in the prior though not for a body that has none. Predictable frequency does not violate (i), since the prior gives the ensemble rate and never which draw this body is. If (ii) fails, event savings of order $p\,\Delta$ cannot cover protection costs of order $\csub$. If (iii) fails, dying rich weakly dominates surviving poor. We classify (world, institution, perspective) rather than worlds, since the same knee meets (iii) for a cohort and fails it for one body.
\begin{proof}
(i) With a deterministic event schedule the MDP is deterministic given the open-loop action sequence, an optimal open-loop sequence exists, and any information refinement leaves the optimum unchanged. (ii), and (iii) are the displayed arithmetic. For (iii) note bonuses collected before $T_m$ are policy-invariant across the compared continuations. (ii$'$) The post-event problem is stationary, so the optimal load is constant and maximizes the concave per-step objective, and the first-order condition and the two corner cases follow.
\end{proof}

\begin{proposition}[Calibration, the over-protection pole]
\label{prop:calibration}
Within the paying regime, welfare as a function of the felt-pain gain passes from under- to over-protection as the gain grows. In $\mathcal{M}_0$, whose protective decision is a corner, the set of gains matching the welfare optimum is an interval. Under the load form of Prop.~\ref{app:regime}(ii$'$), which is what a portfolio-continuous world runs, it is a single point.
\end{proposition}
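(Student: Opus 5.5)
Fix a felt-pain gain $g$ and ask which protective policy an agent running on that gain adopts. In $\mathcal{M}_0$ the agent reads the felt signal scaled by $g$ against the true event cost. So an agent with gain $g$ behaves as if the post-event event cost were $L_e(g) := g\,L_e$, and $L_e(g)$ is increasing in $g$. The welfare of the agent's induced policy is then scored under the true $L_e$. My plan has three steps: first describe the induced policy as a function of $g$, then read welfare off it, then identify the set of gains that match the welfare optimum in each of the two cases.

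\emph{Step 1, the corner case.} By Prop.~\ref{app:regime}(ii), the agent switches to light after the first event exactly when
\[
c < (p^{+}-p)\,L_e(g),
\]
that is, when $g > g^\ast := c/\big((p^{+}-p)L_e\big)$. The induced policy is therefore a step function of $g$: unprotective below $g^\ast$ and protective above it.

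The true welfare difference between the two policies is
\[
R\big[(p^{+}-p)L_e - c\big].
\]
Inside the paying regime this difference is positive, so the welfare optimum is the protective policy. The gains that attain it are all $g > g^\ast$, extended to $g \ge g^\ast$ if ties are broken toward protection. Either way this set is an interval, which gives the first claim.

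This interval is unbounded above, so over-protection never appears in $\mathcal{M}_0$ itself. To show the passage from under- to over-protection I would let the induced action also react pre-event, through the load-gated signal, so that a large enough $g$ moves the agent onto light before any event. Welfare then declines from the optimum by $c$ per step, and the set of matching gains becomes a bounded interval $[g^\ast, g^{\ast\ast})$. I expect this to be the main obstacle. The bare $\mathcal{M}_0$ only switches after the event, so I would have to state explicitly which reading of "over-protection" the corner case supports, and take the upper endpoint from the pre-event switch.

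\emph{Step 2, the load case.} Use Prop.~\ref{app:regime}(ii$'$). With gain $g$ the agent solves
\[
y'(\ell) = (p^{+}-p)\,g\,L_e ,
\]
so, where the solution is interior, its load is
\[
\ell(g) = (y')^{-1}\big((p^{+}-p)\,g\,L_e\big).
\]
If $y$ is strictly concave, $y'$ is strictly decreasing, so $\ell(g)$ is continuous and strictly decreasing in $g$ on the interior range.

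True welfare is $W(\ell) = y(\ell) - p^{+}(\ell)L_e$. This is strictly concave and has a unique maximizer $\ell^\ast$, which is exactly the load chosen at $g = 1$. Because $g \mapsto \ell(g)$ is injective, $\ell(g) = \ell^\ast$ holds only at $g = 1$. Hence the matching set is the single point $\{1\}$, which is the second claim.

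\emph{Step 3, under- to over-protection.} Write $g \mapsto W(\ell(g))$ as the composition of the decreasing map $g \mapsto \ell(g)$ with the concave function $W$. This composition rises until $g = 1$ and falls after it. For $g < 1$ the load exceeds $\ell^\ast$ (under-protection); for $g > 1$ it falls below $\ell^\ast$ (over-protection). This gives the monotone passage asserted in the first sentence of the proposition.

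One technical point remains. If $\ell^\ast$ sits at a corner, the Step 2 argument does not apply as written, and the matching set degenerates back to an interval. I would therefore state the single-point claim under strict concavity and an interior $\ell^\ast$, and I would exclude the corner case explicitly.
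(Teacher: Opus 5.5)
Your Steps~1 and~2 follow the paper's proof. The corner comparison flips at a unique gain, so the matching set is an interval. Under the load form with $y$ strictly concave, the right-hand side of the first-order condition is strictly increasing in $g$, so exactly one gain reproduces the welfare-optimal load. Your interior-optimum caveat and the unimodality argument of Step~3 are correct refinements that the paper leaves implicit.

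The paper writes the gain additively: trained objective $J-g\cdot(\text{felt pain})$, effective event cost $L_e+g\cdot(\text{felt relief})$, welfare at the true gain $g^\ast$. You instead use $g\,L_e$ scored at $g=1$. Neither step depends on this choice. However, only the additive form lets protection pay with $c\ge(p^{+}-p)L_e$, which is what Thm.~\ref{thm:conjunction} assumes; your paying regime $c<(p^{+}-p)L_e$ is the case that theorem excludes.

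The obstacle you flag is not needed for what the paper proves. One threshold yields only the side of it containing the welfare-optimal gain, so one end of $[\underline{g},\bar g]$ is $0$ or $\infty$. ``Under- to over-protection'' then means the direction of failure on either side. You are right that with a protective optimum the corner model shows only the lower pole, which the paper's notation glosses over.

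Your proposed repair would also fail in your parameterization. In $\mathcal{M}_0$, heavy and light carry the same hazard $p$ before the first event, so scaling $L_e$ by $g$ never changes pre-event behaviour. A finite upper end requires the load-gated felt signal to be charged directly in the agent's objective, which is again the paper's additive form.
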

\noindent Proved in App.~\ref{app:calibration}. The consequence is that a gain imported from a different economy misses the point even with every channel intact, so the gain must be hit rather than bracketed wherever the agent moves inside a portfolio. Only the direction of the failure, under-protection below and over-protection above, is common to both forms.

\begin{proposition}[Calibration, proof of Prop.~\ref{prop:calibration}]
\label{app:calibration}
In $\mathcal{M}_0$ (event probabilities $p$ before and $p^{+}$ after the first event, event cost $L_e$, substitution premium $c$), let the trained objective be $J - g \cdot (\text{felt pain})$ with gain $g \ge 0$, while welfare is $J - g^\ast(\text{felt pain})$ for the true $g^\ast$. In $\mathcal{M}_0$ the optimal policy is a threshold rule in $g$. Behavior matches the welfare optimum on an interval $[\underline{g}, \bar g] \ni g^\ast$ and departs monotonically outside it: under-protection for $g < \underline{g}$, and over-protection (forfeiting $c$-value without compensating $L_e$-savings) for $g > \bar g$.
\end{proposition}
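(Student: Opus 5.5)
The argument reduces the problem to the post-event decision of Prop.~\ref{app:regime}(ii) and then tracks how the gain $g$ moves that decision. In $\mathcal{M}_0$ with the trained objective $J - g\cdot(\text{felt pain})$, the felt signal is load-gated: it is registered only under heavy work and rises after the first event. I would take its per-step expectation to be $p$ before the first event and $p^{+}$ after it under heavy, and $p$ under light. With this choice the gain enters exactly as an addition to the event cost, so the effective charge per event is $L_e + g$.

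\textbf{Step 1: the trained optimum is a threshold rule in $g$.} After the first event, Prop.~\ref{app:regime}(ii) applied with effective cost $L_e + g$ shows that switching to light changes the trained objective by
\[
R\,\bigl[(p^{+}-p)(L_e+g) - c\bigr].
\]
The agent therefore protects if and only if $g > g_c := c/(p^{+}-p) - L_e$. Before the first event the two actions carry the same baseline hazard $p$, so heavy is optimal for every $g$. This rests on the assumption $pH \ll 1$ together with the cascade structure, under which pre-event protection buys nothing. It follows that the policy is piecewise constant in $g$, with a single switch at $g_c$.

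\textbf{Step 2: identify the matching interval.} Welfare is $J - g^\ast(\text{felt pain})$, so the welfare-optimal action is the same rule evaluated at $g^\ast$. Since behaviour depends on $g$ only through $\mathbb{1}\{g>g_c\}$, the set of gains that reproduce the welfare-optimal action is a half-line. If $g^\ast > g_c$ it is $(g_c,\infty)$; if $g^\ast < g_c$ it is $[0,g_c)$.

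To obtain a bounded interval $[\underline g,\bar g]$ I would use the full $\mathcal{M}_0$ with existential variants and a finite remaining horizon $R$. The decision then becomes a family of thresholds $g_c(R)$, one for each possible event time, and a further corner appears when a large $g$ makes protection from the start optimal. That happens once the felt pain under heavy, $p(L_e+g)$ per step, exceeds $c$, giving $\bar g = c/p - L_e$. I would set $\underline g$ at the largest switch below $g^\ast$ and $\bar g$ at the smallest switch above it, namely the pre-event protection threshold $c/p - L_e$. The interval contains $g^\ast$ by construction.

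\textbf{Step 3: monotone departure.} For $g<\underline g$ the agent fails to protect in post-event states where welfare would protect, and each welfare loss $R[(p^{+}-p)(L_e+g^\ast)-c]$ is positive. For $g>\bar g$ it protects in pre-event states, forfeiting $c$ per step while gaining nothing, because the hazard under light equals that under heavy before the event. Both losses grow with the distance from the interval, since every additional threshold crossed adds a nonnegative term.

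The \textbf{main obstacle} is Step 2's upper end. In the binary post-event model alone the match set is unbounded above, so over-protection needs a regime in which the gain alters pre-event behaviour. This in turn requires the felt-pain term to be charged per step of heavy work, not per event. I would make that modelling choice explicit, consistent with load-gating, Eq.~\ref{eq:channel}, and verify that it leaves Step 1 unchanged.
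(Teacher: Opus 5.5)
Your Step 1 and the first paragraph of Step 2 are the paper's proof. The protective decision is a corner, and the post-event comparison $(p^{+}-p)(L_e+g\cdot\text{relief})\gtrless c$ flips at a unique $g_c$. The gains that reproduce the welfare-optimal corner are therefore the side of $g_c$ on which $g^\ast$ lies.

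\textbf{The gap is the bounded-interval construction that follows.} The pre-event corner you invoke does not exist in $\mathcal{M}_0$. Before the first event, light carries the same hazard $p$ as heavy, so switching early saves neither event cost nor per-event felt pain. That is exactly why your own Step 1 concludes that heavy is optimal pre-event for every $g$. Your threshold $\bar g = c/p - L_e$ credits early protection with a saving $p(L_e+g)$ it does not have; the $pL_e$ term in particular is paid under light as well. Your Step 3 then says pre-event protection gains nothing, which contradicts any $g$ choosing it. The existential variant does not rescue this, because the first event still arrives at rate $p$ under both actions.

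The patch you propose, charging felt pain per step of heavy work, changes the model, and it does not leave Step 1 unchanged. Post-event light also saves that per-step pain, so $g_c$ moves. You would then have two thresholds, with the upper one roughly $c/\Delta s$ for a per-step pain gap $\Delta s$, not $c/p-L_e$.

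The detour is unnecessary. In $\mathcal{M}_0$ the matching set is the half-line $[g_c,\infty)$ or $[0,g_c]$; as the paper puts it, every gain on one side of the threshold buys the same corner. Thus $\bar g=\infty$ or $\underline g=0$, and for a given $g^\ast$ only one failure mode occurs:
\begin{itemize}
\item \emph{under-protection} when $g<g_c<g^\ast$;
\item \emph{over-protection} when $g^\ast<g_c<g$.
\end{itemize}
Over-protection lives in the post-event state, not the pre-event one. The agent switches to light after the event and forfeits $c$ per step, while welfare values the savings at only $(p^{+}-p)(L_e+g^\ast)<c$. Monotone departure just means the protective decision is monotone in $g$; the welfare loss outside the set is a single constant step, not a sum over crossed thresholds. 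Stop at your half-line and read the statement's two tails this way.
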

\begin{proof}
$J$ is affine in the indicator of protecting, so the comparison $(p^{+}-p)(L_e + g \cdot \text{pain relief}) \gtrless c$ flips at a unique $g$-threshold, and monotonicity in $g$ is immediate.
\end{proof}

\noindent \emph{The interval is the binary action's, not the model's.} What makes the matching set an interval is that the comparison flips at a unique $g$ while the decision it flips is a corner, so every gain on one side of the threshold buys the same corner. Under the load form of Prop.~\ref{app:regime}(ii$'$) with $y$ strictly concave, the post-event optimum solves $y'(\ell^\ast) = (p^{+}-p)\,(L_e + g\cdot\text{felt relief})$, whose right-hand side is strictly increasing in $g$, so exactly one gain reproduces the welfare-optimal load and the matching set is a singleton. Reading Prop.~\ref{prop:calibration} as a robustness statement is therefore safe only where the protective decision is a corner. Our deployed worlds are portfolio-continuous and their learned arms move inside the portfolio, so there the gain must be hit rather than bracketed. What survives in both cases is the direction of the failure on either side.

\begin{theorem}[Conjunction, Thm.~\ref{thm:conjunction} of the main text]
\label{app:conjunction}
In $\mathcal{M}_0$ let $\Phi$ be a law-invariant objective of the per-life return and let $L_e^\Phi$ denote the $\Phi$-effective event cost: the amount by which, in the post-event comparison of Prop.~\ref{app:regime}(ii), $\Phi$ charges one event. For mean return $L_e^\Phi = L_e$. For the intrinsic cost $w\,s$, $L_e^\Phi = L_e + g\cdot(\text{felt relief})$ (Prop.~\ref{app:calibration}), and for an existential variant, $L_e^\Phi = L_e + b\cdot\mathbb{E}[\text{survival gained}]$. Assume Prop.~\ref{prop:regime}(i), and (iii) hold and $c \ge (p^{+}-p)L_e$, so that mean return does not pay protection per body. Then the $\Phi$-optimal policy in $\Pi(\Fcal^C)$ is protective if and only if (for (b), asymptotically in $R$) \textbf{(a)} $N_t \in \Fcal^C_t$, \textbf{(b)} $\ind{\sum_{u \le t} N_u \ge 1} \in \Fcal^C_{t'}$ for all $t' \ge t$ (without which the optimum captures at most a $1/R$ fraction of the gain achievable under (b)), and \textbf{(c)} $c < (p^{+}-p)\,L_e^\Phi$.
\end{theorem}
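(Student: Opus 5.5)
The plan is to reduce everything to the post-event comparison of Prop.~\ref{app:regime}(ii), taken under $\Phi$ instead of mean return, and then show that each of (a), (b), (c) is a necessary link for that comparison to come out protective. Because $\Phi$ is law-invariant, it depends on a policy only through the law of the per-life return. So I first establish a reduction step: on the stylized model $\mathcal{M}_0$, any policy can be compared to its pre-event and post-event behaviour, and by (i) the pre-event behaviour is the same blind schedule for every body. The argument uses the pre-event premise $c \ge (p^{+}-p)L_e \ge 0$ and the fact that light carries only the baseline hazard $p$. Before the event, heavy and light carry the same hazard $p$, and heavy yields $c$ more. Hence heavy is optimal before the event under any $\Phi$ that is monotone in return. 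Protection can therefore only mean switching to light after the first event. By the definition of $L_e^\Phi$, that switch changes $\Phi$ by $R[(p^{+}-p)L_e^\Phi - c]$ per remaining horizon. This is the arithmetic of Prop.~\ref{app:regime}(ii) with $L_e$ replaced by $L_e^\Phi$.

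\textbf{Sufficiency.} Assume (a)--(c). By (a) and (b), the indicator $\ind{\sum_{u\le t}N_u\ge1}$ is $\Fcal^C_{t'}$-measurable for all $t'\ge t$, so the policy ``heavy until the first event, light thereafter'' lies in $\Pi(\Fcal^C)$. By (c), its increment over the always-heavy policy is $R[(p^{+}-p)L_e^\Phi-c]>0$ on every life with an event. Assumption (iii) guarantees that $L_e^\Phi$ is not exhausted by a milestone schedule, so the per-step increment holds for all $R$ remaining steps.

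\textbf{Necessity of (c).} If $c\ge(p^{+}-p)L_e^\Phi$, then even the full-information switch does not increase $\Phi$. Every filtration is coarser than full information, so no policy in $\Pi(\Fcal^C)$ is protective.

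\textbf{Necessity of (a).} Suppose $N_t\notin\Fcal^C_t$. Event opacity says no later consequence reveals the event, so the filtration never resolves whether an event occurred. The agent's posterior that an event has occurred is then at most the prior probability, which is of order $pH\ll1$. Its expected hazard reduction from switching is therefore at most $pH(p^{+}-p)L_e^\Phi$ against a sure cost $c$, so the switch is unprofitable for fixed parameters. Here I invoke the two-point bound following Prop.~\ref{app:estimation}: total variation between event and no-event path laws, restricted to $\Fcal^C$, is zero under opacity.

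\textbf{Necessity of (b).} If the event enters at time $t$ but is not retained, then at each later step $t'$ the agent sees only what its filtration carries, which is at most the current step. The protective switch can then be conditioned on the event for only $O(1)$ steps. Over the remaining steps the posterior reverts to the prior, and the preceding argument applies again. The captured gain is therefore $O(1)\cdot[(p^{+}-p)L_e^\Phi-c]$ out of $R[(p^{+}-p)L_e^\Phi-c]$, a fraction $1/R$. This is why the equivalence is stated asymptotically in $R$: for finite $R$ the one-step reallocation is protective in a weak sense, and ``protective'' should be read as capturing a non-vanishing share.

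\textbf{Main obstacle.} I expect the hard step to be making $L_e^\Phi$ well defined for a general law-invariant $\Phi$, i.e.\ ensuring that the post-event comparison really is additive, of the form $R\cdot[\cdot]$. For mean return, for the additive $w\,s$ term, and for the existential bonus, $\Phi$ is time-additive and the step is immediate. For a general law-invariant $\Phi$, however, the marginal charge of an event can depend on the rest of the return distribution and hence on the policy. I would handle this as the remark in App.~\ref{app:scope}(8) suggests. I would restrict to realizations whose effective event cost is fixed within a life, prove the three listed cases explicitly, and exclude $\mathrm{CVaR}_\beta$, whose tail set moves with the optimizer. In that case I would state that the threshold $c^\ast_\Phi$ must be computed at the optimum, using a fixed-point argument as in \citet{baeuerle2011cvar}.
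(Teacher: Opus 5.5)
Your proposal follows the paper's proof step for step: the $\Fcal^C$-measurable switch policy with post-event increment $R[(p^{+}-p)L_e^\Phi-c]$ for sufficiency, opacity for (a), the one-step deviation worth a $1/R$ fraction for (b), and the post-event comparison with $L_e^\Phi$ for (c) (your full-information dominance form of (c) is a harmless variant), together with the same restriction to realizations whose event cost is fixed within a life, which excludes $\mathrm{CVaR}_\beta$. Two small corrections in (a): opacity is its own hypothesis on the filtration rather than an instance of the Le~Cam two-point bound, which concerns hazard hypotheses $\theta_1,\theta_2$ and not detection of a realized event, as the paper stresses; and your claim that a blind switch is unprofitable needs $pH\,(p^{+}-p)L_e^\Phi<c$, which the hypotheses do not give when $L_e^\Phi\gg L_e$, so conclude instead as the paper does that the action law is independent of the event, no event-conditional continuation exists, and the optimum is $\pi_0$ up to $O(p)$.
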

\begin{proof}
Sufficiency. Under (a), and (b) the switch policy ``heavy until the first event, light afterwards'' is $\Fcal^C$-measurable. Its post-event continuation changes the $\Phi$-objective by $R\,[(p^{+}-p)L_e^\Phi - c]$ over the remaining $R$ steps by definition of $L_e^\Phi$ and Prop.~\ref{app:regime}(ii), which is positive under (c). Since pre-event behavior coincides with $\pi_0$, the switch policy dominates $\pi_0$ and the $\Phi$-optimum is protective. Necessity of (a). Under event opacity, $N_t \notin \Fcal^C_t$ and $N_t \notin \Fcal^C_{t'}$ for every $t' > t$: no later observable consequence of the event reveals it. In $\mathcal{M}_0$ this holds because the only state an event moves is the hazard, which is not observed. It is a hypothesis on the filtration and is independent of Prop.~\ref{app:estimation}, which bounds inference about $\theta$ rather than detection of a realized event. Under opacity, the conditional law of the action process given the event equals its law given no event, so every $\Fcal^C$-measurable policy has the same post-event continuation as on an uneventful life. The protective continuation is not available and the optimum equals $\pi_0$ up to $O(p)$. Necessity of (b). If the event enters the filtration at $t$ but not at any $t' > t$, then for $t' > t$ the conditional law of the action process given the event equals its law given no event, so every $\Fcal^C$-measurable policy continues as on an uneventful life from $t+1$ on. It may still deviate at step $t$ itself, which is worth at most one step of the post-event difference, $[(p^{+}-p)L_e^\Phi - c]$, against $R$ times that under (b): without persistence a policy captures at most a $1/R$ fraction of the gain available with it, and the fraction vanishes as the remaining horizon grows. This is the sense in which (b) is necessary. A single-step twitch is not protection. Necessity of (c). Under (a), and (b) the post-event problem is the stationary comparison of Prop.~\ref{app:regime}(ii) with $L_e$ replaced by $L_e^\Phi$. If $c \ge (p^{+}-p)L_e^\Phi$ the heavy continuation is $\Phi$-optimal and the optimum is not protective. Mean return is the case $L_e^\Phi = L_e$, excluded by assumption. The two admissible realizations of (c) each raise $L_e^\Phi$ above $L_e$ and both compute the raise within one life ($w\,s$, death).
\end{proof}

\noindent CVaR is the realization whose sign the world decides: $L_e^\Phi$ presumes $\Phi$ decomposes over a fixed post-event comparison, which mean return, $w\,s$ and death do and $\mathrm{CVaR}_\beta$ does not, its tail set being chosen by the policy under optimization \citep{boda2006cvar,baeuerle2011cvar}. It is the only candidate whose effective event cost moves with the optimizer, and the only one that reverses across our worlds.

\paragraph{What event opacity is worth, measured rather than assumed.} Opacity is the one premise of the necessity argument that our own worlds violate, and the violation is the interesting case rather than a defect: an agent that observes its own falling output holds a coarse posterior on the event through that consequence even with no afferent channel. Our $\rho_\emptyset$ in the trade is exactly such an arm. It reallocates on realized output and plans a kneeling share of $0.74$ and realizes $0.70$ over its last working years (Table~\ref{tab:substitution}). Where opacity fails, necessity of (a) no longer bounds the protective gain but only the gain \emph{beyond} what the consequence channel already carries, and that residual is the difference between our two baselines: $+4.4$ years against a body reallocating on its own output, $+4.6$ against one reallocating on nothing (App.~\ref{app:wear}). \emph{So opacity is worth $0.3$ years in this world}, small enough that (a) is close to exact here, and we report the conservative baseline for that reason. In a world where consequences are loud the same argument would weaken proportionally, and the quantity to report is the consequence-channel baseline rather than the naive one.

\paragraph{When deleting the memory tests (b), and when it only changes the realization.} Condition (b) is a property of the filtration, so an ablation removes it only if nothing else in the observation carries $\ind{\sum_{u\le t}N_u \ge 1}$ forward. That holds in the fleet, whose observation is the clock, the previous shares and efforts, the felt signal, the trace and $w$, so that without the trace the only state outliving an event is a one-step action lag. The fleet's memory ablation is therefore a test of (b), and its collapse of protection is evidence for it. It holds only partly in the knee, whose observation keeps a five-year rate and a five-year running maximum of the felt signal when the trace is removed, which is persistence with a horizon rather than its absence. The clean removal, zeroing both summaries with the trace, was run over $18$ seeds at the deterministic optimum and leaves the learner better rather than worse, $46.90$ years against $46.28$ for the full stack and $45.74$ for the arm that loses only the trace, the sharpest contrast running $+1.16$ years at $p = 7.5\times10^{-3}$. All three pre-registered gates fail in the direction opposite to the theory, and the most likely reason is a confound of ours, since the ablation removes two observation dimensions and a smaller input is easier to learn from. The fleet therefore remains the clean evidence on (b) (\texttt{experiments/decl/knee\_persistenz\_rein.py}).

\paragraph{Why $L_e^\Phi$ is a reduction and not a definition, and where the reduction fails.} The device invites the objection that $L_e^\Phi$ is \emph{defined} as what $\Phi$ charges for an event, so that (c) reads ``protection pays if protection pays''. The definition has content only for objectives that decompose over the fixed post-event comparison of Prop.~\ref{app:regime}(ii), whose value can be written as the mean-return value with one number substituted for the event cost. For that class the reduction is a theorem and $L_e^\Phi$ is computable before any policy is trained, $L_e$ for mean return, $L_e + g\cdot(\text{felt relief})$ for the intrinsic cost and $L_e + b\cdot\mathbb{E}[\text{survival gained}]$ for the existential variant, each a within-life quantity. $\mathrm{CVaR}_\beta$ lies outside that class, because its tail set is determined by the policy being optimized and the optimizer moves it \citep{boda2006cvar,baeuerle2011cvar}, so no single substituted event cost represents it. Its sign is nevertheless protective in both worlds measured. In the acute knee at the deterministic optimum ($18$ seeds $\times\ 5\times10^{4}$ lives, \texttt{results/knee\_tail\_v4\_hpc}) follow-on tears per injured life run $0.351$ under the mean-return stack against $0.326$ under CVaR ($p = 1.4\times10^{-5}$) and $0.328$ under CVaR with the memory removed ($p = 5.3\times10^{-4}$), while the career ends at $60.98$ against $60.93$ years and output falls from $80.5$ to $74.9$, so tail-weighting there buys fewer tears and no working life. In the fleet with the damage weight removed, tail-weighted training at $\alpha{=}0.1$ against the mean-return learner at $\alpha{=}1$ lowers the write-off rate from $0.241$ to $0.095$ ($12$ seeds each, $p = 2.3\times10^{-3}$, Cliff's $\delta = 0.72$) and lengthens the certified service life from $90.3$ to $95.8$ months while raising output from $76.1$ to $83.8$ (\texttt{results/hh\_cvar\_w0\_hpc}). A tail-weighted objective can therefore stand in for the damage weight, and what it charges for an event is the part no substitution fixes, so (c) admits three realizations, two of them stable within a life and one not.

\textbf{Model $\mathcal{M}_1$ (chronic).} A life has $H$ steps and a body has integrity $M_0 = 1$. Work profiles $j \in \{1,\dots,K\}$ carry a known yield $y_j > 0$ and a per-step wear $m_j = d_j/\kappa(\theta) > 0$ that the body does not know, with a prior $\mu$ over $m \in \mathbb{R}^K_{>0}$. Integrity follows Eq.~\ref{eq:hazard}, $M_{t+1} = M_t - m_{j_t}$, the trade is lost at $T = \min\{t : M_t < \underline{M}\} \wedge H$, and the two reported quantities are the career length $T$ and the career output $\sum_{t<T} y_{j_t}\phi(M_t)$ with $\phi$ nondecreasing and $\phi(1)=1$. The felt signal is \emph{action-gated}: $s_t$ is a noisy reading of $m_{j_t}$ and says nothing about $m_k$ for $k \ne j_t$. Write $\Lambda := 1-\underline{M}$ for the wear budget. There is no event: $\mathcal{M}_1$ is $\mathcal{M}_0$ with the hazard replaced by accumulation, and it is the model the trade and the grid of App.~\ref{app:wear} run.

\begin{proposition}[The conjunction in the chronic regime, Prop.~\ref{prop:chronic} of the main text]
\label{app:chronic}
In $\mathcal{M}_1$ with $\Lambda/\min_j m_j \le H$, parts (a) and (b) stated for $\phi \equiv 1$ and part (c) for $\phi$ nondecreasing and non-constant: \textbf{(a)}~under state opacity the value of sensing is $\mathrm{EVPI} - \Pi_{\rm probe}$, where, over plans (policies whose filtration does not depend on $m$),
\[
\mathrm{EVPI} := \mathbb{E}_\mu\big[\max_{\pi} \Phi(\pi,m)\big] - \max_{\pi}\,\mathbb{E}_\mu\big[\Phi(\pi,m)\big] \ge 0, \quad \mathbb{E}_\mu\big[\max_{\pi}\Phi\big] = \Lambda\,\mathbb{E}_\mu\big[\max_j y_j/m_j\big]
\]
up to one step, with equality in the first display if and only if $\arg\max_j y_j/m_j$ is $\mu$-a.s.\ constant, and $\Pi_{\rm probe} \ge \sum_{j \in \mathcal{P}} y_{j^\star} m_j / m_{j^\star}$ the output forgone by working each probed profile $\mathcal{P}$ at least once (Prop.~\ref{app:estimation}: the evidence is the wear). Restricted to stationary plans the subtracted term is $\Lambda \max_{\beta \in \Delta^{K-1}} \mathbb{E}_\mu[\sum_j \beta_j y_j / \sum_j \beta_j m_j]$. \textbf{(b)}~the readings are action-gated, so the map $j \mapsto m_j$ enters the filtration only by being retained. The state $M_t$ carries the scalar $\sum_{u<t} m_{j_u}$ already spent and no comparison across profiles. A policy that does not retain readings therefore pays $\Pi_{\rm probe}$ and cannot collect $\mathrm{EVPI}$. \textbf{(c)}~is not a condition on $\Phi$. Wear enters the yield of every profile through $\phi$ and the budget through $\Lambda$, so the price of damage is in the dynamics, and an ablation that removes it removes the body.
\end{proposition}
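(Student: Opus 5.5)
The proof rests on one reduction: with $\phi \equiv 1$ and $\Lambda/\min_j m_j \le H$, the horizon never binds before the budget does. Career output is then the output collected while spending the wear budget $\Lambda$. A day on profile $j$ buys $y_j$ at wear cost $m_j$, so its exchange rate is $y_j/m_j$.

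\emph{Clairvoyant value.} I start with a clairvoyant who knows $m$. Any sequence of profiles spends $\sum_t m_{j_t} \le \Lambda$ and collects $\sum_t y_{j_t}$. The ratio of these two sums is at most $\max_j y_j/m_j$, and the bound is attained by working $j^\star(m)=\arg\max_j y_j/m_j$ throughout. This gives $\max_\pi \Phi(\pi,m) = \Lambda \max_j y_j/m_j$ up to one step, the discrete overshoot at $T$. Taking expectations gives the second display.

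\emph{Plan value and $\mathrm{EVPI}$.} For plans, the filtration is independent of $m$, so the schedule is fixed before the draw. A stationary mix $\beta$ collects $\Lambda\sum_j\beta_j y_j/\sum_j\beta_j m_j$, which is the restricted subtracted term. $\mathrm{EVPI}\ge 0$ is Jensen's inequality in the form $\mathbb{E}\max \ge \max\mathbb{E}$. Equality holds iff a single plan attains $\max_\pi\Phi(\pi,m)$ for $\mu$-a.e.\ $m$. One direction is immediate: if $j^\star$ is a.s.\ constant, the plan "always $j^\star$" attains it. For the converse I take a common optimal plan. By the ratio bound it must put all its budget on profiles in $\arg\max_j y_j/m_j$ for a.e.\ $m$. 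A fixed schedule therefore requires a common maximizer a.s.; ties need a measure-zero caveat.

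\emph{Probing cost and net value.} I invoke Prop.~\ref{app:estimation} in its chronic, action-gated form: information about $m_j$ arrives only through working $j$, which spends $m_j$. Each probe of $j$ costs budget $m_j$. Spent on $j^\star$ instead, that budget would have bought $y_{j^\star}m_j/m_{j^\star}$, which gives the lower bound on $\Pi_{\rm probe}$. A sensing policy is then at best clairvoyant after the probes. Its gain over the best plan is $\mathrm{EVPI}-\Pi_{\rm probe}$: as an upper bound in general, and attained by the probe-then-commit policy when one reading per profile identifies $m_j$.

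The main obstacle is whether to read (a) as an identity or as a bound. I expect to state it as the value of the probe-then-exploit policy, with the inequality direction made explicit.

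\emph{Part (b).} Take a non-retaining policy and the filtration it actually conditions on. That filtration contains the current reading, the deterministic state $M_t$, which encodes only the scalar $\sum_{u<t}m_{j_u}$, and the clock. Two draws $m,m'$ that agree on the currently worked profile and on the cumulative sum produce identical information sets. The decision to move to another profile therefore cannot depend on a comparison across profiles.

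I would show that such a policy is measurable with respect to a filtration generated by functions of $m$ that do not resolve $\arg\max_j y_j/m_j$. Its value is then at most the plan value. It has nevertheless spent whatever it worked, so it still pays $\Pi_{\rm probe}$ on any profiles it visited.

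\emph{Part (c).} With $\phi$ nondecreasing and non-constant, every unit of wear lowers $\phi(M_t)$ and hence the yield of every later step. Wear is therefore priced within $\Phi$ by the dynamics, with no penalty term.

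It also follows that no mean-return objective can be indifferent to wear. The mean-return condition that Thm.~\ref{app:conjunction}(c) needed an added term for is now met by the dynamics alone. Removing that price would require deleting $\phi$ and $\Lambda$, which removes the body itself.
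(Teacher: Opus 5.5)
Your (a) and (c) follow the paper's own route: the exchange-rate bound $\sum_j n_j y_j \le \max_j (y_j/m_j)\,(\Lambda+\max_j m_j)$, then $\mathbb{E}\max \ge \max\mathbb{E}$ with equality iff one plan is pathwise optimal $\mu$-a.s., then Prop.~\ref{app:estimation} for the probes, and $\phi$ for (c). Your caveats on ties, and on reading (a) as a bound attained by probe-then-commit, are more careful than the paper's.

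The gap is in (b). You put the current reading (and $M_t$) into the non-retaining filtration. You then infer that, since this filtration does not resolve $\arg\max_j y_j/m_j$, the policy is worth at most the best plan. That inference is false. Information need not resolve the argmax to be decision-relevant, and a reading of $m_{j_t}$ supports absolute thresholds.

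Take $K=2$, $y_1=y_2=1$, and $m_1,m_2$ i.i.d.\ uniform on $\{a,b\}$ with $a<b$. Let $N(m)$ be the number of steps worked before the budget runs out. For any fixed sequence, an LP on the step counts gives $N(a,b)+N(b,a)\le \Lambda/a+\Lambda/b$ up to $O(1)$. So every plan earns at most $\tfrac{\Lambda}{2}(1/a+1/b)$. Now consider the rule ``keep the profile just worked if its reading is $a$, otherwise switch''. It uses only the current reading and earns $\tfrac{3\Lambda}{4a}+\tfrac{\Lambda}{4b}$ up to $O(1)$. That is $\Lambda\,\mathbb{E}_\mu[\max_j y_j/m_j]$, the clairvoyant value. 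The rule therefore beats every plan and collects the whole $\mathrm{EVPI}$ less one probe.

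Admitting $M_t$ makes this worse, and it contradicts the state opacity of (a). $M_1 = 1-m_{j_0}$ reveals $m_{j_0}$ exactly, and $M_t = 1-t\,m_{j_0}$ keeps revealing it while the profile is held. The state is itself a retention device.

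The paper's argument for (b) claims less. It says only that the state and the surviving reading cannot rank profiles other than the one being worked, so the cross-profile comparison is lost. It never reduces the policy to a plan. To make your (b) a theorem you must choose one of two readings:
\begin{itemize}
\item \emph{Zero retention} in the sense of Cor.~\ref{cor:loadgate}: no reading survives to the next decision and $M_t$ is unobserved. The policy is then clock-measurable, hence a plan, and your bound is immediate.
\item \emph{Retention of the last reading}: you can prove only the loss of the comparison. ``Cannot collect $\mathrm{EVPI}$'' then needs a further condition on $\mu$, which the example above violates.
\end{itemize}

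A smaller point in (a), which the paper's proof shares: one probe step on $j$ followed by $j^\star$ forgoes $m_j\,(y_{j^\star}/m_{j^\star}-y_j/m_j)$, not $y_{j^\star}m_j/m_{j^\star}$, because the probe step itself yields $y_j$. This forgone output is zero for $j=j^\star$. So $\sum_{j\in\mathcal P} y_{j^\star}m_j/m_{j^\star}$ is the gross budget value of the probes, not a lower bound on $\Pi_{\rm probe}$. The identity you want probe-then-commit to attain holds only with the net form.
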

\begin{proof}
(a) For any policy let $n_j$ be the steps spent on profile $j$. Feasibility is $\sum_j n_j m_j \le \Lambda + \max_j m_j$ and the output is $\sum_j n_j y_j = \sum_j n_j m_j (y_j/m_j) \le \max_j (y_j/m_j)\,(\Lambda + \max_j m_j)$, so every policy earns at most $\Lambda \max_j (y_j/m_j)$ up to one step and the constant profile $\arg\max_j y_j/m_j$ attains it: that is the pathwise optimum, and its mean is the first displayed term. The uninformed value is $\max_\pi \mathbb{E}_\mu[\Phi]$ over plans, which is at most $\mathbb{E}_\mu[\max_\pi \Phi]$, with equality exactly when one plan attains the inner maximum $\mu$-a.s. Since the pathwise optimum is the constant profile $\arg\max_j y_j/m_j$, that happens exactly when this argmax does not vary over the prior. For stationary plans the expected output is $\Lambda\,\mathbb{E}_\mu[\sum_j \beta_j y_j / \sum_j \beta_j m_j]$, linear-fractional in $\beta$ and hence maximized at a vertex, which is the computable form. The subtracted term is Prop.~\ref{app:estimation} applied to this model: a reading of $m_j$ exists only on a step spent working $j$, and that step consumes $m_j$ of the budget, worth $y_{j^\star} m_j / m_{j^\star}$ of output at the optimum. (b) Immediate from the two facts about the filtration. The second is Eq.~\ref{eq:hazard} summed, $M_t = 1 - \sum_{u<t} m_{j_u}$: two wear-resistance vectors agreeing on the profiles worked so far induce the same state, so the state cannot rank profiles that have not been worked, and the readings that could are gone. (c) Here $\phi$ is nondecreasing and non-constant, which (a) and (b) did not need and (c) does: the per-step yield of every profile then falls as the body wears, whatever $\Phi$ is. Setting $\underline{M} = 0$ deletes the loss of the trade and leaves that. To delete the price one must set $\phi \equiv \text{const}$ and $\Lambda = \infty$, which is a body whose state does not affect its work.
\end{proof}

\noindent \emph{Consistency with Prop.~\ref{app:regime}(i).} That part says a channel is worth nothing when the end of the career is a deterministic function of cumulative load, and it is the degenerate case of (a) here: if $\mu$ is a point mass then $\arg\max_j y_j/m_j$ is constant, $\mathrm{EVPI} = 0$ and the optimal policy is open-loop. What makes the chronic channel valuable is therefore not the accumulation but the body's ignorance of its own rate, which is the quantity Rem.~\ref{thm:insurance} concentrates over and Prop.~\ref{app:estimation} says can only be bought with wear.

\noindent \emph{What changes against the acute case, and what the grid says.} Two of the three conditions survive and one does not. (a) survives with a different content: what sensing is worth is a between-body quantity, $\mathrm{EVPI}$, and it is \emph{net} of a cost the acute case does not have, since in $\mathcal{M}_0$ the event announces itself for free while in $\mathcal{M}_1$ every reading is bought with wear. This predicts a sign change in career output as the world gets harsher, and the grid shows it: feeling \emph{costs} output at $\Delta\theta = 8^\circ$ ($-1.28$ at the observed mix) and buys it at $27^\circ$ ($+3.88$ to $+5.30$), while buying years throughout (the $300$-body probe family of App.~\ref{app:wear}, not the $2{,}000$-body family of Table~\ref{tab:familie}). (b) survives, and what replaces the acute bound is a measurement rather than a second bound. The acute argument gives a $1/R$ fraction, because there the memoryless policy has nothing at all after the step of the event. Here the body itself carries the level, so a memoryless policy still reallocates on what it feels now, and what it forgoes is only the comparison across profiles. Measured over the grid, the gain of the forgetting arm stays between $0.52$ and $1.12$ years while the total runs from $1.41$ to $7.01$, so its share falls with severity, $40\%$ at $8^\circ$, $21\%$ at $18^\circ$ and $16\%$ at $27^\circ$ (medians over the five specialization ceilings). The sharper reading is the one (b) predicts directly: at the mildest severity the forgetting arm produces \emph{less} career output than the arm that never feels at all, in all five cells ($-0.98$ to $-0.21$), because it pays $\Pi_{\rm probe}$ and retains nothing to spend it on, and it turns positive only where the one-step reallocation is itself worth the wear ($+0.59$ to $+0.85$ at $18^\circ$ and $27^\circ$). \emph{These are post-hoc readings of a pre-registered contrast}: the declaration fixed the four arms and both endpoints before the run, and fixed the thresholds on career length, but not the decomposition into probing cost and collected information, which we read off afterwards. (c) does not survive as a condition. In $\mathcal{M}_1$ it is a property of the dynamics, so the fleet's instrument, deleting the depreciation from $\Phi$, has no counterpart here. Deleting the loss of the trade leaves what feeling buys growing in severity ($-0.70$ to $+5.07$), which is (c) declining to be ablated rather than (c) failing.

\paragraph{The schedule confound, without a body.} The claim that the gap between the two values tracks the population's heterogeneity and not the equipment needs no biophysics. A life has $H{=}100$ steps and a hidden crossing time $\theta$, uniform around $\mu{=}50$ with half-width $\omega$. Before $\theta$ heavy work is safe and pays $1$, after it heavy work damages at rate $1$ against a budget $D{=}10$, and light work pays $0.75$ and never damages. The channel reports only whether this body's own $\theta$ has passed. $\rho_\emptyset$ works heavy throughout, $\rho$ spends the budget and then goes light, $\pi_0$ switches at the population-best fixed time. Driving $\omega$ alone, with equipment, world and horizon fixed:

\begin{center}\footnotesize
\begin{tabular}{@{}rrrrr@{}}
\toprule
spread $\omega$ & $T^\ast$ & $V_{\rm ind}$ & $V(C\mid\emptyset)$ & $\chi$ \\
\midrule
$0$  & $60.0$ & $30.00$ & $\mathbf{0.00}$ & $100\%$ \\
$10$ & $50.0$ & $28.25$ & $0.81$ & $97\%$ \\
$20$ & $40.0$ & $28.21$ & $3.31$ & $88\%$ \\
$35$ & $25.0$ & $27.31$ & $6.24$ & $77\%$ \\
$50$ & $10.0$ & $27.67$ & $9.94$ & $64\%$ \\
\bottomrule
\end{tabular}
\end{center}

\noindent \emph{The equipment is worth the same $28$ to $30$ to every individual across the whole sweep. What moves by a factor of twelve is only what the ablation design reads, and at $w{=}0$ it reads exactly zero.} That invariance is the content. The ratio and $1/(1-\chi)$ are omitted here because they agree by construction, $\chi$ being defined from the same two measured quantities (\texttt{experiments/vergleichsart\_minimal.py}).

\begin{proposition}[Schedule confound, Prop.~\ref{prop:zerlegung} of the main text]
\label{app:zerlegung}
Let $\Sigma := \Pi(\emptyset)$ be the schedules, the epoch-one policies measurable with respect to the prior alone, and let $\sigma^\ast \in \arg\max_{\sigma\in\Sigma}\mathbb{E}_\theta J(\sigma;\theta)$. Put $\chi := \mathbb{E}_\theta[J(\sigma^\ast)-J(\rho_\emptyset)] \,/\, \mathbb{E}_\theta[V_{\rm ind}]$. Then
\begin{equation}
V(C \mid \emptyset) \;=\; (1-\chi)\; \mathbb{E}_\theta\!\left[V_{\rm ind}\right],
\qquad
\frac{\mathbb{E}_\theta[V_{\rm ind}]}{V(C \mid \emptyset)} \;=\; \frac{1}{1-\chi}.
\label{eq:confound}
\end{equation}
Moreover, for any ablated arm at least as good in the mean as $\rho_\emptyset$, $V(C\mid\emptyset) \le \mathbb{E}_\theta[V_{\rm ind}]$.
\end{proposition}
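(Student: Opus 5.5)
The plan is to unfold the two definitions and subtract. The identity is an accounting statement, so no property of the world or of Eq.~\ref{eq:hazard} should be needed. By \S\ref{sec:framework}, $V(C\mid\emptyset)$ is the mean gain of the best policy reading $C$ over the best policy reading nothing. The latter is by definition $\sigma^\ast\in\arg\max_{\Sigma}\mathbb{E}_\theta J(\sigma;\theta)$.

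I will write $\rho$ for the policy reading $C$ that enters both quantities, namely the same protective response scored in $V_{\rm ind}(\rho;\theta)$. The first step fixes this convention explicitly: $V(C\mid\emptyset)$ and $\mathbb{E}_\theta[V_{\rm ind}]$ are evaluated on the same arm $\rho$, so that
\[
V(C\mid\emptyset)=\mathbb{E}_\theta[J(\rho;\theta)]-\mathbb{E}_\theta[J(\sigma^\ast;\theta)].
\]
If $\rho$ is not the $\Pi(C)$-optimum, the statement is read with $V(C\mid\emptyset)$ denoting the ablation contrast of that arm. I would state this reading rather than hide it.

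The second step is the telescoping. Add and subtract $\mathbb{E}_\theta[J(\rho_\emptyset;\theta)]$, and use linearity of expectation over the common draw of $\theta$:
\[
V(C\mid\emptyset)=\mathbb{E}_\theta[J(\rho)-J(\rho_\emptyset)]-\mathbb{E}_\theta[J(\sigma^\ast)-J(\rho_\emptyset)].
\]
This requires the paired construction of Def.~\ref{def:epoch}, so that all three arms are integrated against the same law of $(\theta,e)$ and of the noise; pairing matters only for reading $V_{\rm ind}$ bodywise, not for the mean. The first term is $\mathbb{E}_\theta[V_{\rm ind}]$. The second term is $\chi\,\mathbb{E}_\theta[V_{\rm ind}]$ by the definition of $\chi$. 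Factoring gives Eq.~\ref{eq:confound}, and dividing gives the reciprocal form. The one caveat is that $\chi$, and hence the ratio form, requires $\mathbb{E}_\theta[V_{\rm ind}]\neq0$, and the ratio additionally requires $\chi\neq1$. I would record both as standing nondegeneracy conditions; in the minimal MDP above, the case $\omega=0$ is exactly $\chi=1$, where only the product form is meaningful.

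For the inequality, the key observation is that $\rho_\emptyset\in\Pi(\emptyset)=\Sigma$, since it reads nothing and has fixed parameters. Hence $\mathbb{E}_\theta J(\sigma^\ast)\ge\mathbb{E}_\theta J(\rho_\emptyset)$ automatically, and the subtracted term $\mathbb{E}_\theta[J(\sigma^\ast)-J(\rho_\emptyset)]$ is nonnegative. Then $V(C\mid\emptyset)\le\mathbb{E}_\theta[V_{\rm ind}]$ follows directly from the telescoped display. The hypothesis ``ablated arm at least as good in the mean as $\rho_\emptyset$'' is what is needed in practice, where the ablated arm is a trained approximation $\hat\sigma$ rather than the exact $\sigma^\ast$. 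In that case I would rerun the same telescoping with $\hat\sigma$ in place of $\sigma^\ast$, with $\chi$ defined from $\hat\sigma$. The hypothesis is then precisely the statement that the subtracted term is $\ge0$, and this holds without any sign assumption on $\mathbb{E}_\theta[V_{\rm ind}]$.

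The main obstacle is therefore not algebraic but definitional. The proof hinges on the identification that the arm in $V(C\mid\emptyset)$ is the same $\rho$ as in $V_{\rm ind}$. I would also note that when $\mathbb{E}_\theta[V_{\rm ind}]<0$, the statement ``$\chi\ge0$'' flips sign, which is why the inequality should be argued on the unnormalized difference and not through $\chi$.
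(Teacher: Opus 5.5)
Your proposal is correct and is essentially the paper's proof. The paper likewise notes $\rho_\emptyset\in\Sigma$, telescopes $\mathbb{E}[V_{\rm ind}]$ through $\mathbb{E}[J(\sigma^\ast)]$ with the full-channel arm taken to be $\pi^\ast_{\Pi(C)}$, and obtains the inequality from $\mathbb{E}[J(\sigma^\ast)]\ge\mathbb{E}[J(\rho_\emptyset)]$. Your requirement that $V_{\rm ind}$ and $V(C\mid\emptyset)$ be evaluated on the same arm, and your conditions $\mathbb{E}_\theta[V_{\rm ind}]\neq 0$ and $\chi\neq 1$ for the ratio form, make explicit what the paper's one-line proof leaves implicit.
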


\noindent \emph{Proof.} $\rho_\emptyset \in \Sigma$ by Def.~\ref{def:epoch}, and the best policy reading nothing is by definition the best schedule, so $\pi^\ast_{\Pi(\emptyset)} = \sigma^\ast$. Telescoping, $\mathbb{E}[V_{\rm ind}] = \mathbb{E}[J(\pi^\ast_{\Pi(C)}) - J(\sigma^\ast)] + \mathbb{E}[J(\sigma^\ast) - J(\rho_\emptyset)] = V(C\mid\emptyset) + \chi\,\mathbb{E}[V_{\rm ind}]$. The inequality follows because $\mathbb{E}[J(\sigma^\ast)] \ge \mathbb{E}[J(\rho_\emptyset)]$. $\square$

\paragraph{The identity is classical, the observation is not.} Eq.~\ref{eq:confound} is the stochastic-programming decomposition of the value of adaptivity, with $\sigma^\ast$ the here-and-now solution, $\pi^\ast_{\Pi(C)}$ the recourse solution and the per-body oracle of \S\ref{sec:related} the wait-and-see solution \citep{madansky1960,birge2011stochastic}. The same two quantities are the adaptive and the static Bayes policy in Bayes-adaptive RL \citep{duff2002,ghavamzadeh2015brl}. \emph{Neither the algebra nor $\chi$ is what we claim.} What we claim is that the field's default ablation design reports the value of the stochastic solution and calls it the value of the component, and that in the trade the two differ by a factor of seven, because a schedule fitted across bodies captures more than the per-body gain ($\chi = 1.40$ at the anchored cell, \texttt{results/lp\_zeitplan\_chi}), so that the ablation design measures what the channel adds beyond the population's best plan, a different quantity and here a negative one. The classical reading also explains the fleet: the threat is drawn per unit, no blind schedule beats the uniform one, $\chi = 0$, and the two designs ask the same question.

\noindent Three things follow that are worth saying plainly. The two designs are not rival estimates of one quantity. They are separated by a term, and that term is what a population prior can carry. The ablation design is \emph{sound} exactly where $\chi$ is small, which is why we report the fleet in that form. And Eq.~\ref{eq:confound} is a check on any table reporting both: an entry with $V(C\mid\emptyset) > \mathbb{E}[V_{\rm ind}]$ cannot arise from a trained ablation against a competent ablated arm.

\paragraph{The two values, side by side, in each world.} Both designs applied to the same world, each in that world's own headline endpoint:

\begin{center}\footnotesize
\begin{tabular}{@{}lrrrrr@{}}
\toprule
world & $\mathbb{E}[V_{\rm ind}]$ & $\sigma^\ast$ over $\rho_\emptyset$ ($\chi$) & predicted $V(C\mid\emptyset)$ & measured $V(C\mid\emptyset)$ \\
\midrule
floor layer, wear & $+4.38$ y & $+6.37$ y ($1.40$) & $-1.77$ y & $+0.65$ y \\
fleet, latent defect & $+0.227$ $J$ & $-0.064$ $J$ ($0$) & $+0.227$ $J$ & $+0.301$ $J$ \\
care robot, anchored floor & $+271.9$ mo & $+116.1$ mo ($0.43$) & $+155.8$ mo & $+315.1$ mo \\
\bottomrule
\end{tabular}
\end{center}

\noindent $V_{\rm ind}$ is the paired individual probe against $\rho_\emptyset$, $V(C\mid\emptyset)$ is the trained arm against its ablation, and $\chi$ is the share of $V_{\rm ind}$ the best \emph{blind} schedule $\sigma^\ast$ already captures: in the trade the best of a fixed-endowment and a fixed-switching-age family, fitted on bodies not evaluated (App.~\ref{app:wear}, E55), on the care robot the fixed sparing plan, and in the fleet the throttled plan, which does not beat the uniform one, so $\chi$ is $0$ there. The fourth column is what Prop.~\ref{app:zerlegung} predicts from $\chi$ alone. The two designs are not the same policy measured twice: they are the two standard ways of asking, and they answer differently by a factor the identity fixes. Where a schedule is available to a population but not to a body, reporting the trained contrast as the value of the apparatus misstates it: in the trade the trained contrast is a seventh of the per-body value, and the quantity it estimates is negative. Where the threat is drawn per unit, the two agree.

All three measured values sit above the prediction, the trade by $2.4$ years, the fleet by a third and the care robot by a factor of two, and in each world the ablated trained arm does fall short of $\sigma^\ast$: $59.2$ against $61.6$ years, $J$ $0.49$ against $0.66$, $86$ against $168$ months. \emph{The direction identifies which arm is suboptimal.} Writing the identity out, $V(C\mid\emptyset)_{\rm meas} < (1-\chi)\,\mathbb{E}[V_{\rm ind}]$ holds exactly when the full-channel arm's shortfall from $\pi^\ast_{\Pi(C)}$ exceeds the ablated arm's shortfall from $\sigma^\ast$, and an ablated arm falling short of $\sigma^\ast$ would push the ratio \emph{below} the prediction. So part of each excess is optimizer shortfall and not schedule confound, which makes $V(C\mid\emptyset)$ a lower bound on what the channel adds and the excess no second measurement of $\chi$. \emph{Read as a diagnostic the identity also caught a defect}, a first version of the care-robot row that entered the absolute service life of the felt arm, $315.2$ months, where a contrast belongs, which the bound $V(C\mid\emptyset) \le \mathbb{E}[V_{\rm ind}] = 264.9$ rules out.

\begin{lemma}[Where a cost-constrained agent escapes to, Lemma~\ref{lem:escape} of the main text]
\label{app:escape}
Let the agent choose shares $a \in \Delta^{n}$ maximizing $\hat p^\top a$ subject to $\hat c^\top a \le \tau$ and to group availability caps, where $\hat p$ is its estimate of the per-option value and $\hat c$ its estimate of the felt cost. Suppose $\hat c_i = c_i$ for the options $i \in W$ it has worked within its retention window and $\hat c_i = 0$ otherwise. If the constraint binds at the incumbent allocation, the optimum moves mass to $\arg\max\{\hat p_i : i \notin W\}$, \emph{independently of the true costs $c_i$}. The realized reduction in true cost is then $c_{j} - c_{k}$, where $j$ is the abandoned option and $k$ the best-paid option outside $W$, and it is not $c_{j} - \min_i c_i$.
\end{lemma}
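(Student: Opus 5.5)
The proof is a linear-programming argument on the program of Eq.~\ref{eq:programm}, with the group caps as extra linear constraints. I split the options into $W$ (worked and retained, $\hat c_i = c_i$) and its complement (unworked, $\hat c_i = 0$). The key observation is that every $i \notin W$ carries zero estimated cost. Shifting mass onto such an option therefore never tightens the felt-cost constraint and can only relax it. Throughout I take the availability caps as fixed and the floor $\underline{y}$ as slack, as in the statement.

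\textbf{Step 1: the reduced program.} Once the constraint binds at the incumbent allocation $a^0$, some $j \in W$ with $c_j > 0$ must lose mass. Any unit moved from $j$ to an option $i \notin W$ lowers $\hat c^\top a$ by $c_j$ and changes the objective by $\hat p_i - \hat p_j$. Among all receivers outside $W$, the cost change is identical (namely $-c_j$). Hence the receiver that maximizes the objective is $k := \arg\max\{\hat p_i : i \notin W\}$, filled up to its group cap, then the next best, and so on.

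To make this rigorous I pass to the dual. Let $\lambda \ge 0$ be the multiplier on the cost constraint. Complementary slackness then gives reduced values $\hat p_i - \lambda \hat c_i$, which equal $\hat p_i$ exactly for $i \notin W$. The optimal support therefore contains the top-$\hat p$ option outside $W$ whenever that option beats the marginal option in $W$ at price $\lambda$. The true $c_i$ for $i \notin W$ appear nowhere in the program, which gives the clause \emph{independently of the true costs}.

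\textbf{Step 2: compare with options in $W$.} I need to show the mass goes outside $W$ rather than to a cheaper worked option $i' \in W$. Moving to $i'$ costs $\hat p_j - \hat p_{i'}$ in objective and saves only $c_j - c_{i'} < c_j$ in cost. Moving to $k$ saves the full $c_j$, at objective change $\hat p_k - \hat p_j$. The optimizer prefers $k$ whenever $\hat p_k \ge \hat p_{i'}$. It also prefers $k$ whenever the ratio comparison $(\hat p_j - \hat p_k)/c_j \le (\hat p_j - \hat p_{i'})/(c_j - c_{i'})$ holds, which is the ratio test of a fractional knapsack.

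This is where I expect the main obstacle. The lemma as stated is clean only under a mild genericity condition. Either the unworked options include one paid at least as well as any worked alternative, or the caps leave room outside $W$. I would state this explicitly: \emph{when the binding constraint is relieved by entering an unworked option}, that option is $k$. This holds under the optimistic endowment $\hat c^0 = 0$, because a zero-cost column dominates in the ratio test.

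\textbf{Step 3: the realized reduction.} The realized change in true cost per unit shifted is $c_j - c_k$, with $c_k$ the true, unknown cost of $k$. Nothing in the program forces $k = \arg\min_i c_i$, so the reduction is not $c_j - \min_i c_i$. A two-option-outside-$W$ example with $\hat p_k > \hat p_m$ but $c_k > c_m$ shows the inequality can be strict. It also shows the reduction can be negative, which is the escape of part (a).

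Part (b) then follows as Cor.~\ref{cor:loadgate}. With zero retention $W = \emptyset$ at every decision, so $\hat c \equiv 0$, the constraint is slack, and $\lambda = 0$.
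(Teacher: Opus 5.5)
Your argument follows the paper's proof. The unworked options all carry estimated cost zero, so the felt-cost constraint cannot tell them apart, and the top-$\hat p$ one wins among them whatever its true cost. The realized reduction then comes from evaluating $c^\top a$ at the two allocations.

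Your Step~2 goes beyond the paper, and you are right to add it. The paper's line that options in $W$ ``can only tighten the constraint'' does not show that the optimum leaves $W$ at all, and without that the unconditional claim fails. Take $W=\{j,i'\}$ with $c_j=1$, $c_{i'}=0.1$, $\hat p_j=10$, $\hat p_{i'}=9$ and $\tau=0.5$. Add a single unworked $k$ with $\hat p_k=1$, and put the incumbent entirely on $j$. The best $j$--$i'$ mix ($a_j=4/9$) earns $85/9\approx 9.44$, against $9$ for $i'$ alone and $5.5$ for the best $j$--$k$ mix, so no mass reaches $k$. The provable content is the conditional form you state: whatever mass the optimum places outside $W$ sits on $k$, and past $k$'s group cap on the next-best unworked option. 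The true costs of those options appear nowhere in the program.

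Three of your supporting sentences need fixing.
\begin{enumerate}
\item The claim that ``a zero-cost column dominates in the ratio test'' is not a valid reason why the premise holds under $\hat c^{0}=0$. In the example, $k$ loses $(\hat p_j-\hat p_k)/c_j=9$ per unit of cost saved, against $1/0.9\approx1.11$ for $i'$. Zero estimated cost only makes the unworked columns tie on cost, which is why $k$ beats the other unworked options and nothing more.
\item ``The caps leave room outside $W$'' is not sufficient. The condition you need is not a genericity condition but exactly the ratio (reduced-value) comparison you wrote down.
\item Once a cap forces spill-over past $k$, the realized reduction per unit shifted in Step~3 is a mass-weighted average of $c_j-c_i$ over the receiving options, not $c_j-c_k$. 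The paper's proof skips this as well.
\end{enumerate}
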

\begin{proof}
On $\{i \notin W\}$ the constraint is slack by construction, so within that set the program is the unconstrained maximization of $\hat p^\top a$, whose solution is a vertex at the largest $\hat p_i$. Options in $W$ carry their true cost and can only tighten the constraint. The displayed cost difference follows by evaluating $c^\top a$ at the two allocations.
\end{proof}

\noindent \emph{The premise is the whole content, and a second world tests it.} Lemma \ref{app:escape} assumes the estimate is zero on untried options. Whether that holds is a property of how the felt signal is \emph{represented}, not of the domain. In the trade the estimate is per day-type and exists only for what has been worked, so the premise holds. On the care robot the cost of an activity is $\hat c_i = \sum_j L_{ij}\alpha_j$, read off the \emph{motor-level} signal through a known load matrix, so every activity carries an estimate whether or not it has ever been run, and the premise fails. The two worlds then behave as the lemma says they must. Escaping, the floor layer goes to the best-paid admissible day type (carpet laying, load $75$ against kneeling's $90$) and needs retention to aim. The care robot goes straight to its gentlest activity (load $4.15$ against the transfer's $8.15$): its share of the day there rises from $0.05$ under the blind portfolio to $0.24$ with the signal alone, and the mean load falls from $7.79$ to $6.88$. Retention lowers the load further, to $5.80$ at a share of $0.44$, and that is exactly the over-protection its service months price: the trace costs $34.2$ of them. \emph{A signal that generalizes across options makes memory unnecessary for aiming}, which is also why retention there can only ratchet the level and costs the care robot $34.2$ months (\S\ref{sec:boundary}).

\begin{corollary}[Load-gating makes persistence existential, Lemma~\ref{lem:escape}(b) of the main text]
\label{cor:loadgate}
Suppose the felt signal is load-gated, so that $\hat c_i$ can be written only after an allocation that places weight on $i$, and the allocation for a period is chosen before that period's signal exists. If retention is zero, then $\hat c \equiv 0$ at every decision, the constraint $\hat c^\top a \le \tau$ is satisfied by every feasible $a$, and the $\Phi$-optimal policy in $\Pi(\Fcal^C)$ coincides with the unconstrained one. The fraction of the protective gain attainable without persistence is then $0$, not the $1/R$ of Thm.~\ref{app:conjunction}(b).
\end{corollary}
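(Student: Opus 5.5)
The proof is an induction over decision times. The invariant is that at every decision the estimate vector $\hat c$ equals the endowment, which is $0$. From this the constraint is vacuous at every decision, and optimality and the zero fraction follow.

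\emph{Step 1: the invariant.} Fix a decision time $t$. By the write rule of Eq.~\ref{eq:programm}, an entry $\hat c_i$ departs from the endowment $\hat c^{0}_i = 0$ only through a reading taken in a period that placed weight on $i$. Load-gating says this is the only way a reading of $i$ can exist. The allocation $a_t$ is chosen before $s_t$ exists, so the only readings that could inform it come from periods $u < t$. Zero retention means that at the end of every period the estimates relax fully to the endowment. In the notation of the program this is the case where nothing outlives the reading into the next decision: a value written at the end of $u$ is either overwritten at the start of $u+1$, or it never enters the decision $a_{u+1}$, because the forgetting acts before the program is solved. I will first fix this timing convention explicitly, since Lemma~\ref{app:escape} counts ``within its retention window'' and zero retention must mean the window $W$ is empty at every decision. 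Then I apply Lemma~\ref{app:escape} with $W=\emptyset$, which gives $\hat c_i = 0$ for all $i$.

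\emph{Step 2: vacuity and coincidence.} With $\hat c \equiv 0$ we get $\hat c^\top a = 0 \le \tau$ for every $a \in \Delta^n$ satisfying the caps and the floor, since $\tau > 0$. The feasible set of Eq.~\ref{eq:programm} is therefore the feasible set of the unconstrained program, and so is the argmax. This holds at every $t$ and on every path, so the policy the agent follows is the same as the unconstrained one. The $\Phi$-optimality claim is about $\Pi(\Fcal^C)$ under a zero-retention filtration. For it I argue that under zero retention the decision-relevant part of $\Fcal^C_t$ carries no felt-cost information about any option. The current reading is not yet available, and past readings are erased. So the conditional law of the future costs, given the decision information, is the same as under $C=\emptyset$. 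The best policy in $\Pi(\Fcal^C)$ is then the best policy in $\Pi(\emptyset)$ restricted to the same objective, that is, the unconstrained optimum. This mirrors the opacity argument in the necessity of (a) in Thm.~\ref{app:conjunction}.

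\emph{Step 3: the fraction is $0$ rather than $1/R$.} In Thm.~\ref{app:conjunction}(b) the $1/R$ comes from the single step $t$ at which the event is in $\Fcal^C_t$ and the policy may still deviate. Here I show that no such step exists. At the period in which the harmful work is done, the signal arrives only after the allocation, so the deviation window that produced the one-step gain is empty. The gain of the constrained policy over the unconstrained one is then exactly $0$ on every path. The main obstacle is Step 1's timing convention. I expect the proof to hinge on stating precisely that ``zero retention'' means the write at the end of period $u$ does not reach the decision of period $u+1$, as opposed to the $v=1$ arm, which keeps the last worked day. I would make this a stated hypothesis of the corollary rather than derive it, noting that the $v=1$ arm of Table~\ref{tab:leiter} is the one-step case, which is not covered.
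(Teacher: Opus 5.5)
Your Steps 1 and 3, together with the first half of Step 2, are the paper's proof. The reading produced by period $t$'s allocation exists only from $t+1$, and with zero retention it is erased before that decision. So $\hat c\equiv 0$ whenever the program is solved, the constraint never binds, and the program returns its unconstrained allocation on every path. The one-step deviation behind the $1/R$ of Thm.~\ref{app:conjunction}(b) therefore has no period in which to occur. Your timing hypothesis is what the paper asserts with ``erased before $t+1$'s decision''. You are also right that it is strictly stronger than the $v=1$ arm, which keeps the last worked day.

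The second half of Step 2 should be dropped, because its last link fails. The claim that zero retention leaves the decision-relevant information no richer than under $C=\emptyset$ is fine, and it yields $V(C\mid\emptyset)=0$. But the $\Phi$-optimum over $\Pi(\emptyset)$ is the best schedule $\sigma^\ast$ of Prop.~\ref{app:zerlegung}, not the unconstrained program. $\Pi(\emptyset)$ contains the fixed-endowment programs $\hat c^{0}=\alpha C_0$, which spare from the first day without reading anything. In the trade the one at $\alpha=2$ reaches $61.6$ years and $35.3$ output, against $55.2$ and $33.7$ for the unconstrained body (Table~\ref{tab:leiter}). So ``the best policy in $\Pi(\emptyset)$, that is, the unconstrained optimum'' is false. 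Since $\Pi(\emptyset)\subseteq\Pi(\Fcal^C)$, the coincidence claim read over all of $\Pi(\Fcal^C)$ is not something this route, or any route, can prove. The claim holds in the sense the paper's proof uses it: with the endowment fixed at $\hat c^{0}=0$, the constrained and unconstrained versions of the agent's program select the same allocation at every step. The channel's protective gain over the same agent without it is then exactly zero.
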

\begin{proof}
The $1/R$ bound of Thm.~\ref{app:conjunction}(b) is attained by a deviation at the event step itself, which requires the event to be in the filtration \emph{at} that step and the action for that step to be still choosable. Load-gating with decision-before-load removes the second condition: the signal generated by period $t$'s allocation is available only from $t+1$, and with zero retention it is erased before $t+1$'s decision. No period's choice is ever taken under a binding constraint, so the policy is the unconstrained one at every step.
\end{proof}

\noindent Measured in the trade over $1{,}000$ paired bodies: a body retaining nothing lays floors in every working year (kneeling share $1.000$) and the felt channel is worth $+0.00$ years and $+0.00$ output, with not one body gaining a year (\S\ref{sec:individual}). \emph{Persistence is therefore not an amplifier of the channel but a precondition for it}, wherever the signal is load-gated and the decision precedes the load.

\paragraph{What $\mathcal{M}_0$ predicts in the deployed system, and how far it is confirmed.} $\mathcal{M}_0$ predicts a \emph{location} and a \emph{size}, and the trained sweep confirms the first and not the second. \textbf{(1)}~The substitution threshold $c^\ast=(p^{+}-p)L_e$, computed from the world's constants before the sweep, predicts $\lambda^\ast{=}0.69$, and in the dense sweep (\texttt{results/knee\_csub\_dense\_F.json}, entropy $0$, six seeds per cell) the learned arms do switch corners inside $[0.65,0.75]$. The feeling arm's advantage there is a different matter: the rule registered before the run required the seed rank test to be significant at two of the three grid points $\{0.65,0.70,0.75\}$ and it is significant at one, so that prediction \textbf{failed its own test}. Pooled over seeds $F$ lies between $0.96$ and $1.16$ across the whole $\lambda$ range, its largest value $1.16$ at the band's upper edge and $1.09$ at its lower one, with $0.98$ between them. The size is what $\mathcal{M}_0$ implies once its constants are measured rather than assumed, since the event-cost term is fifty times smaller than the chronic pain relief and both arms see the same wage gradient; what separated the arms in this world was the exit institution, against our own registered prediction that it would not (Thm.~\ref{thm:conjunction}(iii)). \textbf{(2)}~The premise of Rem.~\ref{thm:insurance} holds in one coordinate: a body outside the at-risk set works to the horizon in both arms, so $\varepsilon=0$ in years is forced by the cap, while in output, which nothing caps, those bodies move by up to $0.40$ (Cor.~\ref{app:power} in App.~\ref{app:proofs}). The placebo decomposition is the Le~Cam bound seen from the other side.

\subsection{The memory update law}
\label{app:memlaw}
The three equations behind the description in \S\ref{sec:channels}, reused unchanged in every world of this paper:
\begin{align}
P_{t+1}(z) &= \max(P_t(z), i_t(z)) \;\; \text{(peak, one-shot write)}\label{eq:mem1},\\
U_{t+1}(z) &= (1-\lambda_{\mathrm{dec}})U_t(z)
 + \lambda_{\mathrm{rec}}(1-i_t(z))\ind{z\ \mathrm{exposed}}
 \;\; \text{(safety: extinction, recovery)}\label{eq:mem2}.
\end{align}
The policy acts on their difference, read through a distance kernel $K : \mathcal{Z}\times\mathcal{Z} \to [0,1]$ ($K(z,z)=1$) that generalizes to nearby situations, and clamped to a per-situation avoidance signal in $[0,1]$, the lever a substitution turns on:
\begin{align}
\rho_t(z) &= \big[\, w_P \max_{z'} K(z,z')P_t(z')
 - w_U \max_{z'} K(z,z')U_t(z') \,\big]_0^1
 \;\; \text{(read-out)}\label{eq:mem3}.
\end{align}
Avoidance generalizes from the peak, safety from uneventful exposure, and their balance at a site tips whether the agent moves away from it.

\noindent Here $i_t(z)$ is the felt intensity at situation $z$, $K(z,z')$ the distance kernel with $K(z,z)=1$, and $w_P, w_U$ the read-out weights. The peak write is one-shot and permanent, while the safety trace carries extinction ($\lambda_{\mathrm{dec}}$) and recovery ($\lambda_{\mathrm{rec}}$), so that avoidance re-emerges when a site stops being safely exposed rather than decaying away.

\section{The machine worlds: care robot, rover, latent-defect fleet}
\label{app:robots}

\paragraph{Training details, all trained arms.} Every trained arm is PPO in the average-reward form (a running mean $\bar r$ with rate $\eta = 0.01$ replaces discounting), $10^6$ environment steps per seed ($3\times10^6$ for the two robots, whose episodes are longer), rollouts of $2{,}048$ steps, $4$ epochs of minibatches of $64$, clip $0.2$, value coefficient $0.5$, Adam with learning rate $3\times10^{-4}$. The entropy coefficient is $0$ at the deterministic optimum and $0.05$ in the rows marked so (App.~Table~\ref{tab:verankert}). The floor layer's network is a shared trunk of two layers of $128$ units with a Dirichlet share head and a Normal effort head of $64$ units over the eight day types, and the fleet's is the equivariant head of App.~\ref{app:fleet} over its eight work classes, same optimizer and budget. One environment step is one year for the floor layer and one month for the machines, so $10^6$ steps are about $2\times10^4$ floor-layer lives and $10^4$ fleet lives. The arms are evaluated on fresh, paired bodies after training. Ablations zero inputs, never dimensions, so all arms of a world share one architecture.

\begin{figure}[t]\centering
\IfFileExists{fig_arme_welten.pdf}{\includegraphics[width=\textwidth]{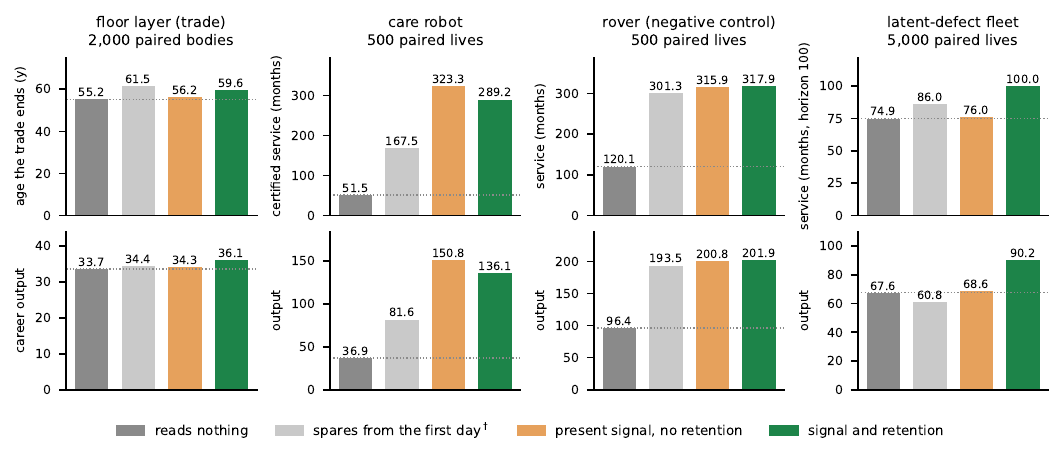}}{\fehlendeabbildung{fig\_arme\_welten.pdf}}
\caption{\textbf{The same stack in four worlds, both endpoints per world.} Bars are the arms of
Table~\ref{tab:leiter} on the same paired bodies, and the dotted line is the body that reads nothing. In
the trade and the care robot the retaining arm and the present-signal arm separate in opposite
directions, slot~(b) paying in one and costing in the other. In the rover blind caution reaches
almost the whole gain, and in the fleet the one-shot memory alone takes the write-off rate from $57\%$
to $0$ (App.~Table~\ref{tab:individuum}), while the present-signal arm stays within $1\%$ of the reference,
the one-step twitch of condition (b). Its trained arms are in Table~\ref{tab:leiter} and
App.~Table~\ref{tab:worlds}. Drawn from \texttt{results/fig\_arme\_welten.json},
which collects the files of Table~\ref{tab:leiter} and App.~\ref{app:robots}.}
\label{fig:arme}
\end{figure}

\noindent Trade and care robot disagree about slot~(b). In the trained arms of the care robot the trace is inert rather than harmful ($+144.5$ months, $p=0.16$ over $8$ seeds), and the felt body there dominates even blind caution on both numbers at once by the widest margin of any world: $+155.8$ months, $95\%$ CI $[+147.6,+164.0]$, and $+69.3$ output, $[+65.6,+72.9]$, paired over $500$ lives (\texttt{results/pflege\_familie/familie\_ci.json}). In the rover the channel's marginal value over blind caution collapses to $7\%$ without vanishing.

\begin{table}[t]\centering\footnotesize
\setlength{\tabcolsep}{6pt}
\begin{tabular}{@{}l r r c r@{}}
\toprule
& \multicolumn{2}{c}{\emph{wear takes the movement away}} & & no wear \\
\cmidrule(lr){2-3}\cmidrule(lr){5-5}
what the body carries & floor layer & care robot & & latent-defect agent \\
 & \footnotesize loses trade & \footnotesize decertified & & \footnotesize written off \\
\midrule
feels, minds, remembers & $\mathbf{1.8\%}$ & $\mathbf{78.7\%}$ & & $\mathbf{17.8\%}$ \\
\quad feels nothing & $8.8\%^{***}$ & $100\%^{**}$ & & $55.2\%^{***}$ \\
\quad feels, is not moved to act & $9.7\%^{***}$ & $89.6\%$\,\tiny n.s. & & $18.8\%$\,\tiny n.s. \\
\quad feels and acts, but forgets & $1.2\%$\,\tiny n.s. & $90.4\%$\,\tiny n.s. & & $53.5\%^{***}$ \\
\bottomrule
\end{tabular}
\caption{\textbf{The same three parts, for trained policies.} The share of lives that lose the paid work
before the horizon, for one chain run through a floor layer's cartilage, a care robot's joints and a
latent-defect fleet ($10^6$ environment steps, $3\times10^6$ for the robot). $18$ seeds per arm, $8$ for
the care robot, $^{***}p<0.001$ and $^{**}p<0.01$ against the intact body, n.s. marks a cell the
measurement cannot separate from it.}
\label{tab:worlds}
\end{table}

\paragraph{What the table says, and what it is drawn from.} It asks a different question than Table~\ref{tab:leiter}, namely what the apparatus adds to an agent that already knows the schedule. \textbf{Feeling and minding are each necessary in the floor layer}, costing up to five times the rate of losing the work, since a body that cannot sense the tissue substitutes too late, and one that senses it without being moved to act substitutes just as late. \textbf{Remembering does not separate here}, which is what Prop.~\ref{prop:chronic}(b) predicts for a world whose state carries the level. In the care robot only feeling separates at eight seeds ($100\%$ against $78.7\%$, $p = 0.005$), and minding and remembering point the same way without reaching significance. The floor layer's column is drawn from \texttt{results/knee\_fdecke2\_hpc} (its per-seed aggregate, \texttt{results/tab\_leiter\_spalten.json}, is the file in the archive), the latent-defect column from \texttt{results/hh\_verankert2\_hpc}, both at entropy $0$, and the care-robot column from \texttt{results/pflege\_boden\_hpc}. A rover column is omitted, since under the $960$-month horizon every life of every arm loses the mission, so the share carries no information, and the rover's contrast is in months in Table~\ref{tab:leiter}.

\paragraph{The care robot's probe family.} Its four arms, reads nothing, blind caution, present signal and retaining, end at $51.5$, $167.5$, $323.3$ and $289.2$ certified months with $36.9$, $81.6$, $150.8$ and $136.1$ output. Against the insensate body the present-signal arm gains $+271.9$ months, $95\%$ CI $[+259.1,+284.7]$, and $+113.9$ output, $[+108.5,+119.4]$, and against blind caution $+155.8$ months and $+69.3$ output, paired over $500$ lives with every life gaining. The nociceptive memory costs on both readings there, wear being monotone, so that the present reading already holds the peak and an event-marking organ has nothing to mark, which is Thm.~\ref{thm:conjunction}(b)'s premise failing in a rolling bearing as it fails in creeping cartilage.

\paragraph{The rover in full.} Its threat is chronic and deterministic in load, so (i) fails and the blind arm reaches the same lever by throttling. Caution alone yields $181.2$ of the $195.8$ months the felt body gains, and reading the signal adds $+14.6$ months, $95\%$ CI $[+13.3,+15.9]$, and $+7.3$ output, $[+6.5,+8.1]$ ($500$ paired lives, \texttt{results/rover\_familie}). The reason is structural. Chronic wear there accumulates at a rate independent of accumulated wear, so the ordering of the actuators is fixed by the portfolio in the first month and never changes ($87\%$ of lives, rank correlation $0.975$), and every later reading repeats the first. What the signal cannot buy is information about which part to spare, which was known on day one, while how worn the body is rises monotonically and drives the whole gain. Its gain sweep gives the over-protection pole of Prop.~\ref{prop:calibration} a full curve, and three quarters of what the rover appears to pay is the gain imported from the knee. The run in Table~\ref{tab:leiter} is the design with an absorbing exit and the residual stream as an axis, and the structure shows there as a bimodal seed distribution, half of the blind seeds parking after five months while the full stack holds $98$ to $191$ months.

\subsection{The latent-defect fleet}
\label{app:fleet}

\noindent \emph{What the construction could carry.} The eight classes of work are near-equivalent by construction and only the event statistics are field data. If their yields were spread, the substitution premium of condition (ii) would rise with the spread and the conjunction is predicted to weaken with it. A fleet with unequal classes is the next test, not a variant we have run.

\noindent \emph{What $w{=}0$ does and does not test.} Setting the damage weight to zero exchanges one realization of (c) rather than removing it, and the overlap between the seed distributions returns at entropy $0.05$ (the separation above pools the $18$ seeds of both entropies, a one-sided Mann-Whitney over $36$ against $36$). The interchangeability half holds in both worlds measured. In the knee the substitution lowers follow-on tears without the memory, at a cost in output and with no working life gained, and in the fleet at $w{=}0$ it lowers the write-off rate from $0.241$ to $0.095$ and adds $5.5$ months ($p = 2.3\times10^{-3}$). Thm.~\ref{app:conjunction} in App.~\ref{app:proofs} states which realizations of (c) a substituted event cost represents. \paragraph{Slot (c), tested by removing its realization.} Thm.~\ref{thm:conjunction}(c) asks the objective to price the event above its frequency, and in every world we built the loss of the work already does that, which is why the $w{=}0$ ablation is inert here. Setting the write-off threshold to infinity removes that realization: incidents still occur, are still felt and still compound, but they cost nothing. Channel and memory are untouched. Sparing is then pointless by construction, so the endpoint has to be behaviour rather than welfare, and we measure the exposure to the injuring class over the ten steps after the first incident, against the exposure to the remaining classes over the same window. Since the shares sum to one, real avoidance must \emph{raise} the others, whereas a policy that merely does less of everything lowers both.

\begin{center}\footnotesize
\begin{tabular}{@{}llrrr@{}}
\toprule
world & arm & injuring class & other classes & ratio \\
\midrule
canonical            & $w{=}0.25$ & $0.0744$ & $0.0732$ & $1.017$ \\
                     & $w{=}0$    & $0.0730$ & $0.0719$ & $1.015$ \\
no write-off         & $w{=}0.25$ & $0.1038$ & $0.1011$ & $\mathbf{1.026}$ \\
                     & $w{=}0$    & $0.1415$ & $0.1226$ & $\mathbf{1.154}$ \\
\bottomrule
\end{tabular}
\end{center}

\noindent Without the write-off the $w{=}0$ arm loads the class that injured it $15\%$ above the others, exploiting the very class it cannot tolerate. The $w{>}0$ arm flattens that to $3\%$. Paired over $18$ seeds the ratio differs at $p<10^{-4}$, and the same contrast is absent in the canonical world ($1.016$ against $1.015$, $p=0.79$). Both arms also reduce the other classes ($+0.0215$, $p<10^{-4}$), so part of the effect is general throttling. What decides the reading is the \emph{redistribution}, which no uniform reduction can produce. The $w{>}0$ arm pays $0.10$ in $J$ for it ($0.79$ against $0.89$), which is the point: in this world protection cannot pay, and the objective is the only thing that still asks for it.

\emph{One pre-registered prediction fell, and the threshold was ours.} We registered that the $w{>}0$ arm would spare in absolute terms, below the $0.062$ a random policy produces. It does not: it sits at $0.1038$. The comparator was wrong rather than the arm, since an output-maximizing policy concentrates on the profitable classes and therefore exposes itself above chance. The right reference is the same agent with $w{=}0$, which is what the other two predictions use.

\paragraph{The fleet in full.} We begin where the regime holds and the conjunction is visible in a learner, not only in a hand-built response. The map makes one out-of-sample prediction: in a world satisfying (i)--(iii), the conjunction must return. We built that world after filing the prediction. A fleet of machines in the field, every one carrying a latent defect of its own, chooses among eight near-equivalent classes of work (condition ii). Which single class this machine cannot tolerate is drawn per unit and is unidentifiable before the first incident (condition i, by the Le~Cam bound). Incidents against it compound, and the third writes the machine off (condition iii). There is no chronic wear, and the stack is carried over verbatim.

\emph{The task economy is constructed. The event statistics are not, and that is the point of the world.} A reviewer is right to discount a world built to satisfy a theorem, so the numbers that decide it were taken from outside. Field data on $10^5$-drive populations put the escalation after a first incident at $14\times$ to $39\times$ within $60$ days \citep{pinheiro2007failure}, with clustered repeat failures \citep{schroeder2007disk}. The three-incident write-off is the three-repair-attempt rule of lemon-law statutes, and only $2$--$9\%$ of units ever show the defect. The same data show no early-life mortality over age, so this is a latent-defect regime and not the infant-mortality limb of the bathtub curve \citep{klutke2003bathtub,meeker1998reliability}. Our first build used a $\times3$ cascade, and we re-ran at the measured $14\times$. Table~\ref{tab:verankert} carries the field-anchored build.

\paragraph{The conjunction survives the field escalation, the protection level does not.} A hand-built feeling heuristic survives every unit while the best blind static policy loses more than half, so the instrument can express protection by the widest margin in the series. Removing the signal or the memory then collapses it ($4$ arms, $18$ seeds, Table~\ref{tab:verankert}), with no overlap between the arms' seed distributions on $J$ (seed-level Mann--Whitney at entropy $0$, Cliff's $\delta{=}-0.85$ on the death rate, $p{=}1.4\times10^{-5}$, and $\delta{=}+1.00$ on $J$, $p{=}3\times10^{-7}$), and the memory ablation shows the re-exposure signature it should. Ablating the intrinsic cost separates in no trained world, not the fleet, not the care robot ($401$ against $289$ service months, $p{=}0.16$ over $8$ seeds), and not the trade, where it is inert at both entropies and, where it reaches significance at all, carries the wrong sign ($-0.72$ output, $p{=}0.013$, $72$ seeds). The cause is the same in all three, since each world already prices damage through the loss of the work, so setting $w{=}0$ exchanges a realization of (c) and never removes the condition. \textbf{Our ablations therefore test (a) and (b) and have never had the power to test (c)}, which rests on the proof in $\mathcal{M}_0$ (Thm.~\ref{app:conjunction}), on the interchangeability of its realizations and on the write-off removal above. What the harder cascade costs is reachability. Most seeds find the protective basin at the constructed escalation, fewer than half at the field one, and with only one unit in eleven ever defective the learner at PPO's default entropy finds no protection at all, a learnability limit of the concentration form itself, whose unconditional return gap is exactly the at-risk value of Rem.~\ref{thm:insurance}.

\begin{table}[h]\centering\scriptsize
\setlength{\tabcolsep}{3pt}
\begin{tabular}{llcccc|cc}
\toprule
variant & entr. & full & $-\,s$ & $-\,h$ & $-\,w$ & ignorant & memory heur. \\
\midrule
cascade $\times14$ & $0.05$ & $0.38$ / $0.59$ & $0.59$ / $0.50$ & $0.55$ / $0.50$ & $0.42$ / $0.59$ & $0.68$ / $0.51$ & $0.00$ / $0.83$ \\
cascade $\times14$ & $0$ & $0.15$ / $0.79$ & $0.55$ / $0.49$ & $0.53$ / $0.50$ & $0.09$ / $0.79$ & & \\
cascade $\times39$, $6$ seeds & $0.05$ & $0.51$ / $0.50$ & $0.59$ / $0.48$ & $0.56$ / $0.48$ & $0.53$ / $0.48$ & $0.69$ / $0.49$ & $0.00$ / $0.83$ \\
$9\%$ defective, $\times14$ & $0.05$ & $0.75$ / $0.913$ & $0.76$ / $0.913$ & $0.76$ / $0.908$ & $0.76$ / $0.913$ & $0.68$ / $0.80$ & $0.00$ / $0.83$ \\
$9\%$ defective, $\times14$ & $0$ & $0.61$ / $0.930$ & $0.76$ / $0.912$ & $0.76$ / $0.910$ & $0.65$ / $0.923$ & & \\
\bottomrule
\end{tabular}
\caption{Field-anchored fleet (Clusters 196499 and 196650, $18$ seeds per cell unless stated, $2{,}000$
paired lives per cell): death rate / welfare $J$, medians over seeds. For the $9\%$ variant the
death rate is conditional on carrying the defect and $J$ is unconditional. Right: the same bodies
under the ignorant static and the one-shot nociceptive-memory heuristic.}
\label{tab:verankert}
\end{table}

\noindent Pre-registered rule (seed-level Mann--Whitney, $|\delta|\ge0.33$): at $\times14$ the signal and memory ablations separate from the full stack in both endpoints and the damage-cost ablation in neither, at both entropies. The protective basin (median deaths $<0.15$) is reached by $8/18$ seeds at entropy $0$ ($12/18$ for $-w$), the $\times39$ rows repeat the pattern at low power, and at $9\%$ defective the conditional contrast holds at entropy $0$ only, and the unconditional $J$ gap there comes entirely from the defective units ($J$ on healthy units $0.951$ in every arm).

\paragraph{The protection level is a matter of entropy, not of training depth.} The pre-registered absolute protection level (median death rate below $0.15$) is missed at PPO's default entropy of $0.05$, and tripling training raises welfare while raising the death rate as well, so the learner climbs a hot local basin. A pre-registered entropy sweep localized the cause. Raising entropy worsens the miss monotonically, while removing the entropy bonus reaches the protective basin (median death $0.042$, $J{=}0.905$, $6/8$ seeds ${\le}0.05$ over the sweep, and an $18$-seed extension holding the median death at $0.042$, $J{=}0.892$, $10/18$ seeds ${\le}0.05$), matching the heuristic. The barrier is entropy regularization rather than exploration, since the basin requires a deterministic avoidance commitment that the entropy bonus opposes, which is why gradient-free search reaches it as well. At zero entropy the needed slots still separate, removing $s$ or $h$ leaving median death $0.48$ and $0.39$ against the full stack's $0.04$.

\paragraph{Cross-optimizer replication.} We replicated the fleet grid with cross-entropy search over the same architecture (pre-registered, $6$ seeds per arm, $1{,}000$ evaluation lives), under which the fitness is the welfare functional itself and only the observation is ablated. The sensing verdict replicates, the blind arm collapsing under CEM as well ($J=0.676$, $61\%$ write-offs), and gradient-free search reaches the protective basin ($J=0.948$ at $0\%$ write-offs). The memory ablation does not replicate, since the no-memory arm also attains $0.948$ at $0\%$. The observation exposes the previous action, so a deterministic policy writes its avoidance into its own action history and reads it back, a further realization of slot (b). Closing that channel (zeroing previous actions, dimensions kept) restores the clean verdict under gradient training, where without the explicit memory the cascade returns ($J=0.639$, $46\%$ write-offs) while the memory alone suffices ($0.791$), but not under gradient-free search, where the no-memory, no-history arm still attains $0.948$ at $0\%$. By elimination the remaining channel is the afferent signal itself, since load-gated afference mirrors the agent's own previous action. Persistence is therefore necessary as a function and not ablatable as a module wherever afference is load-gated, and the explicit memory is one of at least four realizations of unequal strength, explicit $h$, recurrence at three quarters of its protection, the action history and the afferent self-echo.

\section{Failure modes of protection experiments, and the checks that catch them}
\label{app:defekte}

This appendix is the evidence behind the closing sentence of Sec.~\ref{sec:conclusions}. Every case below is one in which the instrument rather than the world produced the result, a publishable null, negative or inversion, and each row names the check that caught it and transfers to any pipeline asking whether a protective mechanism pays. In the two machine worlds every false negative sat in the last link of the chain, the output floor or the objective, and never in the body model, because the thing the channel is supposed to buy had been made free. Four of the cases produced a credible, significant negative before they were caught.

\begin{center}\footnotesize
\begin{longtable}{@{}p{4.2cm}p{8.8cm}@{}}
\toprule
what was wrong & what it produced, and the check \\
\midrule
The floor was switched off in both machine configurations & Sparing costs nothing, so the felt arms lose, $-11.7$ output against blind caution, and with the floor restored the same probe gains $+10.2$ months. Check that the floor can bind. \\
\addlinespace[2pt]
The rover's floor was on the wrong quantity & It scored science yield while the mission is the traverse, so a portfolio parked at the instrument station cleared the floor at zero speed and the sparing arm completed $0$ of $3$ milestones in every life. Fixed with a progress floor at the rover's own plan rate. \\
\addlinespace[2pt]
The rover's mission ends long before its career does & Every policy completes all three milestones by month $55$--$79$ of $960$ and earns the same reward to within $2\%$, so $92\%$ of the horizon is driving with nothing left to achieve. No floor repairs this, and the run was withdrawn for the design in Table~\ref{tab:leiter}. \\
\addlinespace[2pt]
The horizon censored the protected arms & $91\%$ of felt lives ran into the care robot's horizon against $4\%$ of insensate ones, understating the effect roughly tenfold, $p{=}0.67$ becoming $p<10^{-30}$ once the horizon was lifted. Check that censoring falls on both arms alike. \\
\addlinespace[2pt]
Career length was not in the objective & Under average-reward training an episode ending on losing the work costs nothing, since episode length does not enter a per-step average, mean reward $0.143$ without an absorbing exit against $0.0012$ with it. Every ablation then beat the intact stack. One common functional across arms, and the training signal is never the endpoint. \\
\addlinespace[2pt]
Career length could not move in the capacity-exit acute knee & Every fixed probe reached exactly $65.00$ over $120$ bodies, since under maximal load damage reached only $D = 0.157$ by $65$ and the only exit was the clock. Re-anchored, the exit fires in $41.4\%$ of lives and careers end at $60.7$. Check, before any arm is compared, that the world can express protection at all. \\
\addlinespace[2pt]
The world could not express protection by construction & A cartilage pathway calibrated so weakly that condition~(iii) was violated, $2\%$ of uninjured bodies reaching radiographic grade after twenty-one heavy years against $43\%$ sixteen years after a tear \citep{englund2003patients}, closed against the occupational rate \citep{jensen2008knee}. A disability floor set as a population constant booked every body below $0.67$ of median talent as disabled on its first day, hence floors stated against the individual's own prior exercise of the trade. \\
\addlinespace[2pt]
Numbers without a run & Found by recomputing every printed number from its file. The rover appendix carried $22.7$ of $22.9$ months from before the progress floor was moved, against the repaired $181.2$ of $195.8$, and the fleet row of Table~\ref{tab:individuum} stood on a probe family no run had produced until it was pre-registered and run ($5{,}000$ paired units, $0.51\to0.00$ deaths against the replaced $0.57\to0.00$, $J$ $+0.18$ against $+0.227$). The check is the script \texttt{experiments/paper\_zahlen.py}, which names what has no source. \\
\addlinespace[2pt]
The folder the headline was drawn from was reproducible from no commit & The result files predated the module producing them and the null gate failed for the full stack ($60.48$ against $60.47$ years). The sweep now runs from pinned code, and in the anchored world the headline is $+4.4$ years and $+2.4$ output where the unpinned folder gave $+5.1$ and $+2.8$, with ordering, sign and share unchanged. A result folder without a commit is not a source. \\
\addlinespace[2pt]
A registered, tested switch reached the world in no configuration & The clean removal of (b) existed, passed its null gate and was missing from the table that carries configuration keys into a training environment, so a run would have trained two identical arms. A null gate cannot see this. The check that caught it built the environment the way the run does. \\
\bottomrule
\end{longtable}
\end{center}

\noindent The check that would have caught the four false negatives costs one line per probe: state, before the result counts, how often the rule fires against how often its trigger occurs. Neither machine floor is anchored to a measurement of that machine, so the care robot's quota is the statutory occupational-disability threshold, the rover's its own plan rate, and both are reported across their ranges. Floors chosen after seeing a result are not admissible.

\paragraph{Two of them change how the rest of the paper should be read, so we name them.} An observation vector that carried the tissue itself, damage, damage-coupled capacity and their rates, identified the hidden wear resistance at $R^2\,0.9$. We caught it with an oracle arm that turned out to be worth nothing, because the learner already held the information the oracle was offering. And the felt channel was noise-free and gain-free, so its five-year rise identified the same quantity at $R^2\,0.7$, against the coarseness the setting asserts. It is closed with anchored gain and noise (Eq.~\ref{eq:channel}). Both are why Prop.~\ref{prop:estimation} is stated about a channel with a stated resolution rather than about observability in general, and why we report the identification $R^2$ of the channel we ship.

\paragraph{Every correction ran toward the effect, which is why the gate is external.} Not one defect above, when fixed, made the effect smaller. That is the pattern motivated debugging produces, and no amount of our own care distinguishes the two from the inside, so the guard cannot be our judgement. It has to be a criterion fixed before the corrections, drawn from outside the model, and applied afterwards to numbers that have all moved, with a real chance of selecting a cell we did not want.

\noindent The severity of the knee world is set by such a gate, pre-registered and applied only after the last correction. Among $325{,}549$ Swedish construction workers, $60{,}373$ disability pensions put the rate at $18.5\%$ of lives with $2.1\%$ of those exits before age $40$ \citep{jarvholm2014heavy}. Over $2{,}000$ paired bodies at the observed task mix the full stack loses the trade in $9.9\%$ of lives at $\Delta\theta = 8^\circ$, $43.5\%$ at $18^\circ$ and $71.7\%$ at $27^\circ$, with $0.0\%$, $7.6\%$ and $17.1\%$ of those exits before $40$. Only $18^\circ$ is within a factor of four of the register on both quantities, since $8^\circ$ falls below the rate and produces no early exit at all and $27^\circ$ misses the shape by a factor of eight. The gate selects the cell we run, and it could have selected another, which is the strongest evidence we have that the operating point was not tuned.

\section{The wear world: floor layer, range-of-motion cap, total-output floor}
\label{app:wear}
\begin{figure}[t]\centering
\IfFileExists{fig_vind_band.pdf}{\includegraphics[width=0.6\textwidth]{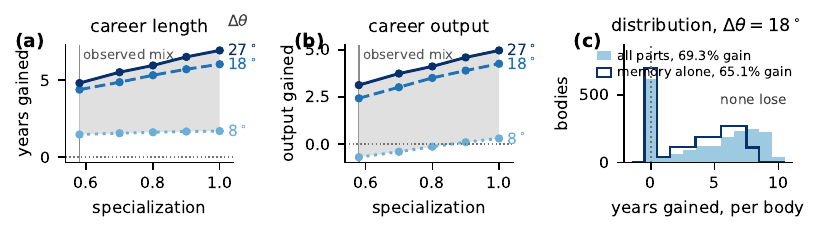}}{\fehlendeabbildung{fig\_vind\_band.pdf}}
\caption{\textbf{Both headline numbers against the specialization axis. The band is the anchoring,
not the sampling.} Paired, $2{,}000$ bodies per point. \textbf{(a,b)}~Career length and output
against the degree of specialization, the band spanning the anchored lost flexion
$\Delta\theta \in [8^\circ,27^\circ]$ \citep{zhou2023rom} (a confidence interval would span $\pm0.2$ years). \textbf{(c)}~Per-body distribution at the observed mix: filled, the whole
apparatus ($69.3\%$ gain, none lose); outline, the memory's part alone ($65.1\%$ gain, none
lose), which carries $+3.4$ of the $+4.4$ years.}
\label{fig:wertleiter}
\end{figure}

\begin{figure}[t]\centering
\IfFileExists{fig_substitution.pdf}{\includegraphics[width=\textwidth]{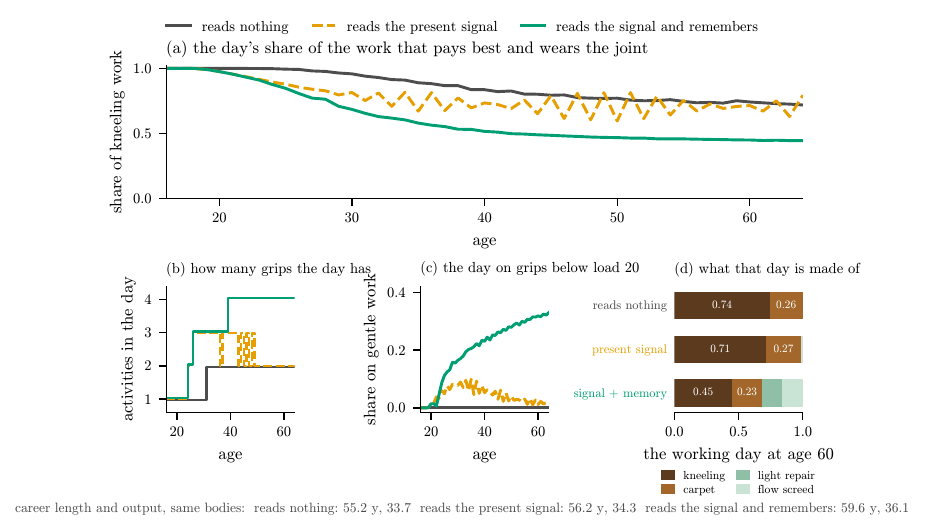}}{\fehlendeabbildung{fig\_substitution.pdf}}
\caption{\textbf{The middle link, over $2{,}000$ paired bodies.} Means over the bodies still
working at that age, drawn while at least a fifth of them are. \textbf{(a)}~The day's share on the
work that pays best and wears the joint. \textbf{(b)}~How many activities the day contains at all,
each at least $5\%$ of it. \textbf{(c)}~The share on the two gentle grips the trade offers (load
below $20$). \textbf{(d)}~The same day at $60$, heavy (dark) to gentle (pale), activities under
$5\%$ pooled at the right. Mechanism, not result: the result is printed below.}
\label{fig:substitution}
\end{figure}

\noindent \emph{What the trade can and cannot test, and against which body.} This world cannot test slot~(b) as a property of the filtration: deleting the trace leaves two summaries of the felt signal standing, its five-year rate and its five-year running maximum, which is persistence with a horizon and not its absence. The clean removal was built. When run it moved the endpoint against the theory and failed all three pre-registered gates, through a confound of ours rather than of the world (App.~\ref{app:proofs}). The trade therefore measures what \emph{retaining an estimate across profiles} is worth to a body, and the fleet of \S\ref{sec:fourthworld} measures condition~(b) itself. Where the output floor binds (b) is not cleanly removable, and where it is, the floor never binds during service. We report the observed mix, the only anchored point, and the distribution rather than the mean, because three in ten of these bodies reach retirement whatever they do. The reference body is blind to pain but not to its own production: it reallocates on its falling output and still plans a kneeling share of $0.74$, an insensate patient noticing that a task has stopped delivering. A body that reallocates on nothing at all is the weaker baseline: against it the apparatus buys $+4.64$ years rather than $+4.38$, so the one reported here is the conservative choice (\texttt{results/stur\_rand.json}). Finally, the response is a program and not a rule we wrote, since the choice of rule would contain the answer: the worker solves Eq.~\ref{eq:programm} each year, a small linear program in eight day types, and the arms differ only in what he perceives and retains, the forgetting rate $v$ alone separating the last two rows of Table~\ref{tab:leiter}.

\begin{figure}[t]\centering
\IfFileExists{fig_kette_schema.pdf}{\includegraphics[width=\textwidth]{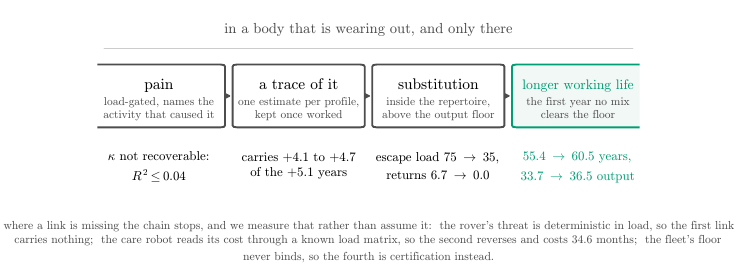}}{\fehlendeabbildung{fig\_kette\_schema.pdf}}
\caption{\textbf{The chain, with the quantity that carries each link.} Means over the $2{,}000$
paired bodies of the anchored cell, the same bodies in every arm. \emph{Escape load} is the
hazard-weighted mean of the non-kneeling part of the day, and \emph{returns} counts the years a body
takes the heavy work back after leaving it. The two head numbers sit at the last link, and everything
left of them is mechanism.}
\label{fig:kette}
\end{figure}

\paragraph{The severity and specialization grid.} With eight day types the yearly program is exact. We run $2{,}000$ paired bodies per cell over three wear severities crossed with five degrees of specialization. The share of bodies that gain years rises from $25.4\%$ at $\Delta\theta{=}8^\circ$ through $69.3\%$ at $18^\circ$ to $87.3\%$ at $27^\circ$; in all fifteen cells and all $30{,}000$ bodies not one body is harmed. The regime boundary therefore lies between $8^\circ$ and $18^\circ$, which was the pre-registered sign.

\paragraph{Which slot each world can test, and why the answer is not three times three.} A reader may reasonably ask why Thm.~\ref{thm:conjunction} is tested in one world when we run several. The answer is a property of the worlds, not of our effort, and it is short enough to tabulate. \emph{Removing a condition requires that the world not supply it a second time}, and most worlds do.

\begin{center}\footnotesize\setlength{\tabcolsep}{4pt}
\begin{tabular}{@{}llll@{}}
\toprule
 & (a) sensing & (b) retention & (c) price \\
\midrule
trade, wear & $+1.5$ to $+6.9$\,y & $+0.9$ to $+6.0$\,y & \emph{not removable} \\
 & ($15$ cells) & ($15$ cells) & capability is the price \\
\addlinespace[2pt]
fleet, latent defect & collapses & collapses & removable, and removed \\
 & ($\delta{=}-0.85$, $p{=}1.4{\times}10^{-5}$) & ($\delta{=}-0.83$, $p{=}2.1{\times}10^{-5}$) & ($w{=}0$, and CVaR substitutes) \\
\addlinespace[2pt]
care robot & $+271.8$ months & \emph{reverses}: $-34.2$ months & world-supplied \\
 & & cost exists without the trace & decertification \\
\addlinespace[2pt]
knee, acute & measurable & \emph{confounded}: the clean & world-supplied \\
 & & ablation removes two inputs & loss of the trade \\
\bottomrule
\end{tabular}
\end{center}

\noindent Only the fleet leaves all three removable at once, which is why it is the test. In the trade (c) is structural: wear lowers what every profile yields, so deleting the economic consequence leaves the reason to protect intact, and the ablation that works in the fleet has no counterpart. In the care robot (b) does not fail to help, it \emph{reverses}, for the reason Prop.~\ref{app:prior} gives. In the acute knee the clean (b) ablation removes two observation dimensions along with the memory, so what it measures is not persistence alone. \emph{Three of these four entries are properties of the world that we could state before running it}, which is what a regime map is for. The fourth is a limitation of our design and is marked as ours.

\paragraph{The band for $\sigma_\kappa$, the one spread we set.} The between-body spread of the wear resistance is the parameter our own anchoring report marks as asserted rather than derived: the heritability of the progression rate is known ($h^2 = 0.80$) but the total variance of that rate in an exposed cohort is not, and without it $\sigma_\kappa$ is chosen. Our rule is that an asserted value may not carry a result, so we drive it and report the band. Same anchored cell, $2{,}000$ paired bodies per point, and the \emph{anchored} spread of the perception gain held fixed at $0.37$ throughout, since letting it inherit the driven value would move two parameters at once:

\begin{center}\footnotesize\setlength{\tabcolsep}{5pt}
\begin{tabular}{@{}lrrrrr@{}}
\toprule
$\sigma_\kappa$ & $0.20$ & $0.30$ & $\mathbf{0.40}$ & $0.60$ & $0.80$ \\
\midrule
career years bought & $+5.63$ & $+4.92$ & $\mathbf{+4.38}$ & $+3.61$ & $+3.06$ \\
career output bought & $+3.33$ & $+2.82$ & $\mathbf{+2.43}$ & $+1.87$ & $+1.50$ \\
share gaining & $0.85$ & $0.75$ & $\mathbf{0.69}$ & $0.64$ & $0.59$ \\
share losing & $0.00$ & $0.00$ & $\mathbf{0.00}$ & $0.00$ & $0.00$ \\
\bottomrule
\end{tabular}
\end{center}

\noindent The headline moves by a factor of $1.7$ across the range and \emph{the sign does not move at all}. At no point on the axis does a single body lose working life. The canon's point is not a near match to the number we report but the number itself, which is how the band is tied to the rest of the paper. What a wider spread does is make the population more unequal, so the mean gain falls while the share that gains falls with it: the apparatus is worth less on average because more bodies are either too robust to need it or too frail for it to save, which is Rem.~\ref{thm:insurance} read along a second axis (\texttt{experiments/sigma\_kappa\_band.py}, \texttt{results/sigma\_kappa\_band\_b042.json}).

\paragraph{What the same channel is worth to a species (E53).} The arms that carry the headline adapt within one life. We also trained arms over $10^6$ environment steps at the \emph{same} cell, with the same anchored task mix, floor and wear, differing only in whether the channel is present, to separate two quantities the literature routinely merges. Over $24$ paired seeds the trained contrast is $+0.65$ years and $-0.54$ output ($20$ of $24$ seeds positive on years, Wilcoxon signed rank $p = 4.4\times10^{-5}$, \texttt{results/knee\_mischung\_hpc}), against $+4.4$ years and $+2.4$ output for the same channel on a body (Table~\ref{tab:leiter}). \emph{Both pre-registered gates on this run failed}, which we report as declared: we had asked for at least $+2$ years and for output not to fall. The reading we pre-registered for that failure does not follow, and we say so rather than quietly drop it. It read the shortfall as evidence that the headline is a property of our decision rule. But the trained contrast is a value-of-information over a species prior, which by Def.~\ref{def:epoch} can neither confirm nor refute a per-body contrast. What the run does establish is the size of the gap: a population already optimized for this trade gains about an eighth of what an individual entering it gains, and pays output for it. That is the two-timescale claim of \S\ref{sec:intro} with a number on it. The forced share reached the world in every cell ($f_{\rm knien} = 0.42$ in all $48$ summaries, the insensate arm losing the trade in $47\%$ of lives against $8\%$ at the permissive end), so the comparison is at the anchored cell and not the old one.

\paragraph{Whether the species prior is flattered by its own world (E54).} The blind trained arm has seen $10^6$ environment steps in the world it is tested in, so we repeated E53 with training in the mild cell and evaluation in the anchored one, pre-registering that the trained contrast would at least double if the matched world were what flattered it. It did not, moving from $+0.65$ to $+0.81$ years ($24$ of $24$ seeds, $p = 6.0\times10^{-8}$), the mismatch costing the blind arm more ($-0.58$ against $-0.42$), so the species-to-body gap narrows from about eightfold to sixfold without closing (\texttt{results/knee\_transfer\_e54.json}).

\begin{table}[h]\centering\footnotesize
\setlength{\tabcolsep}{4pt}
\begin{tabular}{@{}l rr rrrr@{}}
\toprule
& \multicolumn{2}{c}{\emph{the result}} & \multicolumn{4}{c}{\emph{the mechanism}} \\
\cmidrule(lr){2-3}\cmidrule(lr){4-7}
what the body carries & works to & output & on the knees & on gentle work & returns & $D$ at $50$ \\
\midrule
reads nothing                  & $55.2$ & $33.7$ & $0.70$ & $0.00$ & $7.3^{\dagger}$ & $1.00$ \\
present signal, no retention   & $55.7$--$56.2$ & $33.5$--$34.3$ & $0.70$ & $0.01$ & $6.6$ & $0.94$ \\
reads the signal and remembers & $\mathbf{59.6}$ & $\mathbf{36.1}$ & $\mathbf{0.44}$ & $\mathbf{0.36}$ & $\mathbf{0.0}$ & $\mathbf{0.84}$ \\
\bottomrule
\end{tabular}
\caption{\textbf{The middle link as a quantity, on the same $2{,}000$ paired bodies as
Table~\ref{tab:leiter}.} Working life only. \emph{On the knees} and \emph{on gentle work} are
shares of the day over the last five working years, gentle meaning the two grips below load $20$.
\emph{Returns} counts the years the heavy work is taken back after having been left. \emph{$D$ at
$50$} is cartilage damage relative to the insensate body, over the $1{,}369$ bodies still in the
trade at $50$ in every arm. Ranges are a tie-break span, not a confidence interval.
$^{\dagger}$Not a decision: that share moves because the joint stops delivering.
Source \texttt{results/substitution\_spur\_b042.json}.}
\label{tab:substitution}
\end{table}

\noindent Three things are visible here that a career length cannot show. The body that feels but does not retain ends on \emph{exactly} as much kneeling work as the body that feels nothing, $0.7047$ against $0.7046$, so feeling alone buys no net substitution late. It reaches the gentle work in $1\%$ of its last days against the retaining body's $36\%$, which is Lemma~\ref{app:escape}, and it takes the heavy work back nearly seven times, against never. The tissue records the difference: aiming the substitution spares $16\%$ of the damage at $50$ where feeling alone spares $6\%$. The tie-break span of the memoryless arm covers both the cluster file Table~\ref{tab:leiter} draws on and the instrumented re-run (App.~\ref{app:defekte}), and the other two arms move by less than $0.02$ on every column. The recording channel is null-gated in \texttt{tests/test\_substitution\_spur\_nulltor.py} and the driver is \texttt{experiments/substitution\_spur.py}.

\paragraph{A degenerate vertex, and a number that is not determined by it.} The arm that feels but retains nothing re-solves its program every year on a $\hat{c}$ that has just been cleared, so its felt constraint is slack or tied in a large share of years and the optimum lies on a face rather than a vertex. The consequence is measurable: perturbing the nominal performances by $10^{-6}$ relative moves that arm over $55.67$ to $56.14$ across six random tie-breaks, and the instrumented run for Table~\ref{tab:substitution} sits at $56.21$, so the span \S\ref{sec:individual} quotes is $55.7$ to $56.2$. The insensate arm does not move at all ($\pm 0.000$) and the retaining arm moves by $0.010$. The total the apparatus buys is therefore robust and the \emph{split} between signal and memory is not: sensing alone is worth $+0.5$ to $+1.0$ years depending on how ties break. We found this by failing to reproduce a stored cell and chasing the difference. It is the one defect of this series that survived into a reported number. Two lessons generalize. A probe built as a linear program will be degenerate wherever its constraint is slack, which is exactly where the ablated arm lives, so the ablated arm is the one to check. And a contrast whose two arms differ in stability should be reported with the unstable arm's range, not with one of its draws.

\begin{proposition}[Endowment and the value of retention: the memory's phylogeny]
\label{app:prior}
Fix the program of Lemma~\ref{app:escape} and let $\hat{c}^{0}$ be the cost the agent assigns to an option it has never worked, against the true $c$. The escape destination is $\arg\max\{\hat{p}_i : \hat{c}^{0}_i \le \tau\}$, so it depends on $\hat{c}^{0}$ and not on $c$. The marginal value of retention is therefore decreasing in the informativeness of $\hat{c}^{0}$ and vanishes at $\hat{c}^{0} = c$, where the escape is already aimed and a trace can only ratchet the level it reaches.
\end{proposition}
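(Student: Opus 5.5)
The plan is to rerun the vertex argument of Lemma~\ref{app:escape} with a general endowment $\hat c^{0}$ in place of $\hat c^{0}=0$, and then to compare the escape the endowment alone produces with the escape available under retention. The comparison is made in true cost $c$ and in the program's value. I would treat the three claims in order: the location of the escape, the monotonicity of the value of retention, and its vanishing at an exact endowment.

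\emph{Step 1 (escape destination).} Suppose the constraint $\hat c^\top a\le\tau$ binds at the incumbent, and let $W$ be the options worked within the retention window. Options outside $W$ carry $\hat c_i=\hat c^{0}_i$. Mass moved off the abandoned option $j$ goes, by the same linear-programming argument as in Lemma~\ref{app:escape}, to the option with the largest $\hat p_i$ among those whose estimated cost leaves the constraint satisfiable. In the single-option escape that set is $\{i:\hat c^{0}_i\le\tau\}$. Hence the destination is $\arg\max\{\hat p_i:\hat c^{0}_i\le\tau\}$, under the same availability caps and the same tie-breaking convention. The true $c$ does not appear in the program, so the destination is a function of $(\hat p,\hat c^{0},\tau)$ alone. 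When $\hat c^{0}\equiv 0$ this recovers Lemma~\ref{app:escape}.

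\emph{Step 2 (value of retention).} Define the value of retention as the difference in true realized cost, or equivalently in the downstream $J$, between the retaining agent and the memoryless agent. Both agents start from the same endowment. Retention acts only by replacing $\hat c^{0}_i$ by $c_i$ on the options in $W$, one option at a time, as in the remark after Lemma~\ref{lem:escape}. The gain from each replacement is bounded by the misranking that $\hat c^{0}$ induces. More precisely, it is bounded by $\sum_i |\hat c^{0}_i-c_i|$ restricted to the options whose feasibility status under $\tau$, or whose $\hat p$-order, differs between $\hat c^{0}$ and $c$. I would define informativeness through the partial order under which $\hat c^{0}$ is closer to $c$ coordinatewise. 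I would then show that this set of misclassified options shrinks along that order, so the bound is nonincreasing. This monotonicity step is the main obstacle. The program is a linear program with availability caps and degenerate faces, as App.~\ref{app:wear} stresses, so the value of retention need not be pointwise monotone. I would try first to state monotonicity for the feasible-set indicator $\mathbf 1\{\hat c^{0}_i\le\tau\}$, which is where the destination actually depends on $\hat c^{0}$. If the full real-valued ordering fails, I would settle for that weaker claim.

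\emph{Step 3 (vanishing).} At $\hat c^{0}=c$, the estimate $\hat c_i$ equals $c_i$ on all options, whether or not they have been worked. The program solved by the retaining agent and the one solved by the memoryless agent then coincide, so the escape destination is already the correctly aimed one. A trace adds no information about which option to choose. What remains is the trace's effect through the felt level: in the full memory of App.~\ref{app:memlaw}, the peak write raises $\hat c$ above the current reading. This only tightens the constraint, which is the ratchet, and it cannot improve the aim. The care-robot measurements, with a load-matrix estimate playing the role of $\hat c^{0}\approx c$, serve as the empirical check on this step.
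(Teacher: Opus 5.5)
Your Steps~1 and~3 follow the paper's own reasoning. The destination formula is Lemma~\ref{app:escape} rerun with $\hat c^{0}$ in place of $0$. Your restriction to a single-option escape is needed there: for $\hat c^{0}\neq0$ the untried options are no longer slack, which the Lemma's proof relies on, so the optimum can sit on a tight edge. The vanishing is the observation that a trace only converts untried into tried, so at $\hat c^{0}=c$ the retaining and forgetting programs coincide up to the ratchet.

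For the decreasing clause the paper offers only its `therefore' and the sweep over $\hat c^{0}=\alpha C_0$. Your Step~2, which tries to prove it, cannot be carried out as planned, because the bound is false. In the single-option escape, let the best-paid incumbent $k$ have $\hat c^{0}_k=\tau-\delta$ and $c_k=\tau+\delta$, and let a gentle $m$ have $\hat c^{0}_m=c_m=0$. The retaining agent works $k$ once and then stays on $m$. The agent with $v=1$ forgets $c_k$ after its first period on $m$ and cycles $k,m,k,\dots$. Retention therefore saves $(\tau+\delta)/2$ of true felt cost per period however small $\delta$ is, against your bound of $2\delta$. A correction is worth the true cost of where the misaimed escape lands, not the size of the endowment's error.

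Your symmetric misclassification set is also the wrong set. A trace is written only on worked options, and an option whose endowment places it above $\tau$ is never chosen by the program. So only $U(\hat c^{0}):=\{i:\hat c^{0}_i\le\tau<c_i\}$ can matter. The paper's sweep shows this asymmetry: retention carries $0.76$ of the gain at $\alpha=0$ and none once $\alpha$ passes about $1.5$, where the pessimistic endowment forbids the heavy profiles outright. The sign of the error decides, which an order by coordinatewise closeness ignores.

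The fallback does not close the gap either. Monotonicity of $\mathbf{1}\{\hat c^{0}_i\le\tau\}$ is a statement about $\hat c^{0}$, not about the value of retention. It holds only if `closer' means lying between the old estimate and $c_i$, since an estimate can approach $c_i$ while crossing $\tau$. The value of retention is a difference of two trajectories and does not inherit that monotonicity, even for the one-sided set.

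Take $\hat p_1>\hat p_2>\hat p_3>\hat p_4$ and $c_1,c_2,c_3>\tau\ge c_4$ with $c_3>c_2$. Let endowment $A\equiv0$, and let $B$ equal $A$ except $\hat c^{0}_2=c_2$, both starting on option~$1$. Then $U(B)\subset U(A)$. With retention both agents end on option~$4$. Without it, $A$ cycles $1,2$ and $B$ cycles $1,3$. So in the long run the trace is worth $(c_1+c_2)/2-c_4$ of true felt cost per period under $A$ and $(c_1+c_3)/2-c_4$ under $B$. The more informative endowment makes the trace worth more, because it redirects the forgetting agent's cycle onto a costlier option.

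What the static program does support is the vanishing in the stronger form $U(\hat c^{0})=\emptyset$, which covers the exact endowment and every pessimistic one. The decreasing clause needs a restricted notion of informativeness. Otherwise it should be presented as the paper presents it, as a measurement along $\alpha C_0$.
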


\noindent Measured over ten endowments $\hat{c}^{0} = \alpha\,C_0$ from the canon's zero through an eightfold overestimate, $300$ paired bodies per cell: the apparatus is flat in the endowment, and retention's share of what it buys falls from $0.76$ at $\alpha = 0$ to $0.60$ at $\alpha = 1$ and to zero between $1$ and $1.5$, which is where a pessimistic estimate begins to forbid the heavy profiles outright. No endowment in the anchored range forecloses the escape, because the trade always keeps one profile the pessimist still believes he can perform (\texttt{results/chat0\_band\_grid\_b042.json}).

\noindent \emph{This is the memory's phylogeny and we name it as such.} $\hat{c}^{0}$ is what an agent brings into life about work it has never done, so the slow loop installs it and the life does not. Retention is the ontogeny that converts untried into tried. The proposition is then a trade between the two timescales: the more the slow loop installs, the less one life's trace is worth, and at an exact endowment it is worth nothing.

\paragraph{The optimistic prior, and what happens when it is removed.} The trade's probe gives an untried profile the trade's nominal \emph{performance} but a felt cost of zero, which is the premise of Lemma~\ref{app:escape}. We tested that premise rather than assuming it. Re-running the anchored cell with $\hat{c}$ initialized to, and decaying toward, the felt load of each pure profile on a fresh knee, that is trade knowledge without self-knowledge, the total gain stays within half a year of the reported one, while the arm without retention improves markedly and its kneeling share falls from $0.68$ to $0.59$. Part of what the memory buys therefore moves to the signal once the escape has somewhere to aim, which is Lemma~\ref{app:escape} read as a dose rather than as a switch. The insensate arm and the full stack reproduce the stored cell exactly. The arm without retention does not, and we report the contrast within the re-run rather than across runs.

\paragraph{The retention axis in full.} Driving how completely an unworked profile's estimate decays to the trade's nominal value, the career gain at the permissive cell runs $+1.2$, $+2.0$, $+4.1$, $+5.3$, $+5.7$, $+6.0$ years as forgetting falls from complete to none, with output following from $+0.8$ to $+4.3$. There is no jump at the endpoint, so the discontinuity reported in \S\ref{sec:individual} lies strictly below it, at the body that retains nothing at all.

\paragraph{The best blind schedule at the anchored cell (E55).} Prop.~\ref{app:zerlegung} needs $\sigma^\ast$, the best schedule a population can install without a channel. Two one-parameter families were fitted on $500$ bodies outside the evaluation set and re-run on the $2{,}000$ paired bodies of Table~\ref{tab:leiter}: a fixed endowment $\hat c^{0} = \alpha C_0$ and a fixed age after which kneeling is no longer planned. The best of them on years is the endowment at $\alpha = 2$, which works to $61.6$ against the retaining body's $59.6$, so $\chi = 1.40$ on years and $1.29$ on output, where the best schedule ($\alpha = 1.25$) delivers $37.0$ against $36.1$ at the same age. A blind schedule that knows the trade's nominal costs therefore captures the whole per-body gain and more, at both endpoints, and the value of the channel to a body holds against $\rho_\emptyset$, which by Def.~\ref{def:epoch} carries no such schedule, not against a schedule fitted across bodies (\texttt{results/lp\_zeitplan\_chi/welt\_b0.42.json}, declared before the run in \texttt{experiments/decl/lp\_zeitplan\_chi.py}, where the pre-registered sign fell).

\paragraph{Per-activity output is anchored on the work result, not on wages.} Pricing an activity by the hourly wage of the occupation that performs it (BLS OEWS: floor layers \$29.69/h against construction laborers \$25.02/h, which would score cutting and measuring at $84$ of a layer's $100$) states something about a labour market and not about a trade's product: a journeyman who spends a day measuring and cutting lays no floor. We therefore re-anchored on the work result, asking of each activity whether a day of it yields something a client accepts. \emph{A literature search found no external source that assigns a standalone output to a sub-activity}, and the reason is instructive. The German ARH tables (binding for performance pay under \S3 of the construction wage agreement, derived from time studies on site) give hours per m$^2$ for the \emph{whole} task, preparation included. \citet{ditchen2015kneeling} reports exposure shares, not production shares, and estimating data assign daily output to the \emph{crew}, never to the single activity, so a helper has no output of his own. The literature treats preparation as complementary rather than substitutable, which is why a wage was used in the first place. The values used here are therefore declared as an author anchor: laying $100$, carpet $90$, screed, screed levelling and flow screed $75$ each, skirting and grouting $45$, light repair $35$, cutting and measuring $10$. Coordination was removed from the repertoire: it is the foreman's work, not the journeyman's, and pricing it at a supervisor's wage had made it the highest-scoring and least damaging activity in the trade.

\paragraph{The sparing alternative is derived, not chosen.} The response family switches between kneeling installation and one alternative. That alternative is not set by hand but read off the world: the least knee-loading activity that carries the daily output floor on its own. With the anchored floor at $50$, half a full kneeling day, this selects screed work ($75$ against a floor of $50$, knee load $55$ against $90$). Two earlier rules failed and are recorded because both produced credible results: hand-set weights copied from the bricklayer silently lost two of their four activities to a name mismatch, and a rule minimising knee load subject to just clearing the floor produced a portfolio with no reserve at all, which capacity decay broke within a few years.

\begin{table}[!htb]\centering\footnotesize
\begin{tabular}{@{}lrrrr@{}}
\toprule
cell ($\Delta\theta$, work available) & $V_{\rm ind}$ years & $95\%$ CI & $V_{\rm ind}$ output & $95\%$ CI \\
\midrule
$8^\circ$, $0.58$  & $+1.47$ & [$+1.34$, $+1.60$] & $\mathbf{-0.69}$ & [$-0.79$, $-0.60$] \\
$8^\circ$, $1.00$  & $+1.70$ & [$+1.54$, $+1.85$] & $+0.30$ & [$+0.18$, $+0.41$] \\
$18^\circ$, $0.58$ & $+4.38$ & [$+4.23$, $+4.53$] & $+2.43$ & [$+2.33$, $+2.53$] \\
$18^\circ$, $0.80$ & $+5.32$ & [$+5.13$, $+5.50$] & $+3.51$ & [$+3.38$, $+3.63$] \\
$18^\circ$, $1.00$ & $+6.03$ & [$+5.81$, $+6.25$] & $+4.26$ & [$+4.11$, $+4.41$] \\
$27^\circ$, $0.58$ & $+4.81$ & [$+4.69$, $+4.92$] & $+3.13$ & [$+3.05$, $+3.21$] \\
$27^\circ$, $1.00$ & $\mathbf{+6.92}$ & [$+6.76$, $+7.09$] & $\mathbf{+4.97}$ & [$+4.85$, $+5.09$] \\
\bottomrule
\end{tabular}
\caption{\textbf{$V_{\rm ind}$ on both headline numbers, against a body that senses nothing and
learns nothing} ($2{,}000$ paired bodies per cell, seven of fifteen cells, the rest interpolate
monotonically). The second axis is how much of the trade's work one worker may specialize into, and
$0.58$ is the observed population mix. All cells are the pinned world
\texttt{results/bodenleger\_lp\_b042}. Every interval excludes zero.}
\label{tab:familie}
\end{table}

\noindent The gain grows with $\Delta\theta$, the flexion the wear takes away, and the output gain is negative only at $8^\circ$ under the observed mix, where wear barely binds and caution is merely expensive.

\begin{table}[!ht]\centering\footnotesize
\setlength{\tabcolsep}{3pt}
\begin{tabular}{@{}p{2.45cm}p{2.2cm}p{3.3cm}p{3.35cm}r@{}}
\toprule
world (exit rule) & same body without / with & body endpoint & career (y, $95\%$ CI), and $J$ & gain/lose \\
\midrule
knee, wear with tears, $\Delta\theta{=}18^\circ$ (total-output floor, gate cell) & ignorant / full & early exit $0.69\to0.04$; tears per exposed year unchanged & $+12.1$; $J$ $+11.6\%$ (all) & $0.66$/$0.00$ \\
knee, wear with tears, $\Delta\theta{=}8^\circ$ & same & cap rarely binds & $+4.3$; $J$ $-2.8\%$ & $0.26$/$0.00$ \\
fleet (absorbing death) & uniform static / one-shot nociceptive memory & written off in $57.1\%$ of lives against $0\%$; service $74.9\to100.0^{\ddagger}$ months, output $67.6\to90.2$ & $J$ $+0.227$ $[0.220,0.234]$ ($+36\%$) & $0.57$/$0.00$ \\
care robot, anchored floor & full contact / reads the signal and spares & service $51.5\to323.3$ months & output $36.9\to150.8$ ($+309\%$) & $1.00$/$0.00$ \\
\bottomrule
\end{tabular}
\caption{$V_{\rm ind}$ across the worlds: the same body with and without a fixed protective
response, no schedule learned across lives. Paired bodies: knee $20{,}000$ per cell,
fleet $5{,}000$, care $500$, and the rover's probes are in App.~\ref{app:robots} ($500$ lives). $J$ is
the common functional of Eq.~2, all five random streams are seeded per life, and the care row is
measured at the anchored floor with an uncensored horizon. $^{\ddagger}$Censored: see below.}
\label{tab:individuum}
\end{table}

\noindent The fleet's protected arm reaches the $100$-month horizon in every life, so its service time is censored and $+25.1$ months is a lower bound. The death rate is the uncensored statement of the same thing. Widening the window to $720$ months, which we did after the fact and report as such, does not uncensor it: the insensate machine is then written off in \emph{every} life after $107.8$ months and the one-shot memory in none, for an output of $97.3$ against $649.7$. A machine that avoids the one class that threatens it takes no further hazard, so its service life is unbounded and no horizon can measure it. The wear rows compare against the ignorant body only, which is the only $\rho_\emptyset$ an individual has (Table~\ref{tab:familie}). Body endpoints move in the protective direction everywhere. Whether that shows as career or as $J$ is decided by the exit rule (rows 3--4), by whether wear binds (rows 1--2), and by whether dodging is cheap (fleet against care, condition ii).

\textbf{World.} Eight day-types of the floor-laying trade (kneeling installation, carpet with knee-kicker, screed, screed levelling, flow screed, cutting and measuring, skirting and grouting, repair): energy from the compendium of physical activities, wages from BLS OEWS 2025, knee-load rank from the measured kneeling budget of each task \citep[$48$--$74\%$ of the shift; skirting $86$--$94\%$]{ditchen2015kneeling}, and a posture demand per task: unsupported kneeling uses the healthy maximum flexion \citep[$156^\circ$]{zelle2007kneeling}, squatting $152^\circ$, supported kneeling $130^\circ$. Available flexion falls with cartilage damage $D$, $\theta(D)=156^\circ-\Delta\theta\,\sigma((D-0.24)/0.10)$, $\Delta\theta$ swept over the two literature anchors ($8^\circ$, patients vs.\ KL0/1 \citep{zhou2023rom}, and $27^\circ$, KL4 vs.\ healthy \citep{opara2026rom}), and the midpoint. The cap $\psi_i=\exp(-(\max(0,\theta_{{\rm dem},i}-\theta)/8^\circ)^2)$ limits the achievable effort (output, floors, feasibility), while load, energy, damage and pain follow the attempted effort. A version that capped the attempt itself made the pain-free worker stop loading his knee as soon as he could no longer kneel, the opposite of the Charcot picture. With the cap on achievement a flat-out probe loses the trade at $45$ and a sparing one reaches $65$, and $80$--$97\%$ of end-stage patients cannot kneel \citep{hassaballa2003kneeling}, which the cap reproduces. Tears take the meniscus at $0.25$ (cadaver contact pressure) or $0.9$ (in-vivo cartilage-loss rates of $2$--$3\times$ after meniscal damage, \citealp{berthiaume2005extrusion,roos1998meniscectomy}), and accelerate cartilage loss through the contact-mechanics multiplier. The literature supports no direct, pain-independent functional cap from meniscal loss (mechanical symptoms equal with and without tear, function unchanged by tear status), so none is modeled. Exit: total realised output over a $3$-year window below $50$ (half of healthy output, the German disability-insurance criterion), or infeasibility of that floor. The felt-load ceiling $\tau = 0.38$ is the felt load of a fresh body at its observed task mix, scaled by the ratio of the contact stress at which cyclic cartilage damage initiates, $4.4$ MPa \citep{vazquez2019cyclic}, to the intact peak, $3.0$ MPa \citep{fukubayashi1980contact}. The wear law multiplies the hazard-weighted load by a contact-mechanics factor that rises as the meniscus degrades, net of an age-declining repair term. The gain $g_I$ is log-normal with $\sigma{=}0.37$, the between-person spread of the pressure-pain threshold \citep{suzuki2023ppt,mailloux2021ppt}, read yearly with noise $\sigma_s{=}0.10$, its within-person variability \citep[e.g.][]{marcuzzi2017reliability,bellosta2023consistency}. The sensitization gain $\alpha_{\rm sens}$, by which the felt signal rises with accumulated wear, is anchored on the same pressure-pain literature at $8.51$ (grounded) with $2.45$ as the operative canon value, and the headline is run across that span. The wear resistance $\kappa$ is log-normal with $\sigma_\kappa{=}0.40$, set rather than anchored: driven across its range, the headline runs $+3.1$ to $+5.6$ years with no losers at any point. A regression of $\kappa$ on the full observation returns $R^2 \le 0.04$. Prop.~\ref{prop:chronic} was checked in a separate probe family, $15$ wear cells of $300$ paired bodies each: (a) and (b) hold in all of them, and (c) cannot be removed there without removing the body (Prop.~\ref{prop:chronic}). Feeling costs career output in the mildest world and buys it in the harshest, the net form of (a), and where the acquisition is not yet worth paying for, the forgetting arm produces \emph{less} output than the arm that never feels. After the loss of the trade the worker draws an outside option of $0.75$ of the last wage, the earnings loss of displaced workers \citep{jacobson1993displacement}. The non-kneeling day-types (cutting, preparation, repair) carry the laborer wage ($0.84$ of the floor layer, BLS), and there is no tournament. The exposure factor on the grounded cartilage rate ($1.9$) was set so that the trained cohort meets $P(D_{50}\ge{\rm KL2})=0.29$, a calibration, not a test.

\textbf{Trained arms in the acute-tear knee.} In the trained arms of this world sensing and the damage cost each separate and removing the memory does not, a null we pre-registered, whose reason is Thm.~\ref{thm:conjunction}(b) rather than Prop.~\ref{prop:regime}. Creeping wear is monotone, so the present reading already encodes the peak and the trained policy realizes persistence through its own action history, as the fleet's CEM replication does. The explicit trace is therefore redundant here rather than inert, since a fixed rule without one oscillates ($6.6$ switches against $0$ over $2{,}000$ bodies, the three probe arms giving $7.3$, $6.6$ and $0.0$ and the retaining arm's raw ascent counter $6.8$) and keeps only $+0.5$ to $+1.0$ of the $+4.4$ years the same rule with a trace buys (Table~\ref{tab:substitution}). Whether a recurrent policy could learn the latch instead of carrying it cannot be answered in this world, since removing the trace from the trained stack costs nothing at either entropy ($p{=}0.26$ and $p{=}0.60$, $18$ seeds), so there is no effect for recurrence to reproduce, and a GRU run separates only where it is the worse learner outright ($-1.9$ years with the trace still in place, $p{=}0.01$).

\textbf{Individual} ($5$ probes of the same person, $6$ cells, $20{,}000$ paired bodies each, pre-registered I1--I5). Re-measured after the four world defects of App.~\ref{app:defekte} were closed, the two ceiling and floor defects, the anchored event level, and the bounded recurrence cascade. The bold row is the cell the disability-pension gate selects (Sec.~\ref{sec:individual}). Probes: ignorant (kneels until the body fails), reflex (avoids kneeling for two years after a tear), memory (avoids permanently after a tear), threshold (avoids while felt pain exceeds $0.5$), full (threshold and memory).

\begin{center}\footnotesize
\begin{tabular}{@{}llrrrrrr@{}}
\toprule
$\Delta\theta$ & tear & $\Delta$career full & $\Delta J$ & gain/lose & memory & reflex & threshold \\
 & & (y, all bodies) & (\%) & (share) & \multicolumn{3}{c}{($\Delta$career, y, bodies injured in either probe)} \\
\midrule
$\mathbf{18}$ & $0.25$ & $+9.1$ & $+4.2$ & $0.60$/$0.00$ & $+12.2$ & $+2.2$ & $+9.3$ \\
$\mathbf{18}$ & $0.90$ & $+12.1$ & $+11.6$ & $0.66$/$0.00$ & $+20.7$ & $+5.8$ & $+15.4$ \\
$27$ & $0.25$ & $+13.6$ & $+13.6$ & $0.72$/$0.00$ & $+17.8$ & $+2.3$ & $+9.7$ \\
$27$ & $0.90$ & $+15.8$ & $+20.6$ & $0.74$/$0.00$ & $+25.3$ & $+4.9$ & $+14.2$ \\
$8$ & $0.25$ & $+0.5$ & $-10.2$ & $0.08$/$0.00$ & $+1.0$ & $+0.4$ & $+0.9$ \\
$8$ & $0.90$ & $+4.3$ & $-2.8$ & $0.26$/$0.00$ & $+10.3$ & $+9.1$ & $+9.7$ \\
\bottomrule
\end{tabular}
\end{center}
\noindent $20{,}000$ paired bodies per cell, \texttt{results/knee\_individuum\_sweep\_v5}, and every number in this table and the knee rows of Table~\ref{tab:individuum} is recomputed from it by \texttt{experiments/paper\_zahlen.py}. Full equipment vs.\ ignorant for injured bodies: $+21.0$/$+25.6$\,y at $\Delta\theta{=}18$/$27$ (tear $0.9$), early exit $0.69$/$0.87\to0.04$/$0.13$. Pre-registered verdicts. I1 (full vs.\ ignorant at $\Delta\theta\ge18$: $>5$\,y, $J>0$, gain $\ge0.60$, lose $\le0.05$) holds in both cells, at $18^\circ$ with $+12.1$\,y, $+11.6\%$ $J$, gain $0.66$, no losers. I2 (memory minus reflex, injured, $>3$\,y: $+15.0$ and $+20.5$) holds at $\ge18^\circ$ and fails at $8^\circ$ ($+1.2$). I4 (fewer re-injuries per injured life) fails on its denominator: the protected worker stays $12$--$16$ years longer, and per exposed year the two are equal ($0.0146$ against $0.0145$), in this world the equipment buys career, not tears. Two verdicts came out weaker than the earlier build reported, and both weaken the $8^\circ$ fall case rather than the result: I3 (full minus threshold $>0$, injured) is $+5.9$ and $+12.1$ at $\ge18^\circ$ but $+0.7$ at $8^\circ$, so it no longer separates, and I5 (a tear costs the ignorant $>3$\,y) now holds at $8^\circ$ too ($+10.3$), because with the event level anchored to the trade's prevalence a tear is expensive even where flexion barely fails. What still separates $8^\circ$ is I1 and I2: there the same threshold buys $+4.3$\,y at a cost of $3\%$ of $J$, gain share $0.26$, and the memory adds $+1.2$\,y instead of $+15.0$.

\textbf{Reading and caveats.} The individual value is a lower bound from set, not optimized, protective responses (threshold $0.5$, one avoidance portfolio), the exposure factor was calibrated on trained policies, and the world is one trade. The same probes in the fleet's event world protect the joint, and the paired-stream numbers are row 3 of Table~\ref{tab:individuum}.

\textbf{Ladder.} The best fixed age rule (what a population, not a body, can teach) is chosen on a grid of thirteen switch ages and evaluated on the paired bodies. Its career-optimal setting is degenerate (switch at entry, in every flexion cell, because with the floor referred to the body's own capacity permanent avoidance never costs the trade), and it wins on career by a little: $+12.5$ years against $+12.1$ at $\Delta\theta{=}18^\circ$, $+18.1$ against $+15.8$ at $27^\circ$, $+4.3$ against $+4.3$ at $8^\circ$. On $J$ it loses in every cell ($+4.8\%$ against $+11.6\%$, $+17.2\%$ against $+20.6\%$, $-10.2\%$ against $-2.8\%$), and the age rule chosen on output rather than career still loses ($+8.2\%$ at $18^\circ$). A per-body oracle over the same thirteen ages reaches $+18.5\%$ on $J$ and exactly $+0.000$ years on career ($5{,}000$ bodies, best population age $32$): a schedule captures all of the survival and a quarter less of the return than one body's own threshold does.

\textbf{Sensitivity.} The individual value over the felt threshold ($0.3$--$0.7$), the substitution wage ($0.60$, $0.84$, $1.00$ of the kneeling wage), and the total-output floor ($40$, $50$, $60$), with $5{,}000$ paired bodies per point, tear $0.9$, entries at threshold $0.5$ with the range over thresholds in brackets, bold in the gate-selected flexion cell. Sixteen of eighteen cells pay. The two that do not are the same cell in both flexion worlds, lowest substitution wage against highest floor, where avoiding the knee cannot clear the floor and protection costs a decade ($-9.9$ and $-5.3$ years, gain share $0.06$--$0.08$): condition (ii) failing on one body, and the over-protection pole of Prop.~\ref{prop:calibration} in the same cell, with the equipment unchanged.

\begin{center}\footnotesize
\begin{tabular}{@{}rrrlll@{}}
\toprule
$\Delta\theta$ & wage & floor & $\Delta$career (y) & $\Delta J$ (\%) & gain share \\
\midrule
$27$ & $0.60$ & $40$ & $+14.3$ [$+11.8$, $+15.7$] & $+9$ [$+1$, $+12$] & $0.73$ [$0.58$, $0.82$] \\
$27$ & $0.60$ & $50$ & $+15.7$ [$+12.2$, $+17.8$] & $+6$ [$-1$, $+9$] & $0.75$ [$0.56$, $0.86$] \\
$27$ & $0.60$ & $60$ & $-5.3$ [$-5.7$, $-3.1$] & $-6$ [$-15$, $+0$] & $0.08$ [$0.07$, $0.09$] \\
$27$ & $0.84$ & $40$ & $+14.3$ [$+11.8$, $+15.7$] & $+24$ [$+21$, $+24$] & $0.73$ [$0.58$, $0.82$] \\
$27$ & $0.84$ & $50$ & $+15.8$ [$+12.4$, $+17.9$] & $+20$ [$+18$, $+20$] & $0.75$ [$0.56$, $0.86$] \\
$27$ & $0.84$ & $60$ & $+17.1$ [$+12.6$, $+20.3$] & $+17$ [$+15$, $+17$] & $0.74$ [$0.52$, $0.90$] \\
$27$ & $1.00$ & $40$ & $+14.3$ [$+11.8$, $+15.7$] & $+34$ [$+28$, $+36$] & $0.73$ [$0.58$, $0.82$] \\
$27$ & $1.00$ & $50$ & $+15.8$ [$+12.4$, $+17.9$] & $+30$ [$+24$, $+32$] & $0.75$ [$0.56$, $0.86$] \\
$27$ & $1.00$ & $60$ & $+17.2$ [$+12.8$, $+20.3$] & $+26$ [$+20$, $+29$] & $0.75$ [$0.52$, $0.90$] \\
$\mathbf{18}$ & $0.60$ & $40$ & $+10.1$ [$+9.4$, $+10.3$] & $-1$ [$-11$, $+6$] & $0.59$ [$0.53$, $0.61$] \\
$\mathbf{18}$ & $0.60$ & $50$ & $+11.8$ [$+10.5$, $+12.3$] & $-2$ [$-11$, $+4$] & $0.65$ [$0.57$, $0.69$] \\
$\mathbf{18}$ & $0.60$ & $60$ & $-9.9$ [$-11.0$, $-6.0$] & $-12$ [$-22$, $-3$] & $0.06$ [$0.06$, $0.09$] \\
$\mathbf{18}$ & $0.84$ & $40$ & $+10.1$ [$+9.4$, $+10.3$] & $+12$ [$+8$, $+14$] & $0.59$ [$0.53$, $0.61$] \\
$\mathbf{18}$ & $0.84$ & $50$ & $+11.8$ [$+10.6$, $+12.3$] & $+11$ [$+7$, $+13$] & $0.65$ [$0.57$, $0.69$] \\
$\mathbf{18}$ & $0.84$ & $60$ & $+13.9$ [$+11.9$, $+14.8$] & $+11$ [$+6$, $+13$] & $0.71$ [$0.59$, $0.77$] \\
$\mathbf{18}$ & $1.00$ & $40$ & $+10.1$ [$+9.4$, $+10.3$] & $+21$ [$+20$, $+21$] & $0.59$ [$0.53$, $0.61$] \\
$\mathbf{18}$ & $1.00$ & $50$ & $+11.8$ [$+10.6$, $+12.3$] & $+20$ [$+19$, $+20$] & $0.65$ [$0.57$, $0.69$] \\
$\mathbf{18}$ & $1.00$ & $60$ & $+13.9$ [$+12.0$, $+14.8$] & $+19$ [$+18$, $+19$] & $0.71$ [$0.59$, $0.77$] \\
\bottomrule
\end{tabular}
\end{center}
The wage sets the sign of the $J$ gain, the floor decides whether there is a gain at all, and the threshold is the mildest of the three: over $0.3$ to $0.7$ the career gain moves by $1$--$3$ years in the paying cells, so the one pre-registered value is not carrying the result. The threshold $0.5$ and the avoidance portfolio were pre-registered (I1--I5) before the probe sweep, and the sensitivity scan came after it.

\end{document}